\title{Linear Last-iterate Convergence in Constrained Saddle-point Optimization}
\author{%
  \textbf{Chen-Yu Wei, Chung-Wei Lee, Mengxiao Zhang, Haipeng Luo} \\
  University of Southern California \\
  \texttt{\{chenyu.wei,leechung,mengxiao.zhang,haipengl\}@usc.edu}
}
\newcommand{\pref}[1]{\prettyref{#1}}
\newcommand{\savehyperref}[2]{\texorpdfstring{\hyperref[#1]{#2}}{#2}}
\definecolor{Green}{rgb}{0.13, 0.65, 0.3}
\DeclareMathOperator*{\argmax}{argmax}
\DeclareMathOperator*{\argmin}{argmin}
\newcommand{\KL}{\text{\rm KL}}
\newcommand{\calT}{\mathcal{T}}
\newcommand{\calV}{\mathcal{V}}
\newcommand{\calA}{\mathcal{A}}
\newcommand{\calZ}{\mathcal{Z}}
\newcommand{\calS}{\mathcal{S}}
\newcommand{\calM}{\mathcal{M}}
\newcommand{\order}{\mathcal{O}}
\newcommand{\calP}{\mathcal{P}}
\newcommand{\calX}{\mathcal{X}}
\newcommand{\calY}{\mathcal{Y}}
\newcommand{\inner}[1]{\langle#1\rangle}
\newcommand{\rsi}{\text{SP-MS}\xspace}
\newcommand{\der}{\mathrm{d}}
\newcommand{\typeone}{\text{\rsi}\xspace}
\newcommand{\typetwo}{\text{\rsi}\xspace}
\newcommand{\rsilong}{\text{Saddle-Point Metric Subregularity}\xspace}
\newcommand{\hatv}{\widehat{\bv}}
\newcommand{\Holder}{{H{\"o}lder}\xspace}
\newcommand{\calN}{\mathcal{N}}
\newcommand{\supp}{\text{supp}}
\newcommand{\dist}{\mathrm{dist}^2}
\newcommand{\distp}[1]{\mathrm{dist}_{#1}^2}
\newcommand{\xp}{\widehat{\bm{x}}}
\newcommand{\yp}{\widehat{\bm{y}}}
\newcommand{\zp}{\widehat{\bm{z}}}
\newcommand{\xpp}{\widehat{x}}
\newcommand{\ypp}{\widehat{y}}
\newcommand{\zpp}{\widehat{z}}
\newcommand{\alg}{\textsc{OGDA}\xspace}
\newcommand{\EG}{\textsc{EG}\xspace}
\newcommand{\OMWU}{\textsc{OMWU}\xspace}
\newcommand{\MWU}{\textsc{MWU}\xspace}
\newcommand{\GDA}{\textsc{GDA}\xspace}
\newcommand{\modify}[1]{{#1}}
\newcommand{\major}[1]{{#1}}
\newcommand{\minor}[1]{#1}
\newcommand{\zero}{\bm{0}}
\newcommand{\va}{\bm{a}}
\newcommand{\x}{\bm{x}}
\newcommand{\y}{\bm{y}}
\newcommand{\z}{\bm{z}}
\newcommand{\g}{\bm{g}}
\newcommand{\n}{\bm{n}}
\newcommand{\bu}{\bm{u}}
\newcommand{\bv}{\bm{v}}
\newcommand{\G}{\bm{G}}
\newcommand{\ai}{\bm{a}_{i}}
\newcommand{\aj}{\bm{a}_{j}}
\newcommand{\ci}{\bm{c}_{i}}
\newtheorem*{theorem*}{Theorem}
\newtheorem{theorem}{Theorem}
\newtheorem{assumption}{Assumption}
\newtheorem{lemma}[theorem]{Lemma}
\newtheorem*{lemma*}{Lemma}
\newtheorem{definition}{Definition}
\theoremstyle{definition}
\begin{document}

\maketitle

\begin{abstract}

Optimistic Gradient Descent Ascent (\alg) and Optimistic Multiplicative Weights Update (\OMWU) for saddle-point optimization have received growing attention due to their favorable last-iterate convergence.
However, their behaviors for simple bilinear games over the probability simplex are still not fully understood --- previous analysis lacks explicit convergence rates, only applies to an exponentially small learning rate, 
or requires additional assumptions such as the uniqueness of the optimal solution.

In this work, we significantly expand the understanding of last-iterate convergence for \alg and \OMWU in the constrained setting.
Specifically, for \OMWU in bilinear games over the simplex, we show that when the equilibrium is unique, \emph{linear} last-iterate convergence is achieved with a learning rate \major{whose value is set to a \emph{universal constant}, } 
improving the result of \citep{daskalakis2019last} under the same assumption. 
We then significantly extend the results to more general objectives and feasible sets for the projected \alg algorithm, by introducing a sufficient condition under which \alg exhibits concrete last-iterate convergence rates with a \major{constant} learning rate \major{whose value only depends on the smoothness of the objective function. }
We show that bilinear games over {\it any polytope} satisfy this condition and \alg converges exponentially fast even {\it without the unique equilibrium assumption}.
Our condition also holds for strongly-convex-strongly-concave functions, recovering the result of \citep{hsieh2019convergence}.
Finally, we provide experimental results to further support our theory. 

%

\end{abstract}
\section{Introduction}

Saddle-point optimization in the form of $\min_{\x} \max_{\y} f(\x,\y)$ dates back to \citep{neumann1928theorie}, where the celebrated minimax theorem was discovered.
Due to advances of Generative Adversarial Networks (GANs)~\citep{goodfellow2014generative} (which itself is a saddle-point problem), the question of how to find a good approximation of the saddle point, especially via an efficient iterative algorithm, has recently gained significant research interest.
Simple algorithms such as Gradient Descent Ascent (\GDA) and Multiplicative Weights Update (\MWU) are known to cycle and fail to converge even in simple bilinear cases (see e.g.,~\citep{bailey2018multiplicative} and \citep{cheung2019vortices}).

Many recent works consider resolving this issue via simple modifications of standard algorithms, usually in the form of some extra gradient descent/ascent steps.
This includes Extra-Gradient methods (\EG)~\citep{liang2019interaction,mokhtari2019unified}, Optimistic Gradient Descent Ascent (\alg)~\citep{daskalakis2018training,gidel2018variational,mertikopoulos2019optimistic}, Optimistic Multiplicative Weights Update (\OMWU)~\citep{daskalakis2019last,Lei2020Last}, and others. In particular, \alg and \OMWU are suitable for the repeated game setting where two players repeatedly propose $\x_t$ and $\y_t$ and receive only $\nabla_{\x} f(\x_t, \y_t)$ and $\nabla_{\y} f(\x_t, \y_t)$ respectively as feedback, with the goal of converging to a saddle point or equivalently a Nash equilibrium using game theory terminology.
\modify{One notable benefit of \alg and \OMWU is that they are also \emph{no-regret} algorithms with important applications in online learning, especially when playing against adversarial opponents~\citep{chiang2012online, rakhlin2013optimization}. 
}

Despite considerable progress, especially those for the unconstrained setting,
the behavior of these algorithms for the constrained setting, where $\x$ and $\y$ are restricted to \minor{closed convex sets} $\calX$ and $\calY$ respectively, is still not fully understood.
This is even true when $f$ is a bilinear function and $\calX$ and $\calY$ are simplex, known as the classic two-player zero-sum games in normal form, or simply {\it matrix games}.
Indeed, existing convergence results on the last iterate of \alg or \OMWU for matrix games are unsatisfactory --- 
they lack explicit convergence rates~\citep{popov1980modification, mertikopoulos2019optimistic}, only apply to exponentially small learning rate thus not reflecting the behavior of the algorithms in practice~\citep{daskalakis2019last},
or require additional conditions such as uniqueness of the equilibrium or a good initialization~\citep{daskalakis2019last}.

Motivated by this fact, in this work, we first improve the last-iterate convergence result of \OMWU for matrix games. Under the same unique equilibrium assumption as made by \citet{daskalakis2019last}, we show {\it linear} convergence with a concrete rate in terms of the Kullback-Leibler divergence between the last iterate and the equilibrium, using a learning rate \major{whose value is set to a \emph{universal constant}. }

We then significantly extend our results and consider \alg for general constrained and smooth convex-concave saddle-point problems, without the uniqueness assumption.
Specifically, we start with proving an average duality gap convergence of \alg at the rate of $\order(1/\sqrt{T})$ after $T$ iterations.
Then, to obtain a more favorable last-iterate convergence in terms of the distance to the set of equilibria, we propose a general sufficient condition on $\calX, \calY$, and $f$, 
called {\it \rsilong} (\rsi), under which we prove concrete last-iterate convergence rates, all with a constant learning rate and without further assumptions.

Our last-iterate convergence results of \alg greatly generalize that of~\citep[Theorem~2]{hsieh2019convergence}, which itself is a consolidated version of results from several earlier works.
The key implication of our new results is that, by showing that matrix games satisfy our \rsi condition, we provide  by far the most general last-iterate guarantee with a linear convergence for this problem using \alg. 
Compared to that of \OMWU, the convergence result of \alg holds more generally even when there are multiple equilibria. 

More generally, the same linear last-iterate convergence holds for any bilinear games over polytopes since they also satisfy the \rsi condition as we show.
To complement this result, we construct an example of a bilinear game with a non-polytope feasible set where \alg provably does not ensure linear convergence, indicating that {\it the shape of the feasible set matters}.

Finally, we also provide experimental results to support our theory.
In particular, we observe that \alg generally converges faster than \OMWU for matrix games, despite the facts that both provably converge exponentially fast and that \OMWU is often considered more favorable compared to \alg when the feasible set is the simplex.


\section{Related Work}

\paragraph{Average-iterate convergence.}
While showing last-iterate convergence has been a challenging task, it is well-known that the average-iterate of many standard algorithms such as \GDA and \MWU enjoys a converging duality gap at the rate of $O(1/\sqrt{T})$~\citep{freund1999adaptive}.
A line of works show that the rate can be improved to $O(1/T)$ using the ``optimistic'' version of these algorithms such as \alg and \OMWU~\citep{rakhlin2013optimization, daskalakis2015near,syrgkanis2015fast}.
For tasks such as training GANs, however, average-iterate convergence is unsatisfactory since averaging large neural networks is usually prohibited. 

\paragraph{Extra-Gradient (\EG) algorithms.}

The saddle-point problem fits into the more general variational inequality framework~\citep{harker1990finite}. 
A classic algorithm for variational inequalities is \EG, first introduced in~\citep{Korpelevich1976TheEM}.
\citet{tseng1995linear} is the first to show last-iterate convergence for \EG in various settings such as bilinear or strongly-convex-strongly-concave problems. 
Recent works significantly expand the understanding of \EG and its variants for unconstrained bilinear problems~\citep{liang2019interaction}, unconstrained strongly-convex-strongly-concave problems~\citep{mokhtari2019unified}, and more~\citep{zhang2019lower, lin2020near, Golowich2020Last}.

The original \EG is not applicable to a repeated game setting where only one gradient evaluation is possible in each iteration.
\modify{
Moreover, unlike \alg and \OMWU, \EG is shown to have linear regret against adversarial opponents, and thus it is not a no-regret learning algorithm~\citep{bowling2005convergence,golowich2020tight}.
}
However, there are ``single-call variants'' of EG that address these issues.
In fact, some of these versions coincide with the \alg algorithm under different names such as modified Arrow–Hurwicz method~\citep{popov1980modification} and ``extrapolation from the past''~\citep{gidel2018variational}.
\modify{
Apart from \alg, other single-call variants of \EG include Reflected Gradient ~\citep{malitsky2015projected,cui2016analysis,malitsky2020forward} and Optimistic Gradient~\citep{daskalakis2018training,mokhtari2019convergence}. 
These variants are all equivalent in the unconstrained setting but differ in the constrained setting.
To the best of our knowledge, none of the existing results for any single-call variant of \EG covers the constrained bilinear case (which is one of our key contributions).
}
\modify{
\paragraph{Error Bounds and Metric Subregularity}
To derive linear convergence for variational inequality problems, \emph{error bound} method is a commonly used technique \citep{pang1997error,luo1993error}. 
For example, it is a standard approach to studying the last-iterate convergence of EG algorithms \citep{tseng1995linear,hsieh2020explore,azizian2020tight}.
An error bound method is associated with an error function that gives every point in the feasible set a measure of sub-optimality that is lower bounded by the distance of the point to the optimal set up to some problem dependent constant.
If such a error function exists, linear convergence can be obtained. 
The choice of the error function depends on the feasible region, the objection function, and the algorithm.
Common error functions include natural residual functions \citep{iusem2017extragradient,malitsky2019golden} and gap functions \citep{larsson1994class,solodov2000some,chen2017accelerated}.
Our method to derive the last-iterate convergence for \alg can also be viewed as an error bound method.

\emph{Metric subregularity} is another important concept to derive linear convergence via some Lipschitz behavior of a set-valued operator~\citep{leventhal2009metric,liang2016convergence,alacaoglu2019convergence,latafat2019new}. 
Metric subregularity is closely related to error bound methods \citep{kruger2015error}. 
In fact, as we prove in \pref{app:ms}, one special case of our condition \rsi (that allows us to show linear convergence) is \emph{equivalent} to metric subregularity of an operator defined in terms of the normal cone of the feasible set and the gradient of the objective.
This is also the reason why we call our condition \rsilong. 
Although metric subregularity has been extensively used in the literature, 
to the best of our knowledge, our work is the first to use this condition to analyze \alg.
}

\paragraph{\alg and \OMWU.}
Recently, last-iterate convergence for \alg has been proven in various settings such as convex-concave problems~\citep{daskalakis2018training}, unconstrained bilinear problems~\citep{daskalakis2018limit,liang2019interaction}, strongly-convex-strongly-concave problems~\citep{mokhtari2019unified}, and others (e.g.~\citep{mertikopoulos2019optimistic}).

However, the behavior of \alg and \OMWU for the constrained bilinear case, or even the special case of classic matrix games, appears to be much more mysterious and less understood.
\modify{\citet{cheung2020chaos} provide an alternative view on the convergence behavior of \OMWU by studying volume contraction in the dual space. }
\citet{daskalakis2019last} show last-iterate convergence of \OMWU for matrix games under a uniqueness assumption and without a concrete rate.
Although it is implicitly suggested in~\citep{daskalakis2019last,daskalakis2019} that a rate of $O(1/T^{1/9})$ is possible, it is still not clear how to choose the learning rate appropriately from their analysis. 
As mentioned, our results for \OMWU significantly improve theirs, with a clean linear convergence rate using a constant learning rate under the same uniqueness assumption,
while our results for \alg further remove the uniqueness assumption. 

\section{Notations and Preliminaries}
\label{sec:notation}

We consider the following constrained saddle-point problem:
$
     \min_{\x\in\calX} \max_{\y\in\calY} f(\x,\y),
$
where $\calX$ and $\calY$ are \minor{closed convex sets}, and $f$ is a continuous differentiable function that is convex in $\x$ for any fixed $\y$ and concave in $\y$ for any fixed $\x$.
By the celebrated minimax theorem~\citep{neumann1928theorie},
we have $\min_{\x\in\calX} \max_{\y\in\calY} f(\x,\y) = \max_{\y\in\calY} \min_{\x\in\calX} f(\x,\y)$.

The set of minimax optimal strategy is denoted by $\calX^* = \argmin_{\x\in\calX}\max_{\y\in\calY} f(\x,\y)$,
and the set of maximin optimal strategy is denoted by $\calY^* = \argmax_{\y\in\calY}\min_{\x\in\calX} f(\x,\y)$.
It is well-known that $\calX^*$ and $\calY^*$ are convex, and any pair $(\x^*, \y^*) \in \calX^* \times \calY^*$ is a Nash equilibrium satisfying
$f(\x^*, \y) \leq f(\x^*, \y^*) \leq f(\x, \y^*)$ for any $(\x,\y) \in  \calX \times \calY$.

For notational convenience, we define $\calZ=\calX\times \calY$ and similarly $\calZ^*=\calX^*\times \calY^*$.
For a point $\z = (\x,\y) \in \calZ$, we further define $f(\z) = f(\x,\y)$ and $F(\z) = \left(\nabla_{\x} f(\x,\y), -\nabla_{\y} f(\x,\y)\right)$. 

Our goal is to find a point $\z \in \calZ$ that is close to the set of Nash equilibria $\calZ^*$, and
we consider three ways of measuring the closeness.
The first one is the {\it duality gap}, defined as
$
    \alpha_f(\z) = \max_{\y'\in \calY} f(\x, \y') - \min_{\x'\in \calX} f(\x', \y), 
$
which is always non-negative since $\max_{\y'\in\calY} f(\x,\y')\geq f(\x,\y) \geq \min_{\x'\in\calX} f(\x',\y)$.

The second one is the distance between $\z$ and $\calZ^*$.
Specifically, for any \minor{closed set} $\calA$, we define the projection operator $\Pi_{\calA}$ as $\Pi_{\calA}(\va)=\argmin_{\va'\in\calA} \|\va-\va'\|$ (throughout this work $\|\cdot\|$ represents $L_2$ norm).
The squared distance between $\z$ and $\calZ^*$ is then defined as
$
\dist(\z, \calZ^*) = \|\z - \Pi_{\calZ^*}(\z)\|^2.	
$

The third one is only for the case when $\calX$ and $\calY$ are probability simplices, and $\z^*=(\x^*, \y^*)$ is the unique equilibrium. In this case, we use the sum of Kullback-Leibler divergence $\KL(\x^*, \x)+\KL(\y^*, \y)$ to measure the closeness between $\z=(\x, \y)$ and $\z^*$, where $\KL(\x, \x') = \sum_{i} x_i \ln \frac{x_i}{x'_i}$. With a slight abuse of notation, we use $\KL(\z,\z')$ to denote $\KL(\x,\x')+\KL(\y,\y')$. 

\paragraph{Other notations.}
We denote the $(d-1)$-dimensional probability simplex as $\Delta_d=\{\bu\in\mathbb{R}_+^d: \sum_{i=1}^d u_i=1\}$. 
For a convex function $\psi$, the corresponding \emph{Bregman divergence} is defined as $D_\psi(\bu, \bv)=\psi(\bu) - \psi(\bv) - \inner{\nabla \psi(\bv), \bu-\bv}$. If $\psi$ is $\gamma$-strongly convex in a domain, then $D_\psi(\bu, \bv)\geq \frac{\gamma}{2}\|\bu-\bv\|^2$ for any $\bu, \bv$ in that domain. For $\bu\in\mathbb{R}^d$, we define $\supp(\bu)=\left\{i:~ u_i > 0\right\}$. 





\paragraph{Optimistic Gradient Descent Ascent (\alg).} 
Starting from an arbitrary point $(\xp_1, \yp_1)=(\x_0, \y_0)$ from $\calZ$,
\alg with step size $\eta > 0$ iteratively computes the following for $t = 1, 2,\ldots $,
\begin{align*}
    \x_{t}&= \Pi_{\calX} \big(\xp_{t} - \eta \nabla_{\x} f(\x_{t-1}, \y_{t-1}) \big),   &&\xp_{t+1} = \Pi_{\calX}  \big( \xp_t - \eta \nabla_{\x} f(\x_t, \y_t) \big),    \\
    \y_{t}&= \Pi_{\calY} \big(\yp_{t} + \eta \nabla_{\y} f(\x_{t-1}, \y_{t-1}) \big),  &&\yp_{t+1} = \Pi_{\calY}  \big( \yp_t + \eta \nabla_{\y} f(\x_t, \y_t) \big).  
\end{align*}
Note that there are several slightly different versions of the algorithm in the literature, which differ in the timing of performing the projection. Our version is the same as those in \citep{chiang2012online, rakhlin2013optimization}. It is also referred to as ``single-call extra-gradient'' in \citep{hsieh2019convergence}, but it does not belong to the class of ``extra-gradient'' methods discussed in \citep{tseng1995linear, liang2019interaction, Golowich2020Last} for example. 

Also note that \alg only requires accessing $f$ via its gradient. In fact, only one gradient at the point $(\x_t, \y_t)$ is needed for iteration $t$.
This aspect makes it especially suitable for a repeated game setting, where in each round, one player proposes  $\x_t$ while another player proposes $\y_t$.
With only the information of the gradient from the environment ($\nabla_{\x} f(\x_t, \y_t)$ for the first player and  $\nabla_{\y} f(\x_t, \y_t)$ for the other),
both players can execute the algorithm.

\paragraph{Optimistic Multiplicative Weights Update (\OMWU).} 
When the feasible sets $\calX$ and $\calY$ are probability simplices $\Delta_M$ and $\Delta_N$ for some integers $M$ and $N$,
\OMWU is another common iterative algorithm to solve the saddle-point problem.
For simplicity, we assume that it starts from the uniform distributions $(\xp_1, \yp_1) = (\x_0, \y_0) = \left(\frac{\mathbf{1}_M}{M}, \frac{\mathbf{1}_N}{N}\right)$, where $\mathbf{1}_d$ is the all-one vector of dimension $d$. 
Then \OMWU with step size $\eta >0$ iteratively computes the following for $t = 1, 2,\ldots $,
\begin{align*}
    x_{t,i}&=\frac{ {\xpp}_{t,i}\exp(-\eta (\nabla_{\x} f(\x_{t-1}, \y_{t-1}))_i)}{\sum_j{\xpp}_{t,j}\exp( - \eta (\nabla_{\x} f(\x_{t-1}, \y_{t-1}))_j)},    &&\xpp_{t+1,i} =  \frac{ {\xpp}_{t,i}\exp(-\eta (\nabla_{\x} f(\x_t, \y_t))_i)}{\sum_j{\xpp}_{t,j}\exp( - \eta (\nabla_{\x} f(\x_t, \y_t))_j)},     \\
y_{t,i}&=\frac{ {\ypp}_{t,i}\exp(\eta (\nabla_{\y} f(\x_{t-1}, \y_{t-1}))_i)}{\sum_j{\ypp}_{t,j}\exp( \eta (\nabla_{\y} f(\x_{t-1}, \y_{t-1}))_j)},     &&\ypp_{t+1,i} =\frac{ {\ypp}_{t,i}\exp(\eta (\nabla_{\y} f(\x_t, \y_t))_i)}{\sum_j{\ypp}_{t,j}\exp( \eta (\nabla_{\y} f(\x_t, \y_t))_j)}.
\end{align*}

\paragraph{\OMWU and \alg as Optimistic Mirror Descent Ascent.}
\OMWU and \alg can be viewed as special cases of \emph{Optimistic Mirror Descent Ascent}. 
Specifically, let {\it regularizer} $\psi(\bu)$ denote the negative entropy $\sum_i u_i \ln u_i$ for the case of \OMWU and (half of) the $L_2$ norm square $\frac{1}{2}\|\bu\|^2$ for the case of \alg (so that $D_{\psi}(\bu,\bv)$ is $\KL(\bu, \bv)$ and $\frac{1}{2}\|\bu-\bv\|^2$ respectively).
Then using the shorthands $\z_t=(\x_t,\y_t)$ and $\zp_t=(\xp_t,\yp_t)$ and recalling the notation defined earlier: $\calZ=\calX\times \calY$ and $F(\z) = \left(\nabla_{\x} f(\x,\y), -\nabla_{\y} f(\x,\y)\right)$, 
one can rewrite \OMWU/\alg compactly as
\begin{align}
    \z_{t} &= \argmin_{\z\in\calZ} \Big\{\eta \inner{\z, F(\z_{t-1})} + D_\psi(\z,\zp_{t})\Big\},  \label{eq: omda update 5}\\
    \zp_{t+1} &= \argmin_{\z\in\calZ} \Big\{\eta \inner{\z,  F(\z_t)} + D_\psi(\z,\zp_t)\Big\}. \label{eq: omda update 6} 
\end{align}

By the standard regret analysis of Optimistic Mirror Descent, we have the following important lemma, which is readily applied to \OMWU and \alg when $\psi$ is instantiated as the corresponding regularizer. The proof is mostly standard
(see e.g., \cite[Lemma 1]{rakhlin2013optimization}). For completeness, we include it in \pref{app:regret bound sufficient decrease}. 

\begin{lemma}
	\label{lem: regret bound omwu}
	Consider update rules \pref{eq: omda update 5}  and \pref{eq: omda update 6} and
	define $\distp{p}(\z,\z') =\|\x-\x'\|^2_p+\|\y-\y'\|^2_p$.
	 Suppose that $\psi$ satisfies $D_\psi(\z,\z')\geq \frac{1}{2}\distp{p}(\z,\z')$ for some $p\geq 1$, and $F$ satisfies $\distp{q}(F(\z),F(\z'))\leq L^2\distp{p}(\z,\z')$ for $q\geq 1$ with $\frac{1}{p}+\frac{1}{q}=1$.  Also, assume that $\eta\leq \frac{1}{8L}$. Then for any $\z\in\calZ$ and any $t\geq 1$, we have
	\begin{align*}
	\eta F(\z_t)^\top (\z_t-\z) \leq D_\psi(\z, \zp_t) - D_\psi(\z, \zp_{t+1}) - D_\psi(\zp_{t+1}, \z_t) - \tfrac{15}{16}D_\psi(\z_t, \zp_t)+ \tfrac{1}{16} D_\psi(\zp_t, \z_{t-1}).
	\end{align*}
\end{lemma}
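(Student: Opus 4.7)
The plan is to derive the inequality from the first-order optimality conditions of the two argmin updates defining $\z_t$ and $\zp_{t+1}$, combined with Bregman's three-point identity. First I would apply the identity to the update for $\zp_{t+1}$ to obtain, for any $\z\in\calZ$,
\begin{align*}
\eta F(\z_t)^\top(\zp_{t+1}-\z)\leq D_\psi(\z,\zp_t)-D_\psi(\z,\zp_{t+1})-D_\psi(\zp_{t+1},\zp_t),
\end{align*}
and analogously to the update for $\z_t$, specialized at the comparator $\z=\zp_{t+1}$:
\begin{align*}
\eta F(\z_{t-1})^\top(\z_t-\zp_{t+1})\leq D_\psi(\zp_{t+1},\zp_t)-D_\psi(\zp_{t+1},\z_t)-D_\psi(\z_t,\zp_t).
\end{align*}

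Next, I would sum the two inequalities; the $\pm D_\psi(\zp_{t+1},\zp_t)$ terms cancel, and the combined left-hand side rewrites as $\eta F(\z_t)^\top(\z_t-\z)-\eta(F(\z_t)-F(\z_{t-1}))^\top(\z_t-\zp_{t+1})$. Moving the cross term to the right,
\begin{align*}
\eta F(\z_t)^\top(\z_t-\z) &\leq D_\psi(\z,\zp_t)-D_\psi(\z,\zp_{t+1})-D_\psi(\zp_{t+1},\z_t)-D_\psi(\z_t,\zp_t)\\
&\quad +\eta(F(\z_t)-F(\z_{t-1}))^\top(\z_t-\zp_{t+1}).
\end{align*}

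The remaining, and main, task is to absorb the cross term. I would apply H\"older's inequality with conjugate exponents $p,q$ together with the assumption $\distp{q}(F(\z),F(\z'))\leq L^2\distp{p}(\z,\z')$ to bound
\begin{align*}
\eta(F(\z_t)-F(\z_{t-1}))^\top(\z_t-\zp_{t+1})\leq \eta L\,\sqrt{\distp{p}(\z_t,\z_{t-1})}\,\sqrt{\distp{p}(\z_t,\zp_{t+1})},
\end{align*}
then use the triangle inequality $\sqrt{\distp{p}(\z_t,\z_{t-1})}\leq \sqrt{\distp{p}(\z_t,\zp_t)}+\sqrt{\distp{p}(\zp_t,\z_{t-1})}$ to split this into two products and apply a weighted AM--GM inequality to each. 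The weights are chosen so that, after invoking $D_\psi\geq\tfrac{1}{2}\distp{p}$ and $\eta L\leq \tfrac{1}{8}$, the resulting coefficients in front of $D_\psi(\z_t,\zp_t)$ and $D_\psi(\zp_t,\z_{t-1})$ are at most $\tfrac{1}{16}$, while the $\distp{p}(\z_t,\zp_{t+1})$ piece is absorbed by the existing $-D_\psi(\zp_{t+1},\z_t)$ term. Rearranging then gives the stated inequality.

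The routine parts are the two three-point identity applications; the delicate step is the AM--GM tuning at the end, and in particular the choice to split $\distp{p}(\z_t,\z_{t-1})$ through $\zp_t$ rather than directly, since this is exactly what produces the term $D_\psi(\zp_t,\z_{t-1})$ that will telescope cleanly against $-D_\psi(\zp_{t+1},\z_t)$ when the inequality is summed over $t$ in subsequent proofs.
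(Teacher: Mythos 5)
There is a genuine gap in how you bound the cross term. After the two three-point inequalities are summed, the term $\eta\,(F(\z_t)-F(\z_{t-1}))^\top(\z_t-\zp_{t+1})$ needs to be controlled without sacrificing the full coefficient of $-D_\psi(\zp_{t+1},\z_t)$. Your plan — H\"older to get $\eta L\sqrt{\distp{p}(\z_t,\z_{t-1})}\sqrt{\distp{p}(\z_t,\zp_{t+1})}$, triangle inequality, then weighted AM--GM — necessarily leaves a positive multiple of $\distp{p}(\z_t,\zp_{t+1})$ that must be absorbed by $-D_\psi(\zp_{t+1},\z_t)$. Concretely, if the AM--GM weight $\mu$ yields $\tfrac{1}{4\mu}D_\psi(\z_t,\zp_t)+\tfrac{1}{4\mu}D_\psi(\zp_t,\z_{t-1})+\tfrac{\mu}{8}D_\psi(\zp_{t+1},\z_t)$, then matching the claimed $+\tfrac{1}{16}D_\psi(\zp_t,\z_{t-1})$ forces $\mu\geq 4$, which eats at least $\tfrac{1}{2}D_\psi(\zp_{t+1},\z_t)$ out of the negative term. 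There is no admissible $\mu$ that simultaneously leaves $D_\psi(\zp_{t+1},\z_t)$ untouched, because that would require $\mu\leq 0$. So your argument proves only a weaker inequality (with, at best, $-\tfrac{1}{2}D_\psi(\zp_{t+1},\z_t)$), not the stated one.

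The missing ingredient is a stability (prox non-expansiveness) lemma exploiting that $\z_t$ and $\zp_{t+1}$ are Bregman argmins \emph{from the same center $\zp_t$}, differing only in the linear term. This gives $\|\z_t-\zp_{t+1}\|_p\leq \eta\|F(\z_{t-1})-F(\z_t)\|_q$ coordinatewise. With that, the H\"older bound becomes a single square, $\eta^2\distp{q}(F(\z_t),F(\z_{t-1}))\leq \eta^2 L^2\distp{p}(\z_t,\z_{t-1})\leq \tfrac{1}{64}\distp{p}(\z_t,\z_{t-1})$, with no $\distp{p}(\z_t,\zp_{t+1})$ factor left over. Splitting $\distp{p}(\z_t,\z_{t-1})\leq 2\distp{p}(\z_t,\zp_t)+2\distp{p}(\zp_t,\z_{t-1})$ and invoking $D_\psi\geq \tfrac{1}{2}\distp{p}$ then gives exactly $\tfrac{1}{16}D_\psi(\z_t,\zp_t)+\tfrac{1}{16}D_\psi(\zp_t,\z_{t-1})$, which preserves $-D_\psi(\zp_{t+1},\z_t)$ in full. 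You should add this stability step before applying H\"older; the rest of your outline then goes through.
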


\section{Convergence Results for \OMWU}
\label{sec: omwu convergence}
In this section, we show that for a two-player zero-sum matrix game with a unique equilibrium, \OMWU with a constant learning rate converges to the equilibrium exponentially fast. The assumption and the algorithm are the same as those considered in \citep{daskalakis2019last}, but our analysis improves theirs in two ways. First, we do not require the learning rate to be exponentially smaller than some problem-dependent quantity. Second, we explicitly provide a linear convergence rate. In \pref{sec:convergence}, we further remove the uniqueness assumption and significantly generalize the results by studying \alg. 

In a matrix game we have $\calX=\Delta_M$, $\calY = \Delta_N$, and $f(\z)=\x^\top \G\y$ for some matrix $\G\in[-1,1]^{M\times N}$.
To show the last-iterate convergence of \OMWU, we first apply \pref{lem: regret bound omwu} with $D_\psi(\bu,\bv)=\KL(\bu,\bv)$, $\z=\z^*$ (the unique equilibrium of the game matrix $\G$) and $(p, q)=(1,\infty)$.
The constant $L$ can be chosen as $1$ since $\distp{\infty}(F(\z),F(\z')) = \max_{i} |(\G (\y-\y'))_i|^2+\ \max_{j} |(\G^\top (\x-\x'))_j|^2\leq \|\y-\y'\|_1^2+\|\x-\x'\|_1^2=\distp{1}(\z,\z')$. Also notice that $F(\z_t)^\top (\z_t-\z^*) = f(\x_t, \y_t) - f(\x^*, \y_t) + f(\x_t, \y^*) - f(\x_t, \y_t) = f(\x_t, \y^*)- f(\x^*, \y_t) \geq 0$ by the optimality of $\z^*$.
Therefore, we have when $\eta\le \frac{1}{8}$,
\begin{align*}
     \KL(\z^*, \zp_{t+1})\leq \KL(\z^*, \zp_t) - \KL(\zp_{t+1}, \z_{t}) - \tfrac{15}{16}\KL(\z_t, \zp_t) + \tfrac{1}{16}\KL(\zp_t, \z_{t-1}). 
\end{align*}
Defining $\Theta_t=\KL(\z^*, \zp_t) + \tfrac{1}{16}\KL(\zp_t, \z_{t-1})$ and $\zeta_t=\KL(\zp_{t+1}, \z_{t}) +\KL(\z_t, \zp_t)$, we rewrite the above as
\begin{align}
    \Theta_{t+1} \leq \Theta_t - \tfrac{15}{16}\zeta_t.    \label{eq: simple recursion}
\end{align}
From \pref{eq: simple recursion} it is clear that the quantity $\Theta_t$ is always non-increasing in $t$ due to the non-negativity of $\zeta_t$. Furthermore, the more the algorithm moves between round $t$ and round $t+1$ (that is, the larger $\zeta_t$ is), the more $\Theta_t$ decreases. 

To establish the rate of convergence, a natural idea is to relate $\zeta_t$ back to $\Theta_t$ or $\Theta_{t+1}$. For example, if we can show $\zeta_t \geq c\Theta_{t+1}$ for some constant $c>0$, then \pref{eq: simple recursion} implies $\Theta_{t+1}\leq \Theta_t - \frac{15c}{16}\Theta_{t+1}$, which further gives $\Theta_{t+1}\leq \left(1+\tfrac{15c}{16}\right)^{-1}\Theta_t$. This immediately implies a linear convergence rate for $\Theta_t$ as well as $\KL(\z^*, \zp_t)$ since $\KL(\z^*, \zp_t)\leq \Theta_t$. 

Moreover, notice that to find such $c$, it suffices to find a $c'>0$ such that $\zeta_t \geq c'\KL(\z^*,\zp_{t+1})$. This is because it will then give $\zeta_t \geq \tfrac{1}{16}\KL(\zp_{t+1}, \z_t) + \tfrac{15}{16}\zeta_t \geq \tfrac{1}{16}\KL(\zp_{t+1}, \z_t) + \tfrac{15c'}{16}\KL(\z^*,\zp_{t+1}) \geq \min\{1, \tfrac{15c'}{16}\}\Theta_{t+1}$, and thus $c\triangleq \min\{1, \tfrac{15c'}{16}\}$ satisfies the condition. 

From the discussion above, we see that to establish the linear convergence of $\KL(\z^*, \zp_{t})$, we only need to show that there exists some $c'>0$ such that $\KL(\zp_{t+1}, \z_{t}) +\KL(\z_t, \zp_t) \geq c'\KL(\z^*,\zp_{t+1})$. The high-level interpretation of this inequality is that when $\zp_{t+1}$ is far from the equilibrium $\z^*$ (i.e., $\KL(\z^*,\zp_{t+1})$ is large), the algorithm should have a large move between round $t$ and $t+1$ making $\KL(\zp_{t+1}, \z_{t}) +\KL(\z_t, \zp_t)$ large. 

In our analysis, we use a two-stage argument to find such a $c'$. In the first stage, we only show that 
$\KL(\zp_{t+1}, \z_{t}) +\KL(\z_t, \zp_t) \geq c''\KL(\z^*,\zp_{t+1})^2$ for some $c''>0$, and use it to argue a slower convergence rate $\KL(\z^*,\zp_{t}) = \order\left(\frac{1}{t}\right)$. Then in the second stage, we show that after $\zp_t$ and $\z_t$ become close enough to $\z^*$, we have $\KL(\zp_{t+1}, \z_{t}) +\KL(\z_t, \zp_t) \geq c'\KL(\z^*,\zp_{t+1})$ for some $c'>0$. 

\modify{
This kind of two-stage argument might be reminiscent of that used by \citet{daskalakis2019last}; however, the techniques we use are very different.
Specifically, \citet{daskalakis2019last} utilize tools of ``spectral analysis'' similar to \citep{liang2019interaction} and show that the OMWU update can be viewed as a ``contraction mapping'' with respect to a matrix whose eigenvalue is smaller than 1.
Our analysis, on the other hand, leverages analysis of online mirror descent, starting from the ``one-step regret bound'' (\pref{lem: regret bound omwu}) and making use of the two negative terms that are typically dropped in the analysis.
Importantly, our analysis does not need an exponentially small learning rate required by \citep{daskalakis2019last}.
Thus, unlike their results, our learning rate is kept as a \major{universal} constant in all stages.
The arguments above are formalized below: 
}
\begin{lemma}
    \label{lem: KL sufficient decrease}
    Consider a matrix game $f(\x,\y)=\x^\top \G \y$ with $\calX = \Delta_M$, $\calY = \Delta_N$, and $\G\in [-1, 1]^{M\times N}$.
    Assume that there exists a unique Nash equilibrium $\z^*$ and $\eta\leq \frac{1}{8}$.
    Then, there exists a constant $C_1>0$ that depends on $\G$ such that for any $t\geq 1$, \OMWU ensures
    \begin{align*}
        \KL(\zp_{t+1}, \z_{t}) +\KL(\z_t, \zp_t) \geq \eta^2 C_1\KL(\z^*,\zp_{t+1})^2.  
    \end{align*}
    Also, there is a constant $\xi>0$ that depends on $\G$ (defined in \pref{def: xi}) such that as long as $\max\{\|\z^*-\zp_t\|_1, \|\z^*-\z_t\|_1\} \leq \frac{\eta\xi}{10}$, then
    \begin{align*}
        \KL(\zp_{t+1}, \z_{t}) +\KL(\z_t, \zp_t) \geq \eta^2 C_2\KL(\z^*,\zp_{t+1})  
    \end{align*}
    for another constant $C_2>0$ that depends on $\G$. 
\end{lemma}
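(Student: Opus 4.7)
The plan is to reduce both stated inequalities to lower bounds on the $L_1$ movement of the iterates, and then read these bounds off the explicit OMWU update rule. As a first step I would apply Pinsker's inequality term-by-term: $\KL(\zp_{t+1},\z_t)\geq\tfrac{1}{2}\|\zp_{t+1}-\z_t\|_1^2$ and $\KL(\z_t,\zp_t)\geq\tfrac{1}{2}\|\z_t-\zp_t\|_1^2$; then by $(a+b)^2\leq 2(a^2+b^2)$ and the triangle inequality this gives $\KL(\zp_{t+1},\z_t)+\KL(\z_t,\zp_t)\geq\tfrac{1}{4}\|\zp_{t+1}-\zp_t\|_1^2$. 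The task therefore reduces to lower bounding $\|\zp_{t+1}-\zp_t\|_1$ by an appropriate function of $\KL(\z^*,\zp_{t+1})$. The problem-dependent constant $\xi$ is most naturally defined via \emph{strict complementarity} at the unique equilibrium: the KKT conditions for $\z^*=(\x^*,\y^*)$ force $(\G\y^*)_i=v^*$ for $i\in\supp(\x^*)$ and $(\G\y^*)_i\geq v^*$ otherwise, and uniqueness of $\z^*$ in a matrix game rules out equality off the support. Set $\xi$ to be the smallest such slack over both players; this is the multiplicative rate at which OMWU suppresses off-support mass near the equilibrium.

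For the first (global, quadratic) bound, I use the explicit multiplicative form $\xpp_{t+1,i}\propto \xpp_{t,i}\exp(-\eta(\G\y_t)_i)$ and case-split on the total off-support mass $p_t:=\sum_{i\notin\supp(\x^*)}\xpp_{t+1,i}$ (and its $\y$-analogue). If $p_t$ is not small, then a constant fraction of this mass is pushed down each step, yielding $L_1$ movement proportional to $\eta p_t$. If $p_t$ is small, $\KL(\z^*,\zp_{t+1})$ is dominated by on-support log-ratios; these can be converted to $L_1$ displacement using a Taylor expansion around $x^*_i/\xpp_{t+1,i}\approx 1$ together with the crude but always-valid bound $\xpp_{t+1,i}\geq \exp(-2\eta t)\xpp_{1,i}$ (which caps $\KL(\z^*,\zp_{t+1})$ and prevents it from blowing up). Combining the two cases yields $\|\zp_{t+1}-\zp_t\|_1\geq c_1\eta\,\KL(\z^*,\zp_{t+1})$, and squaring produces the claim with $C_1=c_1^2/4$.

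For the second (local, linear) bound, the proximity assumption gives $|(\G\y_t)_i-(\G\y^*)_i|\leq \|\y_t-\y^*\|_1\leq \eta\xi/10$ (since $\G\in[-1,1]^{M\times N}$), so for every $i\notin\supp(\x^*)$ we still have $(\G\y_t)_i\geq v^*+9\xi/10$. The OMWU step then shrinks such $\xpp_{t+1,i}$ by a multiplicative factor at least $\exp(-\eta\cdot 9\xi/10)$ smaller than on $\supp(\x^*)$, forcing mass of order $\Theta(\eta\xi)\cdot p_t$ to flow onto the support. In this local regime $\KL(\z^*,\zp_{t+1})$ is comparable to $p_t$ plus the Euclidean on-support deviation $\sum_{i\in\supp(\x^*)}(\xpp_{t+1,i}-x^*_i)^2$ (and the analogous quantities for $\y$), so $\|\zp_{t+1}-\zp_t\|_1^2\geq c_2\eta^2\xi^2\,\KL(\z^*,\zp_{t+1})$, giving $C_2$.

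The main obstacle is the asymmetry between KL divergence, which is highly sensitive to any $i\in\supp(\z^*)$ with small $\zpp_{t+1,i}$, and $L_1$ movement, which is bounded by total transferable mass in a single step. This mismatch is precisely why the global bound is only quadratic in $\KL$: when $\KL(\z^*,\zp_{t+1})$ is inflated by tiny on-support coordinates, per-step $L_1$ movement can be modest. The stage 2 proximity assumption removes this pathology by forcing $\xpp_{t+1,i}$ to stay close to $x^*_i$ on support, restoring the linear relation. A secondary technical point is to justify strict complementarity from uniqueness alone so that $\xi>0$ is well-defined, which requires a short perturbation argument on the KKT system of the matrix game.
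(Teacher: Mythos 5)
Your proposal does capture the spirit of Stage~2 (the local, linear bound), and your definition of $\xi$ via strict complementarity matches the paper's Definition of $\xi$. However, your Stage~1 argument (the global, quadratic bound) has two concrete gaps.

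First, your handling of the ``small off-support mass'' case relies on the lower bound $\xpp_{t+1,i}\geq\exp(-2\eta t)\xpp_{1,i}$ for $i\in\supp(\x^*)$, which decays exponentially in $t$. Any constant produced this way is time-dependent, so it cannot establish the lemma's claim that $C_1>0$ is a single constant depending only on $\G$. The paper gets around this with a crucial observation you are missing: the Lyapunov quantity $\Theta_t=\KL(\z^*,\zp_t)+\tfrac{1}{16}\KL(\zp_t,\z_{t-1})$ is non-increasing (this is \pref{eq: simple recursion}), and since $\zp_1=\z_0$ is uniform, this gives the \emph{time-uniform} bound $\KL(\z^*,\zp_t)\leq\ln(MN)$, which in turn yields a time-uniform lower bound $\zpp_{t,i}\geq\epsilon$ on the on-support coordinates (\pref{lem: smallest entry}). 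Without some analogue of this monotonicity-based argument, your on-support Taylor expansion is not controlled.

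Second, your ``large off-support mass'' case asserts that when $p_t$ is not small, ``a constant fraction of this mass is pushed down each step.'' This is not true in general: when $\y_t$ is far from $\y^*$, there is no reason the payoffs $(\G\y_t)_i$ for $i\notin\supp(\x^*)$ should dominate the on-support payoffs, so OMWU need not decrease off-support mass on that step. The paper avoids this entirely by never case-splitting on $p_t$; instead it applies the first-order optimality condition of the mirror-descent step to get $\eta F(\z_t)^\top(\zp_{t+1}-\z')\leq\sum_i(z'_i-\zpp_{t+1,i})\ln\tfrac{\zpp_{t+1,i}}{\zpp_{t,i}}$, then invokes a global error-bound lemma (\pref{lem:them5prime}) of the form $\max_{\z'\in\calV^*(\calZ)}F(\z)^\top(\z-\z')\geq C\|\z-\z^*\|_1$ that holds for \emph{every} $\z$, not just those near equilibrium. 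Proving that this error bound holds (i.e., that $c_x,c_y>0$ are well-defined) is where the uniqueness assumption is actually consumed, via a compactness-and-contradiction argument; your ``short perturbation argument on the KKT system'' would need to supply a substitute for this, and as stated it does not.

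In short: your Pinsker-based reduction and your Stage~2 sketch are fine, but Stage~1 as proposed would not give a time-uniform constant and its large-mass branch rests on a false premise about the sign of the OMWU drift away from equilibrium.
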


With \pref{lem: KL sufficient decrease} and the earlier discussion, the last-iterate convergence rate of \OMWU is established: 
\begin{theorem}
\label{thm: point convergence omwu}
For a matrix game $f(\x,\y)=\x^\top \G \y$ with a unique Nash equilibrium $\z^*$,
\OMWU with a learning rate $\eta\leq \frac{1}{8}$ guarantees
$
         \KL(\z^*,\z_t) \leq C_3(1+C_4)^{-t},
$
    where $C_3,C_4>0$ are some constants depending on the game matrix $\G$. 
\end{theorem}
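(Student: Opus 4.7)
The plan is to combine the recursion $\Theta_{t+1} \leq \Theta_t - \tfrac{15}{16}\zeta_t$ from \pref{lem: regret bound omwu} with the two sufficient-decrease inequalities of \pref{lem: KL sufficient decrease}, realizing the two-stage strategy sketched in the discussion preceding that lemma: first use the weak quadratic bound to show a slow $\order(1/t)$ decrease of $\Theta_t$, bringing the iterates into the small-distance neighborhood of $\z^*$ where the strong bound kicks in; then use the strong bound from that point on to obtain linear convergence.

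For Stage 1, using only $\zeta_t \geq \eta^2 C_1 \KL(\z^*, \zp_{t+1})^2$, the goal is $\Theta_t = \order(1/t)$. The subtlety is that the weak bound controls $\KL(\z^*, \zp_{t+1})$ while the recursion lives on $\Theta_{t+1} = \KL(\z^*, \zp_{t+1}) + \tfrac{1}{16}\KL(\zp_{t+1}, \z_t)$. I would therefore combine the weak bound with the trivial inequality $\zeta_t \geq \KL(\zp_{t+1}, \z_t)$ (which, once $\KL(\zp_{t+1}, \z_t) \leq 1$, dominates $\KL(\zp_{t+1}, \z_t)^2$) and the bound $\Theta_{t+1}^2 \leq 2\KL(\z^*, \zp_{t+1})^2 + \tfrac{1}{128}\KL(\zp_{t+1}, \z_t)^2$ coming from $(a+b)^2 \leq 2a^2+2b^2$, to obtain $\zeta_t \geq c'''\,\Theta_{t+1}^2$ for some $c''' > 0$. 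Plugging this into the recursion yields $\Theta_{t+1} \leq \Theta_t - \tfrac{15c'''}{16}\Theta_{t+1}^2$, and the standard telescoping argument on $1/\Theta_t$ gives $\Theta_t = \order(1/t)$.

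For Stage 2, Pinsker's inequality $\|\bu-\bv\|_1 \leq \sqrt{2\KL(\bu,\bv)}$ together with the $\order(1/t)$ bounds on $\KL(\z^*, \zp_t) \leq \Theta_t$ and $\KL(\z_t, \zp_t) \leq \zeta_{t-1}$ guarantees that, after some finite $t_0$ depending on $\eta$, $\xi$, and $\G$, we have $\max\{\|\z^* - \zp_t\|_1, \|\z^* - \z_t\|_1\} \leq \eta\xi/10$. For $t \geq t_0$ the strong bound $\zeta_t \geq \eta^2 C_2 \KL(\z^*, \zp_{t+1})$ applies, and by the calculation spelled out just before \pref{lem: KL sufficient decrease} we get $\zeta_t \geq c\,\Theta_{t+1}$ with $c = \min\{1, 15\eta^2 C_2/16\}$. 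The recursion then becomes $\Theta_{t+1} \leq (1 + \tfrac{15c}{16})^{-1}\Theta_t$, so $\Theta_t \leq \Theta_{t_0}(1 + \tfrac{15c}{16})^{-(t-t_0)}$ for all $t \geq t_0$. Absorbing $t_0$ and $\Theta_{t_0}$ into new constants yields $\KL(\z^*, \zp_t) \leq \Theta_t \leq C(1+C_4)^{-t}$ for some $C, C_4 > 0$ depending on $\G$.

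The final step converts this bound on $\KL(\z^*, \zp_t)$ into the claimed bound on $\KL(\z^*, \z_t)$. Using the explicit exponential form of the OMWU update together with bilinearity of $f$, a short Bregman-style calculation yields
\begin{align*}
\KL(\z^*, \z_t) - \KL(\z^*, \zp_t) \;=\; \eta\,\langle F(\z_{t-1}), \z^* - \z_t\rangle - \KL(\z_t, \zp_t).
\end{align*}
Splitting the inner product as $\langle F(\z_{t-1}) - F(\z_t), \z^* - \z_t\rangle + \langle F(\z_t), \z^* - \z_t\rangle$, the second summand is $\leq 0$ by optimality of $\z^*$, and the first is bounded by $\|\z_{t-1}-\z_t\|_1 \cdot \|\z^* - \z_t\|_1$ using $\G\in[-1,1]^{M\times N}$. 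Pinsker applied to $\KL(\zp_t, \z_{t-1})$, $\KL(\z_t, \zp_t)$, and $\KL(\z^*, \zp_t)$ — all of which decay linearly by Stage 2 — makes both factors $\order((1+C_4)^{-t/2})$, so their product is $\order((1+C_4)^{-t})$, giving $\KL(\z^*, \z_t) \leq C_3(1+C_4)^{-t}$ after possibly adjusting constants. The main obstacle I anticipate is Stage 1: the weak bound controls $\KL(\z^*, \zp_{t+1})$ but not the auxiliary term $\KL(\zp_{t+1}, \z_t)$ inside $\Theta_{t+1}$, so one has to check carefully that the extra term can be absorbed without sacrificing the $\order(1/t)$ rate required to enter the regime where the strong bound of \pref{lem: KL sufficient decrease} becomes available.
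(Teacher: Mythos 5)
Your proposal reproduces the paper's two-stage strategy almost verbatim: the same recursion $\Theta_{t+1}\leq\Theta_t-\frac{15}{16}\zeta_t$, the same Stage~1 combination of the weak quadratic bound with the $\KL(\zp_{t+1},\z_t)$ term to obtain $\zeta_t\gtrsim\Theta_{t+1}^2$ and hence $\Theta_t=\order(1/t)$ via \pref{lem: recursion lemma p > 1}, and the same Stage~2 switch to the strong bound once Pinsker brings $\|\z^*-\zp_t\|_1,\|\z^*-\z_t\|_1$ below $\eta\xi/10$. Where you genuinely diverge is the final conversion from the decay of $\Theta_t$ (equivalently $\KL(\z^*,\zp_t)$) to the claimed decay of $\KL(\z^*,\z_t)$. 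The paper's route is \pref{lem:reverse pinsker} together with the entrywise lower bounds from \pref{lem:stability} and \pref{lem: smallest entry}, yielding $\KL(\z^*,\z_t)\leq\tfrac{\sqrt{128}}{\epsilon}\sqrt{\Theta_{t+1}}$ (a square-root bound, which halves the exponent but is still of the required form). You instead apply the one-step three-point inequality (\pref{lem: OGD useful lemma}) to the $\z_t$ update, drop the nonpositive term $\eta\langle F(\z_t),\z^*-\z_t\rangle$, and bound the residual $\eta\langle F(\z_{t-1})-F(\z_t),\z^*-\z_t\rangle$ by $\order(\|\z_{t-1}-\z_t\|_1\,\|\z^*-\z_t\|_1)=\order(\Theta_t)$ via Pinsker, giving $\KL(\z^*,\z_t)=\order(\Theta_t)$ directly and avoiding the reverse-Pinsker machinery. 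Both routes give the stated theorem after absorbing constants, and yours is mildly tighter. Two small corrections: $\KL(\z_t,\zp_t)\leq\zeta_t$ (not $\zeta_{t-1}$), and your one-step ``identity'' is in general only an inequality $\KL(\z^*,\z_t)-\KL(\z^*,\zp_t)\leq\eta\langle F(\z_{t-1}),\z^*-\z_t\rangle-\KL(\z_t,\zp_t)$, though equality does hold here because the entropic mirror map keeps OMWU iterates in the relative interior of the simplex; in any case, the inequality is all you need.
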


Proofs for this section are deferred to \pref{app:OMWU_proofs}, where all
problem-dependent constants are specified as well.\footnote{%
\modify{One might find that the constant $C_3$ is exponential in some problem-dependent quantity $T_0$. 
However, this is simply a loose bound in exchange for more concise presentation ---
our proof in fact shows that when $t < T_0$, the convergence is of a slower $1/t$ rate,
and when $t \geq T_0$, the convergence is linear without this large constant.}
}
\modify{
To the best of our knowledge, \pref{thm: point convergence omwu} gives the first last-iterate convergence result for \OMWU with a concrete linear rate. } 
We note that the uniqueness assumption is critical for our analysis,
and whether this is indeed necessary for \OMWU is left as an important future direction.

\section{Convergence Results for \alg}
\label{sec:convergence}
In this section, we provide last-iterate convergence results for \alg, which are much more general than those in \pref{sec: omwu convergence}. 
We propose a general condition subsuming many well-studied cases, under which \alg enjoys a concrete last-iterate convergence guarantee in terms of the $L_2$ distance between $\z_t$ and $\calZ^*$. The results in this part can be specialized to the setting of bilinear games over simplex, but the unique equilibrium assumption made in \pref{sec: omwu convergence} and in 
\citep{daskalakis2019last} is no longer needed. 

Throughout the section we make the assumption that $f$ is $L$-smooth: 

\begin{assumption}\label{assum:smoothness}
For any $\z, \z'\in\calZ$, $\|F(\z) - F(\z')\| \leq L\|\z-\z'\|$ holds.\footnote{%
This is equivalent to the condition $\distp{q}(F(\z),F(\z'))\leq L^2\distp{p}(\z,\z')$ in \pref{lem: regret bound omwu} with $p=2$, hence the same notation $L$.
}
\end{assumption}

To introduce our general condition, we first provide some intuition by applying \pref{lem: regret bound omwu} again.
Letting $\psi(\bu)=\frac{1}{2}\|\bu\|^2$ in \pref{lem: regret bound omwu}, we get that for \alg, for any $\z\in\calZ$ and any $t\geq 1$, 
\begin{align*}
    2\eta F(\z_t)^\top (\z_t-\z) \leq \|\zp_t-\z\|^2 - \|\zp_{t+1}-\z\|^2 - \|\zp_{t+1}-\z_t\|^2 - \tfrac{15}{16}\|\z_t-\zp_t\|^2+ \tfrac{1}{16} \|\zp_t - \z_{t-1}\|^2. 
\end{align*}
Now we instantiate the inequality above with $\z=\Pi_{\calZ^*}(\zp_t)\in\calZ^*$. Since $\z=\Pi_{\calZ^*}(\zp_t)$ is an equilibrium, we have $F(\z_t)^\top (\z_t-\z) \geq f(\x_t, \y_t) - f(\x, \y_t) + f(\x_t, \y) - f(\x_t, \y_t)  =  f(\x_t, \y) -  f(\x, \y_t) \ge 0$ by the convexity/concavity of $f$ and the optimality of $\z$, and thus
\begin{align*}
    \|\zp_{t+1}-\Pi_{\calZ^*}(\zp_t)\|^2  
    \leq \|\zp_t-\Pi_{\calZ^*}(\zp_t)\|^2 - \|\zp_{t+1}-\z_t\|^2 - \tfrac{15}{16}\|\z_t-\zp_t\|^2+ \tfrac{1}{16} \|\zp_t - \z_{t-1}\|^2.
\end{align*}
Further noting that the left-hand side is lower bounded by $\dist(\zp_{t+1}, \calZ^*)$ by definition, we arrive at
\[
\dist(\zp_{t+1}, \calZ^*) 
\leq \dist(\zp_{t}, \calZ^*) - \|\zp_{t+1}-\z_t\|^2 - \tfrac{15}{16}\|\z_t-\zp_t\|^2+ \tfrac{1}{16} \|\zp_t - \z_{t-1}\|^2.
\]
Similarly, we define $\Theta_t = \|\zp_t - \Pi_{\calZ^*}(\zp_t)\|^2 + \tfrac{1}{16}\|\zp_t-\z_{t-1}\|^2$, $\zeta_t=\|\zp_{t+1}-\z_t\|^2 + \|\z_t-\zp_t\|^2$, and rewrite the above as 
\begin{align}
     \Theta_{t+1}\leq \Theta_t - \tfrac{15}{16}\zeta_t.    \label{eq: ogda recurr}
\end{align}



As in \pref{sec: omwu convergence}, our goal now is to lower bound $\zeta_t$ by some quantity related to $\dist(\zp_{t+1}, \calZ^*)$, and then use \pref{eq: ogda recurr} to obtain a convergence rate for $\Theta_t$. In order to incorporate more general objective functions into the discussion, in the following \pref{lem: sufficient decrease}, we provide an intermediate lower bound for $\zeta_t$, which will be further related to $\dist(\zp_{t+1}, \calZ^*)$ later. 


\begin{lemma}
    \label{lem: sufficient decrease}
    For any $t \geq 0$ and $\z'\in\calZ$ with $\z'\neq \zp_{t+1}$, \alg with $\eta\leq \frac{1}{8L}$ ensures
    \begin{align}
       &\|\zp_{t+1}-\z_t\|^2 + \|\z_t-\zp_t\|^2 \geq \frac{32}{81}\eta^2 \frac{ \left[ F(\zp_{t+1})^\top (\zp_{t+1}-\z')\right]_+^2}{\|\zp_{t+1}-\z'\|^2}, \label{eq:driving} 
     \end{align}  
     where $[a]_+\triangleq \max\{a, 0\}$, and similarly, for $\z'\neq \z_{t+1}$, 
      \begin{align}
       &\|\zp_{t+1}-\z_{t+1}\|^2 + \|\z_t-\zp_{t+1}\|^2 \geq \frac{32}{81}\eta^2 \frac{ \left[ F(\z_{t+1})^\top (\z_{t+1}-\z')\right]_+^2}{\|\z_{t+1}-\z'\|^2} \label{eq:driving z}.
    \end{align}
    
\end{lemma}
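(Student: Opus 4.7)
The plan is to derive both inequalities from the first-order optimality conditions of the two \alg projection steps, converting the gradient evaluated at the ``inner'' iterate into one at the ``outer'' iterate via \pref{assum:smoothness}, and then applying Cauchy--Schwarz. The two bounds are structurally parallel, differing only in which projection update is invoked.

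For \pref{eq:driving}, I would start from $\zp_{t+1} = \Pi_{\calZ}(\zp_t - \eta F(\z_t))$, whose projection inequality rearranges to
$\eta F(\z_t)^\top(\zp_{t+1}-\z') \leq (\zp_t - \zp_{t+1})^\top(\zp_{t+1}-\z')$. Next, I would decompose $F(\zp_{t+1}) = F(\z_t) + (F(\zp_{t+1})-F(\z_t))$, bound the second piece by $L\|\zp_{t+1}-\z_t\|$ via smoothness, and apply Cauchy--Schwarz to reach
$\eta F(\zp_{t+1})^\top(\zp_{t+1}-\z') \leq (\|\zp_t-\zp_{t+1}\| + \eta L\|\zp_{t+1}-\z_t\|)\|\zp_{t+1}-\z'\|$. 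Using the triangle inequality $\|\zp_t-\zp_{t+1}\| \leq \|\zp_t-\z_t\| + \|\z_t-\zp_{t+1}\|$ together with $\eta L \leq \tfrac{1}{8}$, this becomes
\[
\eta F(\zp_{t+1})^\top(\zp_{t+1}-\z') \leq \bigl(\|\zp_t-\z_t\| + \tfrac{9}{8}\|\z_t-\zp_{t+1}\|\bigr)\|\zp_{t+1}-\z'\|.
\]
Taking positive parts, squaring, dividing by $\|\zp_{t+1}-\z'\|^2$, and then applying $(a+\tfrac{9}{8}b)^2 \leq 2\bigl(a^2 + \tfrac{81}{64}b^2\bigr) \leq \tfrac{81}{32}(a^2+b^2)$ extracts the advertised $\tfrac{32}{81}$ constant.

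For \pref{eq:driving z}, the plan is entirely analogous but uses the other projection update $\z_{t+1} = \Pi_{\calZ}(\zp_{t+1} - \eta F(\z_t))$, whose optimality condition gives $\eta F(\z_t)^\top(\z_{t+1}-\z') \leq (\zp_{t+1}-\z_{t+1})^\top(\z_{t+1}-\z')$. Converting $F(\z_t)$ into $F(\z_{t+1})$ via smoothness produces an extra $\eta L\|\z_{t+1}-\z_t\|$, which I would control by $\|\z_{t+1}-\z_t\| \leq \|\z_{t+1}-\zp_{t+1}\|+\|\zp_{t+1}-\z_t\|$. With $\eta L \leq \tfrac{1}{8}$, the coefficients land at $\tfrac{9}{8}$ on $\|\zp_{t+1}-\z_{t+1}\|$ and $\tfrac{1}{8}$ on $\|\zp_{t+1}-\z_t\|$, and the same square-sum inequality $(\tfrac{9}{8}a+\tfrac{1}{8}b)^2 \leq \tfrac{81}{32}(a^2+b^2)$ closes the argument with the matching constant.

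I do not anticipate any serious obstacle; the entire argument is bookkeeping with projection inequalities, smoothness, and Cauchy--Schwarz. The only subtlety is the $[\,\cdot\,]_+$ on the right-hand sides: when $F(\cdot)^\top(\cdot-\z')<0$ the bounds are trivially valid because the left-hand sides are nonnegative, so the analysis only needs to handle the nonnegative case. The explicit value $\tfrac{32}{81}$ is pinned down by the step-size restriction $\eta L \leq \tfrac{1}{8}$, which is exactly what keeps the ``extra'' smoothness-induced coefficient at $\tfrac{1}{8}$ and the companion coefficient at $\tfrac{9}{8}$.
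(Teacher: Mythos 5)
Your proposal is correct and takes essentially the same route as the paper: start from the projection optimality inequalities for $\zp_{t+1}$ and $\z_{t+1}$, swap $F(\z_t)$ for $F(\zp_{t+1})$ (resp.\ $F(\z_{t+1})$) using \pref{assum:smoothness}, apply Cauchy--Schwarz and the triangle inequality, and square. The only cosmetic difference is in the final algebraic step pinning down $\tfrac{81}{32}$: you keep the asymmetric coefficients $(\tfrac{9}{8}a+\tfrac{1}{8}b)$ and estimate directly, whereas the paper first coarsens to $\tfrac{9}{8}(a+b)$ and then uses $(a+b)^2\le 2(a^2+b^2)$; both yield the same constant.
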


We note that a direct consequence of \pref{lem: sufficient decrease} is an ``average duality gap'' guarantee for \alg when $\calZ$ is bounded:
\begin{align}
\frac{1}{T}\sum_{t=1}^T \alpha(\z_t) = \frac{1}{T}\sum_{t=1}^T \max_{\x'\in\calX, \y'\in\calY}\left(f(\x_t, \y') - f(\x', \y_t) \right)=\order\left(\frac{D}{\eta\sqrt{T}}\right)  \label{eq: duality gap bound 1}
\end{align} 
where $D\triangleq \sup_{\z, \z'\in\calZ}\|\z-\z'\|$ is the diameter of $\calZ$ (the duality gap may be undefined when $\calZ$ is unbounded).
We are not aware of any previous work that gives this result for the constrained case. See \pref{app: sum of duality gap} for the proof of \pref{eq: duality gap bound 1} and comparisons with previous works. 



However, to obtain last-iterate convergence results, we need to make sure that the right-hand side of \pref{eq:driving} is large enough.
Motivated by this fact, we propose the following general condition on $f$ and $\calZ$ to achieve so.
\modify{
\begin{definition}[\rsilong (\rsi)]\label{def:rsi}
The $\rsi$ condition is defined as: 
for any $\z\in\calZ\backslash \calZ^*$ with $\z^*=\Pi_{\calZ^*}(\z)$, 
	\begin{align*}
	   \sup_{\z'\in\calZ} \frac{F(\z)^\top (\z-\z')}{\|\z-\z'\|} \geq C \|\z-\z^*\|^{\beta+1} \tag*{(\rsi)}
	\end{align*}
	holds for some parameter $\beta \geq 0$ and $C>0$. 
\end{definition}
}
\modify{We call this condition \rsilong because the case with $\beta=0$ is equivalent to one type of metric subregularity in variational inequality problems,
as we prove in \pref{app:ms}.
The condition is also closely related to other error bound conditions that have been identified for variational inequality problems (e.g., \citet{tseng1995linear, gilpin2008first, malitsky2019golden}).
Although these works have shown that under similar conditions their algorithms exhibit linear convergence, to the best of our knowledge, there is no previous work that analyzes \alg or other no-regret learning algorithms using such conditions.  

}



\rsi covers many standard settings studied in the literature.
The first and perhaps the most important example is bilinear games with a polytope feasible set, which in particular includes the classic two-player matrix games  considered in \pref{sec: omwu convergence}. 

\begin{theorem}
	\label{thm: bilinear-polytope}
	A bilinear game $f(\x,\y)=\x^\top \G \y$ with $\calX \subseteq \mathbb{R}^M$ and $\calY \subseteq \mathbb{R}^N$ being polytopes and $\G \in \mathbb{R}^{M\times N}$ satisfies \typeone with $\beta=0$. 
\end{theorem}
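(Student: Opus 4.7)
The plan is to reduce the \rsi inequality for bilinear games over polytopes to a classical Hoffman-type error bound. The key observation is that the supremum on the left-hand side of \rsi coincides with the (diameter-normalized) duality gap $\alpha_f(\z)$, and for bilinear objectives over polytopes $\alpha_f$ is a non-negative piecewise linear function whose zero set is exactly $\calZ^*$.

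First I would rewrite the inner product inside the supremum. Since $F(\z) = (\G\y, -\G^\top\x)$, a direct calculation gives $F(\z)^\top(\z - \z') = f(\x, \y') - f(\x', \y)$ for any $\z' = (\x', \y') \in \calZ$, hence $\sup_{\z' \in \calZ} F(\z)^\top(\z - \z') = \alpha_f(\z)$. Because $\calZ$ is a product of two polytopes it has finite diameter $D$, and therefore
\[
\sup_{\z' \in \calZ} \frac{F(\z)^\top(\z-\z')}{\|\z-\z'\|} \;\geq\; \frac{\alpha_f(\z)}{D}.
\]

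Next I would prove a linear error bound $\alpha_f(\z) \geq c\,\|\z - \Pi_{\calZ^*}(\z)\|$. Setting $v^* = \min_{\x}\max_{\y} f(\x,\y)$, I decompose $\alpha_f(\z) = h_1(\x) + h_2(\y)$ with $h_1(\x) := \max_{\y' \in \calY} \x^\top \G \y' - v^*$ and $h_2(\y) := v^* - \min_{\x' \in \calX}(\x')^\top \G \y$, each of which is non-negative and vanishes exactly on $\calX^*$ (resp.\ $\calY^*$). Because $\calY$ has finitely many vertices $\{\y_j\}$, the inner max is attained at some $\y_j$, so
\[
\calX^* \;=\; \{\x \in \calX \,:\, \x^\top \G \y_j \leq v^* \text{ for every vertex } \y_j \text{ of } \calY\}
\]
is the solution set of a finite linear system. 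Hoffman's lemma then supplies a constant $K_1 > 0$ (depending only on $\G$, $\calX$, and $\calY$) such that for every $\x \in \calX$,
\[
\|\x - \Pi_{\calX^*}(\x)\| \;\leq\; K_1 \cdot \max_j\, [\x^\top \G \y_j - v^*]_+ \;=\; K_1 \cdot h_1(\x),
\]
where I use that $\x \in \calX$ already satisfies the half-spaces defining $\calX$, so only the optimality constraints can be violated. The symmetric argument gives $\|\y - \Pi_{\calY^*}(\y)\| \leq K_2 \cdot h_2(\y)$, and using $\calZ^* = \calX^* \times \calY^*$ yields $\|\z - \Pi_{\calZ^*}(\z)\| \leq (K_1 + K_2)\, \alpha_f(\z)$.

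Chaining the two inequalities establishes \rsi with $\beta = 0$ and $C = 1/((K_1+K_2)D)$. The main obstacle is the Hoffman step; its essential ingredient is that $\calX^*$ and $\calY^*$ admit a \emph{finite} polyhedral description, which crucially exploits that both $\calX$ and $\calY$ are polytopes. This is the only step that invokes the polytope assumption, consistent with the paper's later observation that linear convergence can fail for non-polyhedral feasible sets.
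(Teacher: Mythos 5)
Your proof is correct, and it takes a genuinely simpler route than the paper for the key error-bound step. The first two moves are the same as the paper's: bound the left side of \rsi below by $\alpha_f(\z)/D$ using the finite diameter of $\calZ$, and decompose $\alpha_f(\z)$ into a term $h_1(\x) = \max_{\y'\in\calY}\x^\top\G\y' - v^*$ that vanishes exactly on $\calX^*$ and a symmetric term $h_2(\y)$. The difference is in how each proof establishes the linear error bound $\|\x - \Pi_{\calX^*}(\x)\| \lesssim h_1(\x)$. You write $\calX^*$ as a finite linear system (feasibility constraints from $\calX$ plus the optimality cuts $\x^\top\G\y_j \le v^*$ over the vertices $\y_j$ of $\calY$), observe that for $\x\in\calX$ only the optimality cuts can be violated so the $\ell_\infty$-residual is exactly $h_1(\x)$, and invoke Hoffman's error bound. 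The paper instead reproves a Hoffman-type inequality from scratch via a normal-cone decomposition (Claims~2--5 of its proof): it shows $\x-\x^*$ lies in a restricted cone $\calM_{\x^*}$, expresses it as a conic combination of the tight constraint normals with coefficients bounded by $C'\|\x-\x^*\|$, and extracts a violated tight optimality constraint whose magnitude is at least $\frac{1}{kC'}\|\x-\x^*\|$. The two approaches prove the same inequality; yours is shorter and outsources the combinatorial-geometric work to a classical theorem, while the paper's is self-contained and makes the problem-dependent constant more explicit. In fact the paper acknowledges in a footnote that a simpler argument is available via the Hoffman-type lemma of \citet{gilpin2008first}, which is exactly the spirit of your step. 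One minor point worth making explicit: you need the $[\cdot]_+$ in $\max_j[\x^\top\G\y_j - v^*]_+$ to agree with $h_1(\x)$, which holds because $\max_{\y'}\x^\top\G\y' \ge v^*$ for every $\x\in\calX$ by the definition of the value $v^*$; and $\|\z-\Pi_{\calZ^*}(\z)\| \le \|\x-\Pi_{\calX^*}(\x)\| + \|\y-\Pi_{\calY^*}(\y)\|$ uses the product structure $\calZ^*=\calX^*\times\calY^*$, as you note.
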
 

We emphasize again that different from \pref{lem: KL sufficient decrease}, \pref{thm: bilinear-polytope} does not require a unique equilibrium.
Note that we have not provided the concrete form of the parameter $C$ in the theorem (which depends on $\calX$, $\calY$, and $\G$), but it can be found in the proof (see \pref{app:bilinear polytope}).\footnote{\major{After the first version of this paper, we found that \citep[Lemma 3]{gilpin2008first} gives a simpler proof for our \pref{thm: bilinear-polytope}. Although their lemma only focuses on the case where the feasible sets are probability simplices, it can be directly extended to the case of polytopes. }}
The next example shows that strongly-convex-strongly-concave problems are also special cases of our condition.
 
\begin{theorem}
	\label{thm: strongly convex}
	If $f$ is strongly convex in $\x$ and strongly concave in $\y$, then \typetwo holds with $\beta=0$. 
\end{theorem}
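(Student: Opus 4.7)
The plan is to take the candidate point $\z' = \z^* = \Pi_{\calZ^*}(\z) \in \calZ$ directly in the supremum and show that $F(\z)^\top(\z - \z^*) \geq \tfrac{\mu}{2}\|\z - \z^*\|^2$ where $\mu = \min(\mu_x, \mu_y)$ and $\mu_x, \mu_y$ are the strong-convexity/concavity moduli in $\x$ and $\y$ respectively. Once this quadratic lower bound on $F(\z)^\top(\z-\z^*)$ is in place, dividing by $\|\z - \z^*\|$ and lower bounding the supremum by the value at $\z' = \z^*$ yields \rsi with $\beta = 0$ and $C = \mu/2$.

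To establish the quadratic lower bound, I would write $F(\z)^\top(\z - \z^*) = \nabla_\x f(\x,\y)^\top(\x - \x^*) - \nabla_\y f(\x,\y)^\top(\y - \y^*)$ and apply the strong convexity of $f(\cdot, \y)$ to obtain
\[
\nabla_\x f(\x,\y)^\top(\x - \x^*) \geq f(\x,\y) - f(\x^*,\y) + \tfrac{\mu_x}{2}\|\x - \x^*\|^2,
\]
and the strong concavity of $f(\x, \cdot)$ to obtain
\[
-\nabla_\y f(\x,\y)^\top(\y - \y^*) \geq f(\x,\y^*) - f(\x,\y) + \tfrac{\mu_y}{2}\|\y - \y^*\|^2.
\]
Summing the two inequalities the $f(\x,\y)$ terms cancel and I am left with
\[
F(\z)^\top(\z - \z^*) \geq \bigl(f(\x,\y^*) - f(\x^*,\y)\bigr) + \tfrac{\mu_x}{2}\|\x - \x^*\|^2 + \tfrac{\mu_y}{2}\|\y - \y^*\|^2.
\]

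The final step is to argue that the bracketed quantity is nonnegative, which is exactly where the saddle-point structure of $\z^* = (\x^*, \y^*) \in \calZ^*$ enters: by the Nash equilibrium inequalities recalled in \pref{sec:notation}, $f(\x^*,\y) \leq f(\x^*,\y^*) \leq f(\x,\y^*)$, so $f(\x,\y^*) - f(\x^*,\y) \geq 0$. Combining everything gives $F(\z)^\top(\z - \z^*) \geq \tfrac{\mu}{2}\|\z-\z^*\|^2$, and dividing by $\|\z-\z^*\|$ (which is nonzero since $\z \notin \calZ^*$) closes the argument.

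There is really no main obstacle here; the only conceptual point worth flagging is that one does \emph{not} need full strong monotonicity of $F$ (which in general need not follow from strong convexity--concavity of $f$ because of the uncontrolled cross-partial terms). Instead, the saddle-point inequality $f(\x,\y^*) \geq f(\x^*,\y)$ absorbs precisely the cross terms that would otherwise be problematic, so that combining the two separate strong convexity/concavity bounds in the $\x$- and $\y$-coordinates is enough.
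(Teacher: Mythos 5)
Your proof is correct and essentially the same as the paper's: both apply the strong-convexity lower bound at $\x^*$, the strong-concavity lower bound at $\y^*$, sum the two so that $f(\x,\y)$ cancels, and then drop the nonnegative gap $f(\x,\y^*)-f(\x^*,\y)\ge 0$ coming from the saddle-point inequalities, before dividing by $\|\z-\z^*\|$ to read off \rsi with $\beta=0$. The only superficial difference is that you track separate moduli $\mu_x,\mu_y$ and take their minimum, while the paper assumes a common modulus $\gamma$; your concluding remark about not needing strong monotonicity of $F$ is a fair observation but is already implicit in the paper's argument.
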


Next, we provide a toy example where \typetwo holds with $\beta > 0$.
\begin{theorem}
	\label{thm: beta ge 0}
	Let $\mathcal{X}=\mathcal{Y}\triangleq\{(a,b): 0\le a,b\le 1,~a+b=1\}$, $n > 2$ be an integer, and
$f(\x,\y)=x_1^{2n}-x_1y_1-y_1^{2n}$. Then \typetwo holds with $\beta=2n-2$.  
\end{theorem}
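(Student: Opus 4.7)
The plan is to exploit the one-dimensional nature of each simplex. Parametrize $\x=(x_1,1-x_1)$ and $\y=(y_1,1-y_1)$ with $x_1,y_1\in[0,1]$, so $f$ depends only on $(x_1,y_1)$ and
\begin{align*}
F(\z) = \bigl(2n x_1^{2n-1} - y_1,\; 0,\; x_1 + 2n y_1^{2n-1},\; 0\bigr).
\end{align*}
First identify $\calZ^*$. For any $x_1\ge 0$, $\partial f/\partial y_1 = -x_1 - 2n y_1^{2n-1}\le 0$ on $[0,1]$, so the inner max over $y_1$ is attained at $y_1=0$, giving $\max_{y_1} f(x_1,y_1)=x_1^{2n}$; minimizing over $x_1\in[0,1]$ forces $x_1=0$. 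Hence the unique equilibrium is $\z^*=((0,1),(0,1))$, so $\Pi_{\calZ^*}(\z)=\z^*$ and $\|\z-\z^*\|^2 = 2(x_1^2+y_1^2)$ for every $\z\in\calZ$.

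Next, lower-bound the supremum in \typetwo by choosing the single candidate $\z'=\z^*\in\calZ$. Then $\z-\z^*=(x_1,-x_1,y_1,-y_1)$ and a direct computation shows the bilinear cross terms cancel:
\begin{align*}
F(\z)^\top(\z-\z^*) = (2n x_1^{2n-1}-y_1)\, x_1 + (x_1+2n y_1^{2n-1})\, y_1 = 2n\bigl(x_1^{2n}+y_1^{2n}\bigr).
\end{align*}
Combining with the power-mean inequality $x_1^{2n}+y_1^{2n}\ge 2^{1-n}(x_1^2+y_1^2)^n$, I obtain
\begin{align*}
\sup_{\z'\in\calZ}\frac{F(\z)^\top(\z-\z')}{\|\z-\z'\|} \;\ge\; \frac{2n(x_1^{2n}+y_1^{2n})}{\sqrt{2(x_1^2+y_1^2)}} \;\ge\; n\cdot 2^{3/2-n}(x_1^2+y_1^2)^{(2n-1)/2} \;=\; \frac{n}{2^{2n-2}}\,\|\z-\z^*\|^{2n-1},
\end{align*}
which is exactly \typetwo with $\beta=2n-2$ and $C=n/2^{2n-2}$.

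The only delicate point is the choice $\z'=\z^*$: any other nearby competitor would leave the bilinear $-x_1 y_1$ term uncanceled, and in a neighborhood of $\z^*$ the higher-order terms $x_1^{2n},y_1^{2n}$ are dwarfed by $x_1 y_1$, so the naive bound $F(\z)^\top(\z-\z')\gtrsim \|\z-\z^*\|$ would fail. Picking $\z'=\z^*$ simultaneously kills the cross term and orients the residual into the \emph{pure} $2n$-th-power form $x_1^{2n}+y_1^{2n}$, which the power-mean inequality then converts into the desired $(2n-1)$-th power of the distance. The remaining manipulations are mechanical.
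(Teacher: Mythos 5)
Your proof is correct and follows essentially the same route as the paper's: identify the unique equilibrium $\z^*=((0,1),(0,1))$, lower-bound the supremum by the single choice $\z'=\z^*$, observe that the bilinear cross terms cancel in $F(\z)^\top(\z-\z^*)=2n(x_1^{2n}+y_1^{2n})$, apply the power-mean/Jensen inequality, and compare against $\|\z-\z^*\|=\sqrt{2(x_1^2+y_1^2)}$ to obtain \rsi with $\beta=2n-2$ and $C=n/2^{2n-2}$. You even arrive at the identical constant; the closing remark about why $\z'=\z^*$ is the only good competitor is not in the paper but is an accurate and useful piece of intuition.
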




With this general condition, we are now able to complete the loop.
For any value of $\beta$, we show the following last-iterate convergence guarantee for \alg. 

\modify{
\begin{theorem}
\label{thm: point convergence}
    For any $\eta\leq \frac{1}{8L}$, if \rsi holds with $\beta= 0$, then \alg guarantees linear last-iterate convergence:
    \begin{align}\label{eq:conv-bilinear}
         \dist(\z_t, \calZ^*) \leq 64\dist(\zp_1, \calZ^*)(1+C_5)^{-t};
    \end{align}
    on the other hand, if the condition holds with $\beta > 0$, then we have a slower convergence:
    \begin{align}\label{eq:conv-cvx}
         \dist(\z_t, \calZ^*) \leq 32\left[\left(1+4\left(\frac{4}{\beta}\right)^{\frac{1}{\beta}}\right)\dist(\zp_1, \calZ^*) + 2\left(\frac{2}{C_5\beta}\right)^{\frac{1}{\beta}}\right]t^{-\frac{1}{\beta}}, 
    \end{align}
    where $C_5\triangleq \min\left\{\frac{16\eta^2 C^2}{81}, \frac{1}{2}\right\}$.
\end{theorem}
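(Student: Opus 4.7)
The plan is to build on the recursion $\Theta_{t+1} \leq \Theta_t - \tfrac{15}{16}\zeta_t$ from the paragraph preceding \pref{lem: sufficient decrease}, where $\Theta_t = \dist(\zp_t, \calZ^*) + \tfrac{1}{16}\|\zp_t - \z_{t-1}\|^2$ and $\zeta_t = \|\zp_{t+1}-\z_t\|^2 + \|\z_t-\zp_t\|^2$. The first step is to apply \pref{eq:driving} with the supremum taken over $\z' \in \calZ$ on the right-hand side, and then invoke \rsi at $\z = \zp_{t+1}$ (the case $\zp_{t+1} \in \calZ^*$ being trivial). Squaring \rsi and using $\dist(\z, \calZ^*) = \|\z-\Pi_{\calZ^*}(\z)\|^2$, I would obtain
\begin{equation*}
    \zeta_t \;\geq\; \tfrac{32}{81}\eta^2 C^2 \,\dist(\zp_{t+1}, \calZ^*)^{\beta+1}.
\end{equation*}
Writing $c \triangleq \tfrac{32}{81}\eta^2 C^2$, this is the single sufficient-decrease inequality that drives the entire analysis.

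Next, I would split into two cases by comparing the two summands of $\Theta_{t+1}$. If $\dist(\zp_{t+1}, \calZ^*) \geq \tfrac{1}{2}\Theta_{t+1}$, the displayed bound gives $\zeta_t \geq \tfrac{c}{2^{\beta+1}}\Theta_{t+1}^{\beta+1}$, producing the polynomial recursion $\Theta_t - \Theta_{t+1} \geq \tfrac{15c}{16\cdot 2^{\beta+1}}\Theta_{t+1}^{\beta+1}$. Otherwise $\tfrac{1}{16}\|\zp_{t+1}-\z_t\|^2 > \tfrac{1}{2}\Theta_{t+1}$, so $\zeta_t > 8\Theta_{t+1}$, and combined with the main recursion this gives the geometric contraction $\Theta_{t+1} \leq \tfrac{2}{17}\Theta_t$.

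For $\beta = 0$, the first case already yields $\Theta_{t+1}(1 + \tfrac{15c}{32}) \leq \Theta_t$, and the second is an even stronger contraction, so both collapse to $\Theta_{t+1} \leq (1+C_5)^{-1}\Theta_t$ with $C_5 = \min\{\tfrac{16\eta^2 C^2}{81}, \tfrac{1}{2}\}$, giving \pref{eq:conv-bilinear}. For $\beta > 0$, I would track $b_t = \Theta_t^{-\beta}$: the mean value theorem applied to $x^{-\beta}$ combined with the first case gives $b_{t+1} - b_t \geq \beta\tfrac{15c}{16\cdot 2^{\beta+1}}(\Theta_{t+1}/\Theta_t)^{\beta+1}$, which is lower bounded by a problem-dependent constant whenever we are not in the second (contracting) case; in the contracting case, $b_{t+1} \geq (17/2)^\beta b_t$ trivially implies $b_{t+1} - b_t \geq ((17/2)^\beta - 1)\,b_1$. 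In both cases $b_{t+1} - b_t$ is bounded below by a fixed positive constant, so $b_t$ grows at least linearly in $t$, yielding $\Theta_t = O(t^{-1/\beta})$ with the explicit constant of \pref{eq:conv-cvx}.

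Finally, to pass from $\Theta_t$ to $\dist(\z_t, \calZ^*)$, I use the triangle inequality $\dist(\z_t, \calZ^*) \leq \|\z_t - \Pi_{\calZ^*}(\zp_t)\|^2 \leq 2\|\z_t - \zp_t\|^2 + 2\dist(\zp_t, \calZ^*)$, together with $\|\z_t-\zp_t\|^2 \leq \zeta_t \leq \tfrac{16}{15}(\Theta_t-\Theta_{t+1}) \leq \tfrac{16}{15}\Theta_t$, giving $\dist(\z_t, \calZ^*) = O(\Theta_t)$ and hence the leading constants in the theorem. The main obstacle is the $\beta > 0$ analysis, where the two cases (polynomial vs.\ geometric) have different flavors and one must combine them so as to guarantee a \emph{uniform} positive per-step increment of $b_t$; once this is in place, the $\beta = 0$ case and the projection-based conversion from $\zp_t$ to $\z_t$ are essentially bookkeeping.
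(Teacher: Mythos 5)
Your high-level strategy matches the paper's: start from the Lyapunov quantity $\Theta_t = \dist(\zp_t, \calZ^*) + \tfrac{1}{16}\|\zp_t-\z_{t-1}\|^2$ and the recursion $\Theta_{t+1} \leq \Theta_t - \tfrac{15}{16}\zeta_t$ of \pref{eq: ogda recurr}, feed \pref{lem: sufficient decrease} together with \rsi into a lower bound on $\zeta_t$ in terms of $\dist(\zp_{t+1},\calZ^*)^{\beta+1}$, and then convert $\Theta_t$ to $\dist(\z_t,\calZ^*)$ via the triangle inequality and $\|\z_t-\zp_t\|^2 \leq \zeta_t \leq \tfrac{16}{15}\Theta_t$. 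The $\beta=0$ branch is fine (both your cases give a fixed geometric contraction, modulo minor differences in the constant you would get).

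The $\beta>0$ branch, however, has a genuine gap. Your case split is on whether $\dist(\zp_{t+1},\calZ^*)$ or $\tfrac{1}{16}\|\zp_{t+1}-\z_t\|^2$ contributes at least half of $\Theta_{t+1}$; this controls the relative sizes of the two summands of $\Theta_{t+1}$, but says nothing about the ratio $\Theta_{t+1}/\Theta_t$ between consecutive iterates. You assert that the MVT estimate $b_{t+1}-b_t \geq \beta q\, (\Theta_{t+1}/\Theta_t)^{\beta+1}$ is ``lower bounded by a problem-dependent constant whenever we are not in the second (contracting) case,'' but that is false: nothing stops $\Theta_{t+1}$ from being arbitrarily small relative to $\Theta_t$ while Case 1 still holds, so $(\Theta_{t+1}/\Theta_t)^{\beta+1}$ can degenerate to $0$. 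To make your argument go through you need a third case on the ratio itself: if $\Theta_{t+1}/\Theta_t < 1/2$ then $b_{t+1}-b_t \geq (2^\beta-1)\Theta_t^{-\beta} \geq (2^\beta-1)\Theta_1^{-\beta}$ directly, and if $\Theta_{t+1}/\Theta_t \geq 1/2$ you may use the MVT bound with $(\Theta_{t+1}/\Theta_t)^{\beta+1}\geq 2^{-(\beta+1)}$. Once you add this the scheme works, though the resulting constants will not match \pref{eq:conv-cvx} without further bookkeeping.

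For contrast, the paper sidesteps case analysis entirely: it retains half of $\zeta_t$ for the $\|\zp_{t+1}-\z_t\|^2$ summand and only spends the other half on \rsi (hence $\tfrac{16}{81}$ rather than your $\tfrac{32}{81}$), uses the a-priori bound $\|\zp_{t+1}-\z_t\|^2 \leq \tfrac{16}{15}\Theta_1$ to raise the exponent of that term from $1$ to $\beta+1$, and then applies the elementary \Holder step $(a^{\beta+1}+b^{\beta+1})\,2^\beta \geq (a+b)^{\beta+1}$ to obtain a single uniform inequality $\zeta_t \geq C'\Theta_{t+1}^{\beta+1}$. This feeds into the auxiliary \pref{lem: recursion lemma p > 1}, whose hypothesis $q(1+p)B_1^p\leq 1$ is verified using the explicit form of $C'$, and that is where the theorem's displayed constants come from. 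Your $b_t=\Theta_t^{-\beta}$ telescoping is a reasonable elementary substitute for \pref{lem: recursion lemma p > 1}, but the missing ratio case is the step that the paper's homogenization-plus-\Holder argument is specifically designed to avoid.
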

}

We defer the proof to \pref{app:point convergence} and make several remarks.
First, note that based on a convergence result on $\dist(\z_t, \calZ^*)$, one can immediately obtain a convergence guarantee for the duality gap $\alpha_f(\z_t)$ as long as $f$ is also Lipschitz.
This is because $\alpha_f(\z_t) \leq \max_{\x', \y'} f(\x_t,\y') - f(\x^*, \y') + f(\x', \y^*) - f(\x', \y_t)\leq \order(\|\x_t-\x^*\| + \|\y_t-\y^*\|) = \order\Big(\sqrt{\dist(\z_t, \calZ^*)}\Big)$, where $(\x^*, \y^*) = \Pi_{\calZ^*}(\z_t)$.
While this leads to stronger guarantees compared to \pref{eq: duality gap bound 1}, we emphasize that the latter holds even without the \rsi condition.

Second, our results significantly generalize~\citep[Theorem~2]{hsieh2019convergence} which itself is a consolidated version of several earlier works and also shows a linear convergence rate of \alg under a condition stronger than our \typetwo with $\beta=0$ as discussed earlier.
More specifically, our results show that linear convergence holds for a much broader set of problems. 
Furthermore, we also show slower sublinear convergence rates for any value of $\beta>0$, which is also new as far as we know.
In particular, we empirically verify that \alg indeed does not converge exponentially fast for the toy example defined in~\pref{thm: beta ge 0} (see \pref{app:experiment}).

Last but not least, the most significant implication of \pref{thm: point convergence} is that it provides by far the most general linear convergence result for \alg for the classic two-player matrix games, or more generally bilinear games with polytope constraints, according to \pref{thm: bilinear-polytope} and \pref{eq:conv-bilinear}.
Compared to recent works of~\citep{daskalakis2018limit, daskalakis2019last} for matrix games (on \alg or \OMWU), our result is considerably stronger:
1) we do not require a unique equilibrium while they do;
2) linear convergence holds for any initial points $\zp_1$, while their result only holds if the initial points are in a small neighborhood of the unique equilibrium (otherwise the convergence is sublinear initially);
3) our only requirement on the step size is $\eta\le \frac{1}{8L}$,\footnote{\modify{In fact, any $\eta < \frac{1}{2L}$ is enough to achieve linear convergence rate for \alg, as one can verify by going over our proof. 
We use $\eta\le \frac{1}{8L}$ simply for consistency with the results for \OMWU
(where $\eta$ cannot be set any larger due to technical reasons).}} while they require an exponentially small $\eta$, which does not reflect the behavior of the algorithms in practice.
Even compared with our result in \pref{sec: omwu convergence}, we see that for \alg, the unique equilibrium assumption is not required, and we do not have an initial phase of sublinear convergence as in \pref{lem: KL sufficient decrease}. 
In \pref{app:experiment}, we empirically show that \alg often outperforms \OMWU when both are tuned with a constant learning rate.

One may wonder what happens if a bilinear game has a non-polytope constraint.
It turns out that in this case, \typeone may only hold with $\beta > 0$, due to the following example showing that linear convergence provably does not hold for \alg when the feasible set has a curved boundary.
\begin{theorem}\label{thm: bilinear-general}
	There exists a bilinear game with a non-polytope feasible set such that \modify{\rsi holds with $\beta=3$}, and $\dist(\z_t, \calZ^*)= \Omega(1/t^2)$ holds for \alg. 
\end{theorem}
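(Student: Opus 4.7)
Take the unit disk $\calX = \calY = \{\bu\in\mathbb{R}^2 : \|\bu\|\le 1\}$ as the non-polytope feasible set, and choose an affine-bilinear objective $f(\x,\y) = \x^\top G\y + \vc a^\top \x + \vc b^\top \y$ with coefficients $(G,\vc a,\vc b)$ tuned so that (i) the saddle point $\z^*$ is unique and lies on the boundary of $\calZ$, (ii) $-F(\z^*)$ is parallel to the outward normal to $\calZ$ at $\z^*$ (so the constraint is active), and (iii) along the curved boundary, the leading low-order tangential components of $F(\z)-F(\z^*)$ cancel against the curvature corrections of the disk up to order $s^3$, where $s$ denotes arclength from $\z^*$. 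Finding such coefficients is a matter of matching Taylor coefficients; roughly four equations must be imposed on the free parameters of $(G,\vc a,\vc b)$ to kill the $s,s^2,s^3$ tangential terms.

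\textbf{Verifying \rsi with $\beta=3$.} Use the standard identification
\[
\sup_{\z'\in\calZ}\frac{F(\z)^\top(\z-\z')}{\|\z-\z'\|} \;=\; \bigl\|P_{T_\z\calZ}\bigl(-F(\z)\bigr)\bigr\|,
\]
where $P_{T_\z\calZ}$ denotes projection onto the tangent cone of $\calZ$ at $\z$. Parameterize $\z = \z(s)$ on the boundary by arclength $s$ from $\z^*$, decompose $-F(\z)$ into its outward-normal and tangential components at $\z$, and observe that for small $s$ the normal coefficient stays strictly positive and hence is killed by the tangent-cone projection. By the cancellations built into the construction, the remaining tangential component has magnitude $\Theta(s^4)$, while $\|\z(s)-\z^*\|=\Theta(s)$, so $\bigl\|P_{T_\z\calZ}(-F(\z))\bigr\| = \Theta(\|\z-\z^*\|^4)$, which establishes \rsi with $\beta=3$.

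\textbf{Lower bound and main obstacle.} Initialize \alg at a boundary point $\zp_1$ at arclength $\theta_1$ from $\z^*$ and argue by induction that both $\zp_t$ and $\z_t$ remain on the boundary inside a small arc around $\z^*$. Since the Taylor computation above also gives the matching upper bound $\bigl\|P_{T_\z\calZ}(-F(\z))\bigr\| \leq C s^4$, each \alg update displaces the iterate tangentially by at most $O(\eta\theta_t^4)$, yielding a recurrence of the form $\theta_{t+1}\geq \theta_t - C'\eta\,\theta_t^4$. Solving gives $\theta_t = \Omega(t^{-1/3})$, and thus $\dist(\z_t,\calZ^*) \asymp \theta_t^2 = \Omega(t^{-2/3}) = \Omega(1/t^2)$. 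The main obstacle is precisely this last step: one has to track the four coupled iterates $(\xp_t,\x_t,\yp_t,\y_t)$ through the nonlinear radial projection onto the disk \emph{and} through the optimistic correction $-\eta F(\z_{t-1})$, expanding to a high enough order to justify the $\Theta(s^4)$ two-sided bound used both for \rsi and for bounding per-step progress, and to maintain the inductive invariant that $\theta_t$ stays small so the expansions remain valid throughout.
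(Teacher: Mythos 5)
The paper's construction is quite different from yours, and your sketch has gaps that I do not think can be patched as stated. The paper takes the bilinear game $f(\x,\y)=\x^\top\G\y$ with $\G=\left(\begin{smallmatrix}0&-1\\1&0\end{smallmatrix}\right)$ and the \emph{parabolic} feasible sets $\calX=\calY=\{\bu:\, 0\le u_1\le \tfrac12,\ 0\le u_2\le\tfrac14,\ u_2\ge u_1^2\}$. Because $\G$ is skew-symmetric and the initialization is symmetric, the paper's Claim 2 shows $\x_t=\y_t$, $\xp_t=\yp_t$, and all iterates stay on the curve $\{u_2=u_1^2\}$; Claims 3--5 then establish the one-dimensional recurrence $\xpp_{t+1,1}\ge \xpp_{t,1}-4\eta\xpp_{t,1}^2$ (essentially $\theta_{t+1}\gtrsim \theta_t - c\eta\theta_t^2$), which gives $\|\z_t-\z^*\|=\Omega(1/t)$ and hence $\dist(\z_t,\calZ^*)=\Omega(1/t^2)$; Claim 6 verifies \rsi with $\beta=3$ by picking explicit test points $\z'$, not by a tangent-cone expansion. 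Your route (disk, affine-bilinear objective, tangent-cone projection, order-$s^4$ cancellation) is genuinely different, but I do not believe it works.

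The central problem is that your conditions (ii) and (iii) are in tension with the bilinear/saddle-point structure. Along a symmetric boundary trajectory $\x=\y=\bu(\phi)=(\cos\phi,\sin\phi)$ (the natural analogue of the paper's invariant curve, which forces $\G$ skew-symmetric, $\G=\left(\begin{smallmatrix}0&-g\\g&0\end{smallmatrix}\right)$, and $\vb=-\va$), a short computation gives the tangential component of $-\nabla_\x f$ at $\bu(\phi)$ in closed form as $-\lambda\sin\phi - g(1-\cos\phi)$, where $\lambda\ge 0$ is the magnitude of the outward normal component of $-F$ at $\z^*$. If the constraint is strictly active ($\lambda>0$, as your (ii) suggests), the leading term is $-\lambda\phi$, i.e.\ $\Theta(s)$; no choice of $(\G,\va,\vb)$ can remove it. If instead $\lambda=0$, the leading term is $-g\phi^2/2=\Theta(s^2)$, never $\Theta(s^4)$ unless $g=0$ (trivial game). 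So the claimed four cancellations are not achievable, and the ``roughly four equations'' you would need to impose are overdetermined once the skew-symmetric block structure of $F(\z)=(\G\y+\va,\,-\G^\top\x-\vb)$ is taken into account. The asymmetric (full $4$-dimensional) case does not obviously help either: the first-order tangential component there is governed by the matrix $\left(\begin{smallmatrix}-\lambda_x&-c\\c&-\lambda_y\end{smallmatrix}\right)$ with $c=(\x^*)^\perp\cdot\G(\y^*)^\perp$, so killing the $O(s)$ term again forces $\lambda_x=\lambda_y=0$ and $c=0$, and one would then have to redo the entire analysis without a one-dimensional invariant curve.

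Two further gaps. First, the inductive argument that the \alg iterates stay on the boundary curve and the per-step tangential displacement is controlled is the hardest part of the proof (it occupies the paper's Claims 2--5, tracking $\xp_t,\x_t$ through the projection at high order), and you have only stated it as an obstacle rather than solved it. Second, even your order-of-magnitude reasoning is off: \rsi is only a \emph{lower} bound, so you do not need (and cannot get) $\Theta(s^4)$ for $\beta=3$; a $\Theta(s^2)$ tangential residual already implies \rsi with $\beta=1$ and therefore with $\beta=3$, and it gives the recurrence $\theta_{t+1}\ge\theta_t - c\eta\theta_t^2$, hence $\theta_t=\Omega(1/t)$ and $\dist(\z_t,\calZ^*)=\Omega(1/t^2)$, which is exactly what the theorem asserts. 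So the real target is the $\Theta(s^2)$ behavior, and the paper's parabolic boundary (where the outward normal itself degenerates like $u_1$ as $u_1\to 0$) realizes it cleanly, whereas on the disk you would need the degenerate case $\lambda=0$, which your condition (ii) appears to exclude.
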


This example indicates that the shape of the feasible set plays an important role in last-iterate convergence, which may be an interesting future direction to investigate,
This is also verified empirically in our experiments (see \pref{app:experiment}).

\section[Experiments]{Experiments for Matrix Games} 
\label{sec:experiment}

In this section, we provide empirical results on the performance of \alg and \OMWU for matrix games on probability simplex.\footnote{%
Note that in this case the projection step of \alg can be implemented efficiently in $O(M\ln M + N\ln N)$ time~\citep{wang2013projection}.
} 
We include more empirical results in other settings in \pref{app:experiment}.
We set the size of the game matrix to be $32\times 32$, then generate a random matrix with each entry $G_{ij}$ drawn uniformly at random from $[-1,1]$, and finally rescale its operator norm to $1$.
With probability $1$, the game has a unique Nash Equilibrium \citep{daskalakis2019last}.

\begin{figure}
	\centering
	\includegraphics[width=.7\textwidth,trim={2cm 5cm 3cm 7cm}]{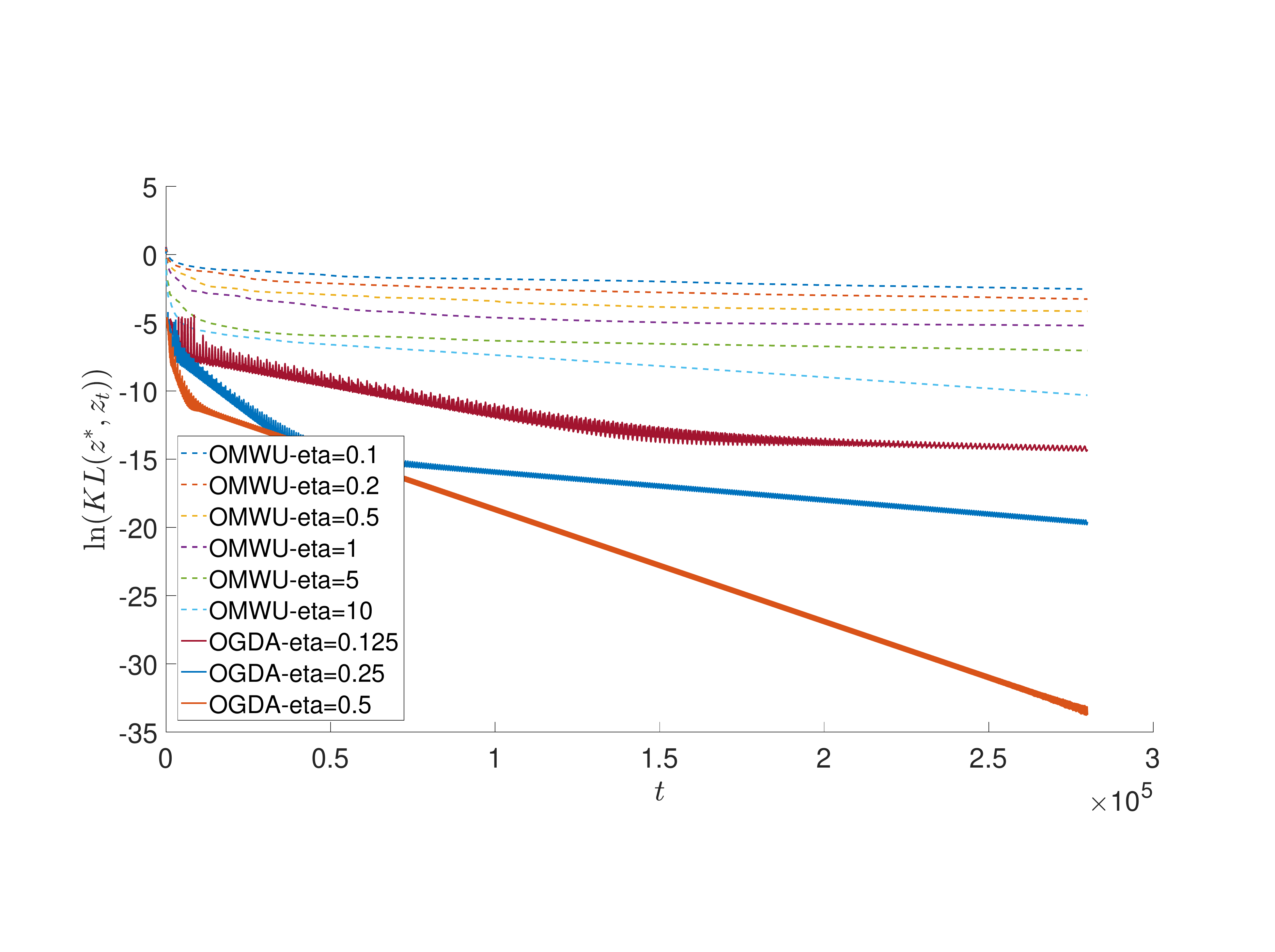}
	\caption{Experiments of \alg and \OMWU with different learning rates for a matrix game $f(\x,\y)=\x^\top \G\y$. 
	``\alg/\OMWU-eta=$\eta$'' represents the curve of \alg/\OMWU with learning rate $\eta$. The configuration order in the legend is consistent with the order of the curves. 
	For \OMWU, $\eta\geq11$ makes the algorithm diverge. 
	The plot confirms the linear convergence of \OMWU and \alg, although \alg is generally observed to converge faster than \OMWU. 
	}
	\label{fig:MGsimplex}
\end{figure}

We compare the performances of \alg and \OMWU. For both algorithms, we choose a series of different learning rates and compare their performances, as shown in \pref{fig:MGsimplex}. 
The $x$-axis represents time step $t$, and 
the $y$-axis represents $\ln(\KL(\z^*, \z_t))$ 
(we observe similar results using $\dist(\z^*, \z_t)$ or the duality gap as the measure;
see \pref{app:a1}).
Note that here we approximate $\z^*$ by running \alg for much more iterations and taking the very last iterate.
We also verify that the iterates of \OMWU converge to the same point as \alg.

From \pref{fig:MGsimplex}, we see that all curves eventually become a straight line, supporting our linear convergence results.
Generally, the slope of the straight line is larger for a larger learning rate $\eta$.
However, the algorithm diverges when $\eta$ exceeds some value (such as $11$ for the case of \OMWU). 
Comparing \OMWU and \alg, we see that \alg converges faster, which is also consistent with our theory if one compares the bounds in \pref{thm: point convergence omwu} and \pref{thm: point convergence} (with the value of the constants revealed in the proofs).
We find this observation interesting, since \OMWU is usually considered more favorable for problems defined over the simplex, especially in terms of regret minimization.
Our experiments suggest that, however, in terms of last-iterate convergence, \alg might perform even better than \OMWU.







\subsubsection*{Acknowledgments}
\major{The authors would like to thank the anonymous reviewers for providing highly constructive comments which bring about significant improvement of the result during the rebuttal phase.} CL would like to thank Yu-Guan Hsieh for many helpful discussions on error bounds and metric subregularity. The authors are supported by NSF Awards IIS-1755781 and IIS-1943607. 

\bibliography{ref}
\bibliographystyle{iclr2021_conference}

\newpage
\appendix

\section[Experiments]{More Experiment Results} 
\label{app:experiment}

\subsection{More empirical results for Matrix Games}\label{app:a1}
Here, we provide more plots for the same matrix game experiment described in \pref{sec:experiment}.
Specifically, the left plot in \pref{fig:MGsimplexL2} shows the convergence with respect to $\ln\|\z_t - \z^*\|$, while the right plot shows the convergence with respect to the logarithm of the duality gap $\ln (\alpha_f(\z_t))=\ln \left(\max_j( \G^\top \x_t)_j-\min_i(\G\y_t)_i\right)$.
One can see that the plots are very similar to those in \pref{fig:MGsimplex}.

\begin{figure}[h]
\centering
\includegraphics[width=0.49\textwidth]{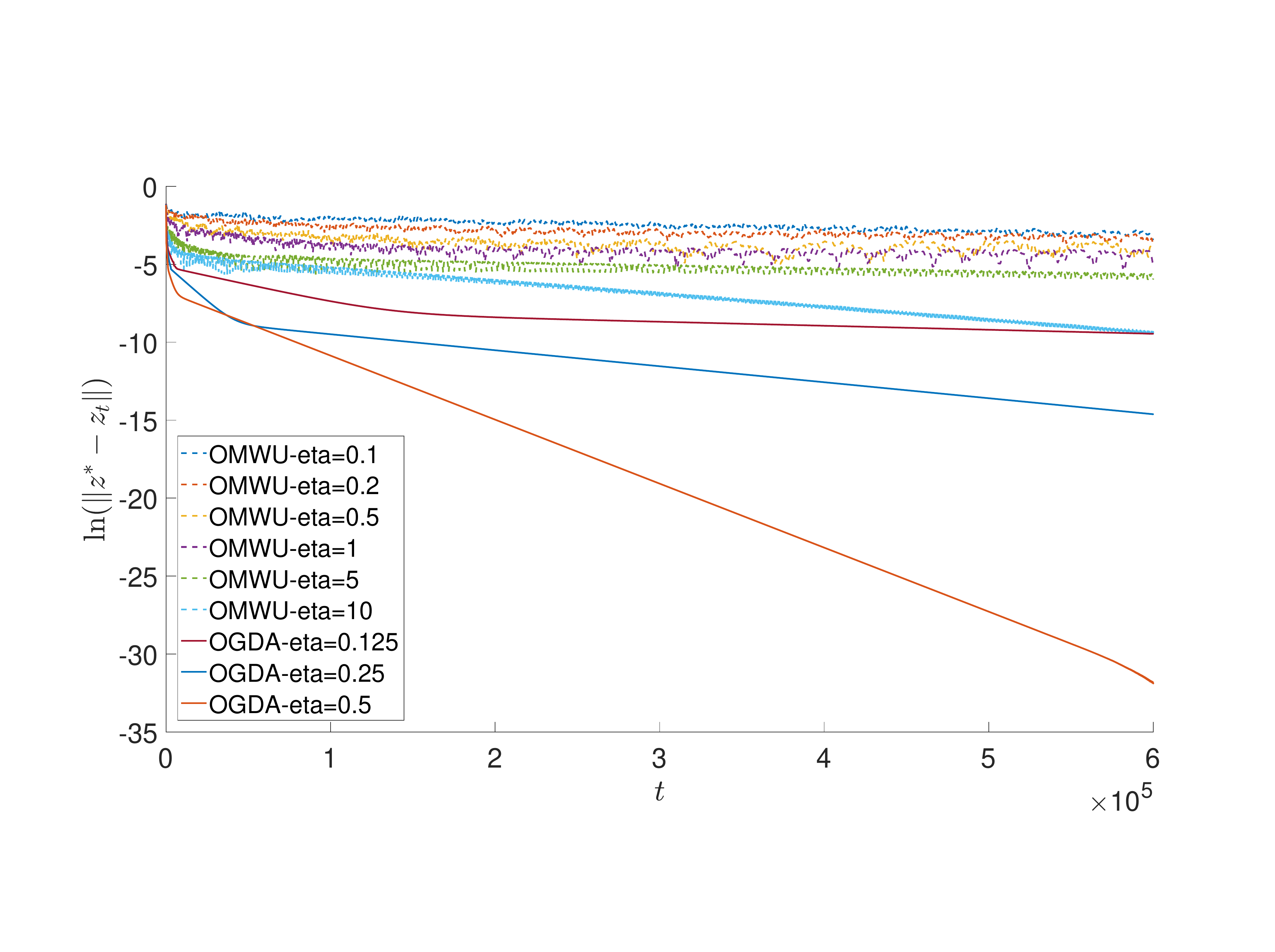}
\includegraphics[width=0.49\textwidth]{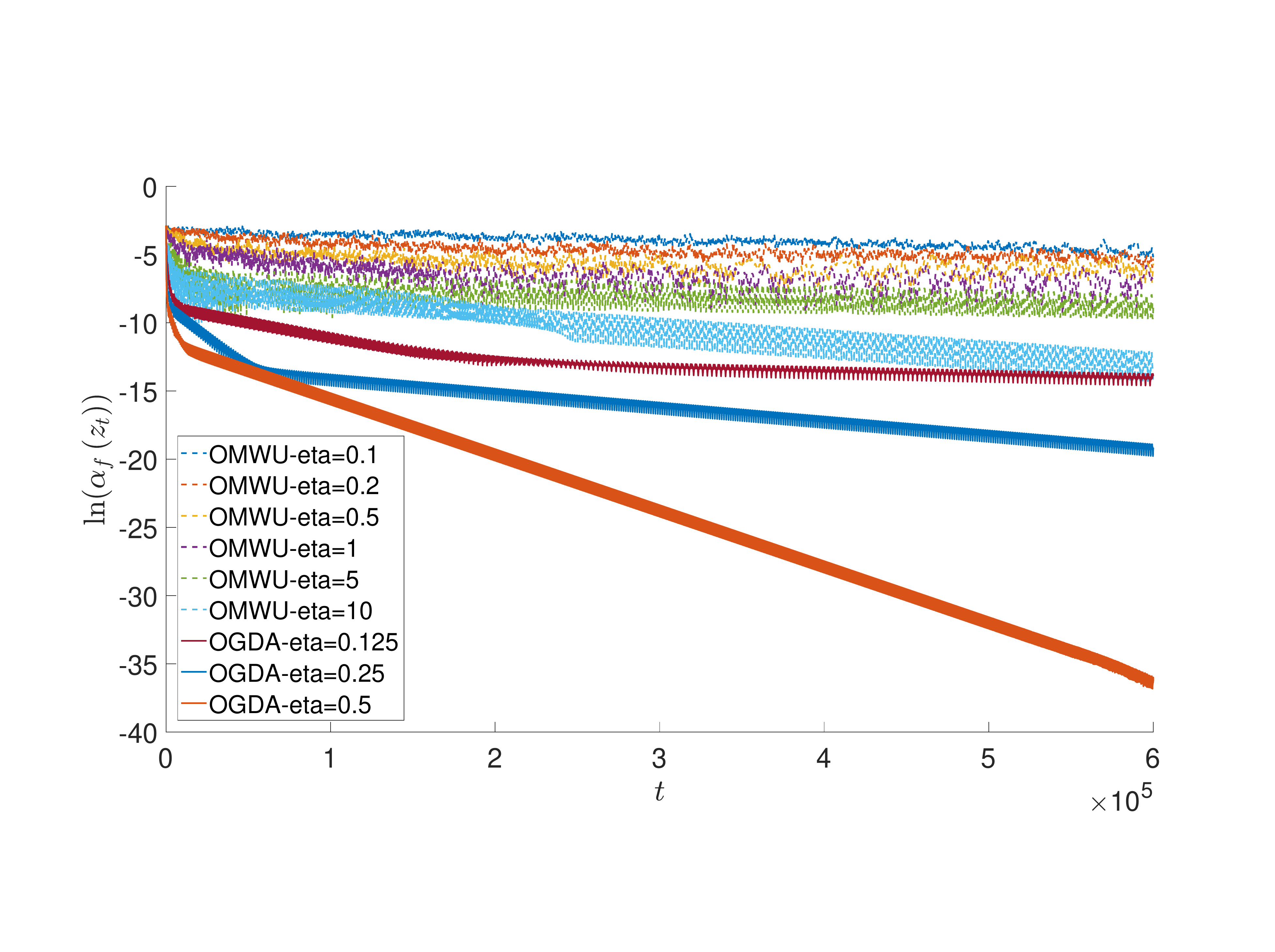}
\caption{Experiments of \alg and \OMWU with different learning rates on a matrix game $f(\x,\y)=\x^\top \G\y$, where we generate $\G\in \mathbb{R}^{32\times 32}$ with each entry $G_{ij}$ drawn uniformly at random from $[-1,1]$ and then rescale $\G$'s operator norm to $1$. ``\alg/\OMWU-eta=$\eta$'' represents the curve of \alg/\OMWU with learning rate $\eta$. 
The configuration order in the legend is consistent with the order of the curves. 
	For \OMWU, $\eta\geq11$ makes the algorithm diverge. 
	The plot confirms the linear convergence of \OMWU and \alg, although \alg is generally observed to converge faster than \OMWU. 
}
\label{fig:MGsimplexL2}
\end{figure}

\subsection{Matrix Game on Curved Regions}
Next, we conduct experiments on a bilinear game similar to the one constructed in the proof of \pref{thm: bilinear-general}.
Specifically, the bilinear game is defined by \begin{align*}
f(\x,\y)={x_2}{y_1}-{x_1}{y_2},\quad\mathcal{X}=\mathcal{Y}\triangleq\{(a,b),0\le a \le \tfrac{1}{2}, 0\le b\le \tfrac{1}{2^n},~a^n\le b\}.
\end{align*}
For any positive integer $n$, the equilibrium point of this game is $(0,0)$ for both $\x$ and $\y$.
Note that in \pref{thm: bilinear-general}, we prove that \alg only converges at a rate no better than $\Omega(1/t^2)$ in this game when $n=2$.

\pref{fig:MGCurved} shows the empirical results for various values of $n$.
In this figure, we plot $\|\z_t-\z^*\|$ versus time step $t$ in log-log scale.
Note that in a log-log plot, a straight line with slope $s$ implies a convergence rate of order $\mathcal{O}(t^s)$, that is, a sublinear convergence rate.
It is clear from \pref{fig:MGCurved} that \alg indeed converges sublinearly for all $n$,
supporting our \pref{thm: bilinear-general}.

\begin{figure}
    \centering
	\includegraphics[width=0.7\textwidth]{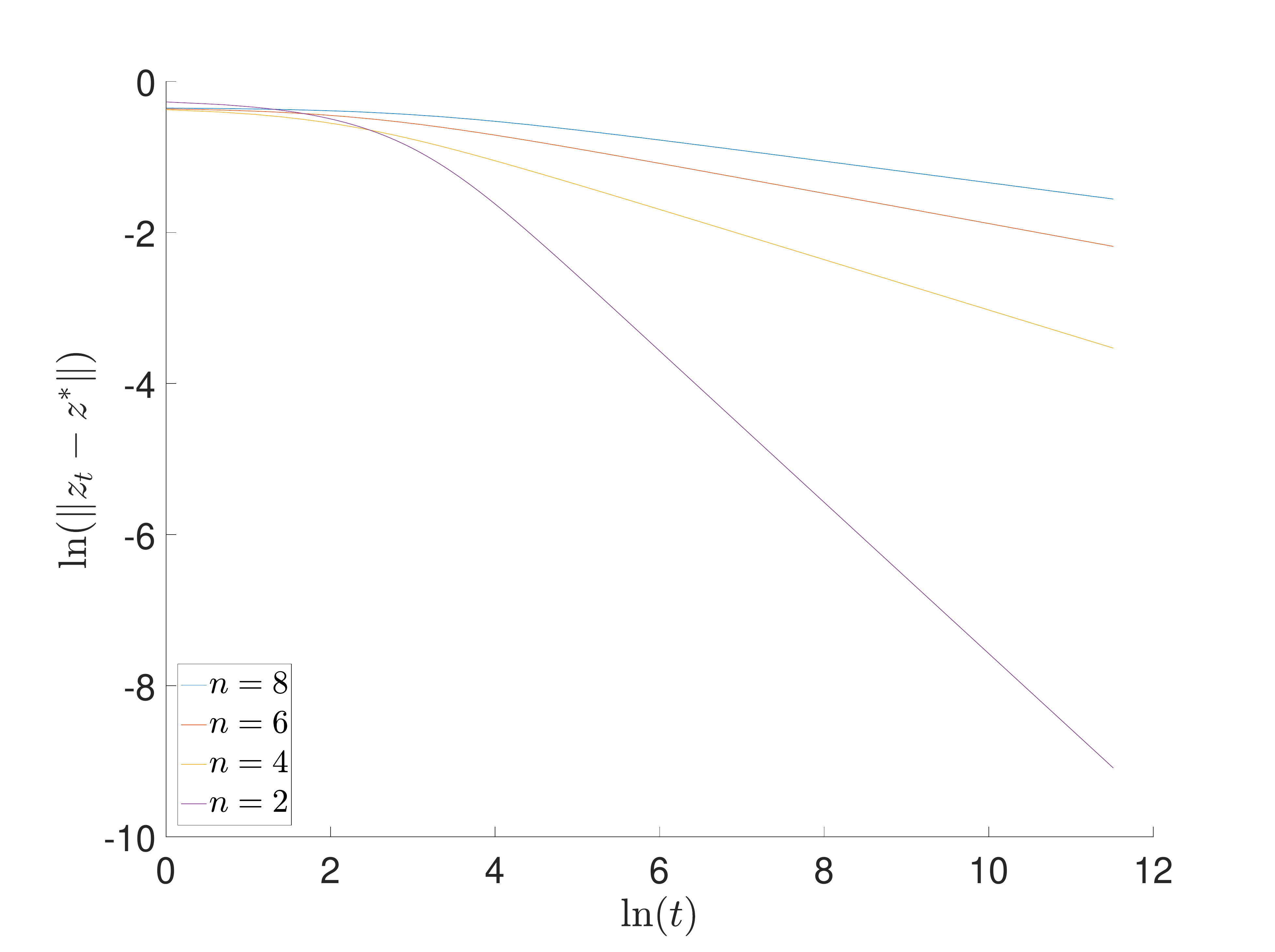}\caption{Experiments of \alg on matrix games with curved regions where $f(\x,\y)={x_2}{y_1}-{x_1}{y_2},\quad\mathcal{X}=\mathcal{Y}\triangleq\{(a,b),0\le a\le \frac{1}{2}, 0\le b\le \frac{1}{2^n},~a^n\le b\}$, and $n=2,4,6,8$. This figure is a log-log plot of $\|\z_t-\z^*\|$ versus $t$, and it indicates sublinear convergence rates of \alg in all these games.} 
	\label{fig:MGCurved}
\end{figure}

\begin{figure}
    \centering
	\includegraphics[width=0.7\textwidth]{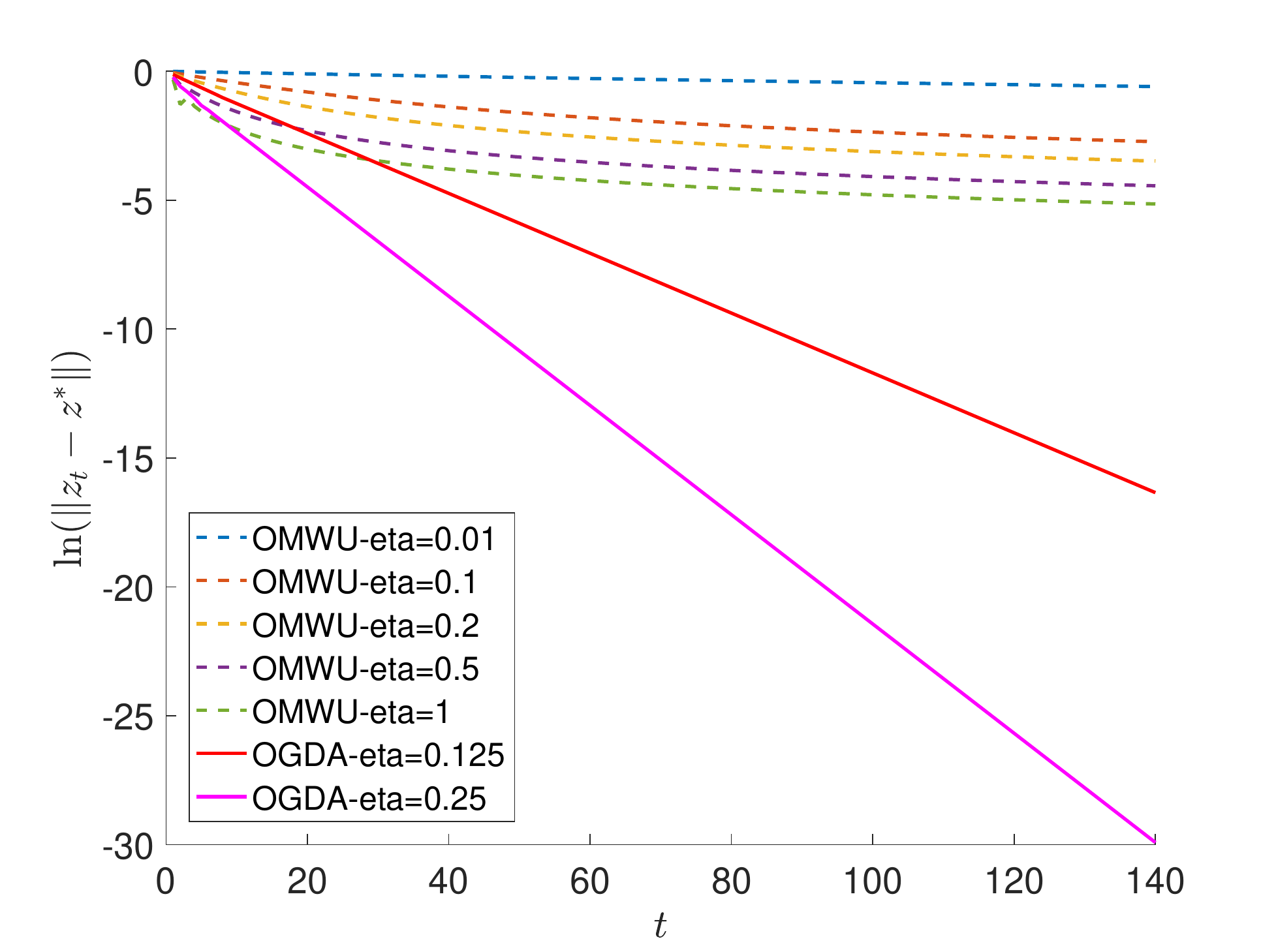}\caption{Experiments on a strongly-convex-strongly-concave game where $f(\x,\y)=x_1^2-y_1^2+2x_1y_1$ and $\mathcal{X}=\mathcal{Y}\triangleq\{(a,b),0\le a,b\le 1,~a+b=1\}$. The figure is showing $\ln\|\z_t-\z^*\|$ versus the time step $t$. The result shows that  \alg enjoys linear convergence and outperforms \OMWU in this case. }
	\label{fig:strongly}
\end{figure}

\subsection{Strongly-convex-strongly-concave Games}
In this section, we use the same experiment setup for strongly-convex-strongly-concave games in \citep{Lei2020Last}, where
\begin{align*}
f(\x,\y)=x_1^2-y_1^2+2x_1y_1,\quad\text{and}\quad\mathcal{X}=\mathcal{Y}\triangleq\{(a,b),0\le a,b\le 1,~a+b=1\}.
\end{align*}
The equilibrium point is $(0,1)$ for both $\x$ and $\y$. In \pref{fig:strongly}, we present the log plot of $\|\z_t-\z^*\|$ versus time step $t$ and compare \alg with  \OMWU using different learning rates as in \pref{app:a1}. 
The straight line of \alg implies that \alg algorithm converges exponentially fast, supporting \pref{thm: strongly convex} and \pref{thm: point convergence}.
Also note that here, \alg outperforms \OMWU, which is different from the empirical results shown in \citep{Lei2020Last}.
We hypothesize that this is because they use a different version of \alg.

\subsection{An Example with $\beta > 0$ for \typetwo}
We also consider the toy example in \pref{thm: beta ge 0}, where 
$f(\x,\y)=x_1^{2n}-x_1y_1-y_1^{2n}$ for some integer $ n \ge 2$
and $\mathcal{X}=\mathcal{Y}\triangleq\{(a,b),0\le a,b\le 1,~a+b=1\}$.
The equilibrium point is $(0,1)$ for both $\x$ and $\y$. We prove in \pref{thm: beta ge 0} that \typetwo does not hold for $\beta = 0$ but does hold for $\beta=2n-2$. 

The point-wise convergence result is shown in~\pref{fig:Nonstrongly}, which is again a log-log plot of $\|\z_t-\z^*\|$ versus time step $t$. One can observe that the convergence rate of \alg is sublinear, supporting our theory again. 
\begin{figure}
    \centering
    \includegraphics[width=0.7\textwidth]{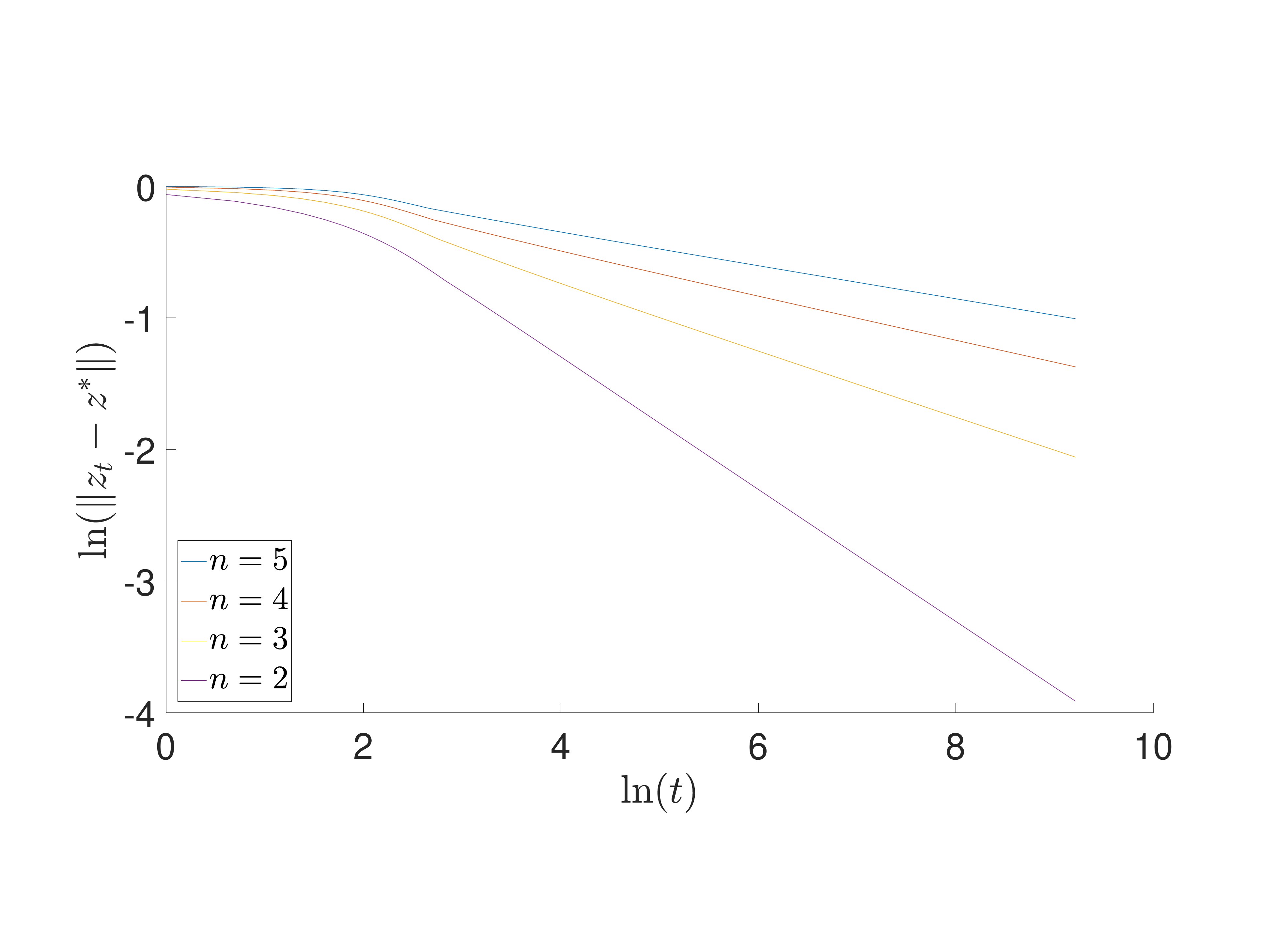}\caption{Experiments of \alg on a set of games satisfying \typetwo with $\beta > 0$, where $f(\x,\y)=x_1^{2n}-x_1y_1-y_1^{2n}$ for some integer $ n \ge 2$
and $\mathcal{X}=\mathcal{Y}\triangleq\{(a,b),0\le a,b\le 1,~a+b=1\}$. The result shows that \alg converges to the Nash equilibrium with sublinear rates in these instances. 
        }
        \label{fig:Nonstrongly}
\end{figure}

\modify{
\subsection{Matrix Games with Multiple Nash Equilibria}

Finally, we provide empirical results for \alg and \OMWU in matrix games with multiple Nash equilibria, even though theoretically we only prove linear convergence results for \OMWU assuming that the Nash equilibrium is unique.
We consider the following game matrix
$$
G = \begin{bmatrix}
	0 & -1 & 1 & 0 & 0 \\
	1 & 0 & -1 & 0 & 0 \\
	-1 & 1 & 0 & 0 & 0 \\
	-1 & 1 & 0 & 2 & -1\\
	-1 & 1 & 0 & -1 & 2
\end{bmatrix}.
$$

The value of $G$ is $0$. To verify this, consider $\x_0=\y_0=\begin{bmatrix} \frac{1}{3} & \frac{1}{3} & \frac{1}{3} & 0 & 0 \end{bmatrix}$. Then we have for $\max_{\y\in \Delta_5}\x_0^\top G\y = \min_{\x\in \Delta_5}\x^\top G\y_0 = 0$. Direct calculation gives the following set of Nash equilibria.
\begin{align*}
	\mathcal{X}^* &= \left\{\x_0 \right\},\\
	\mathcal{Y}^* &= \left\{\y\in \Delta_5: y_1=y_2=y_3;\;\; \frac{1}{2}y_5\leq y_4\leq 2y_5\right\}.
\end{align*}

\pref{fig:MGContinuumNE} shows the point-wise convergence result. $\Pi_{\mathcal{Z}^*}(z_t)$ is the projection of $z_t$ on the set of Nash qquilibria. One can observe from the plots that both \alg and \OMWU achieve linear convergence rate in this example. We thus conjecture that the uniqueness assumption for \pref{thm: point convergence omwu} can be further relaxed. 

\begin{figure}[h]
\centering
\includegraphics[width=0.6\textwidth]{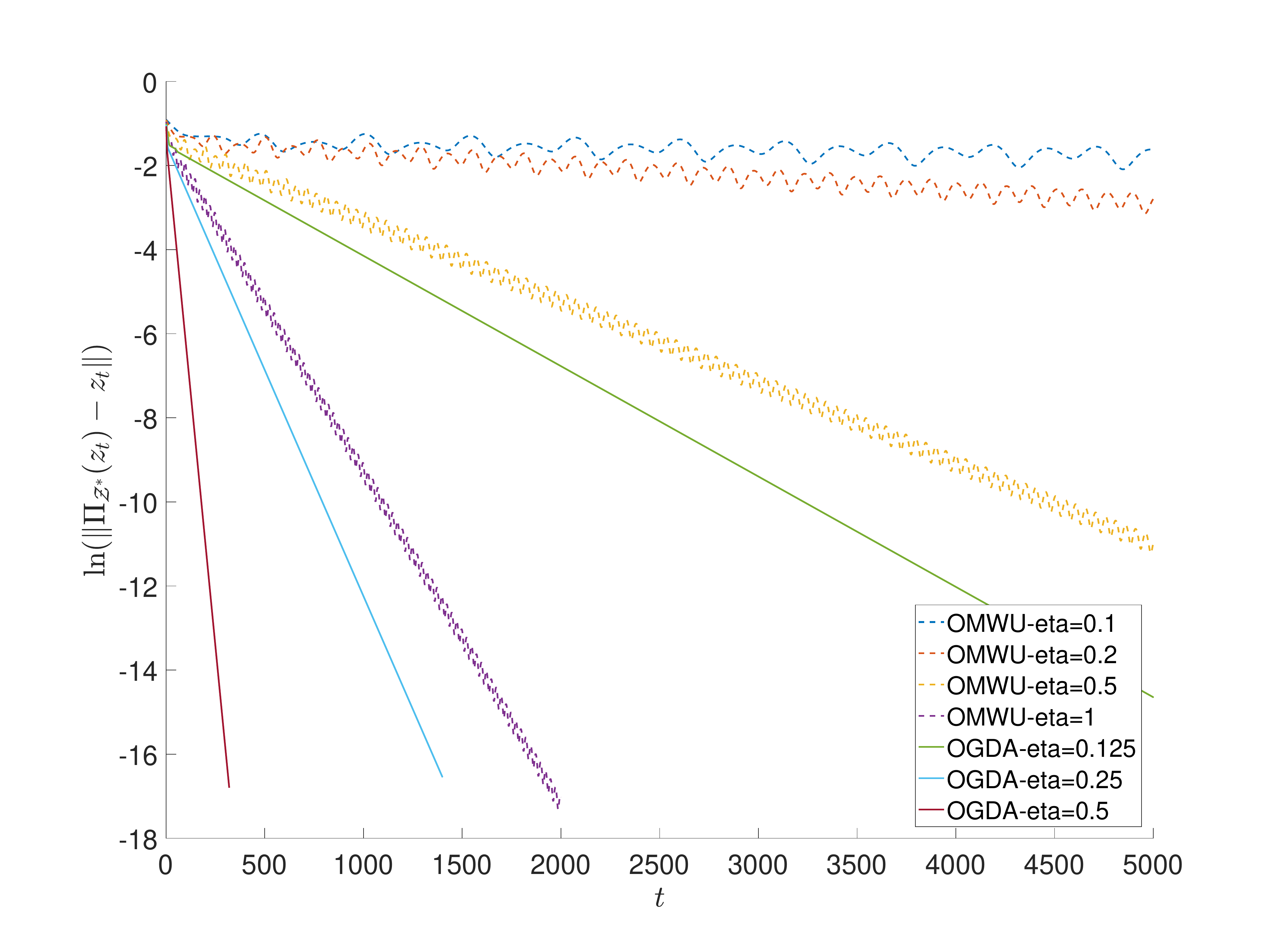}
\includegraphics[width=0.6\textwidth]{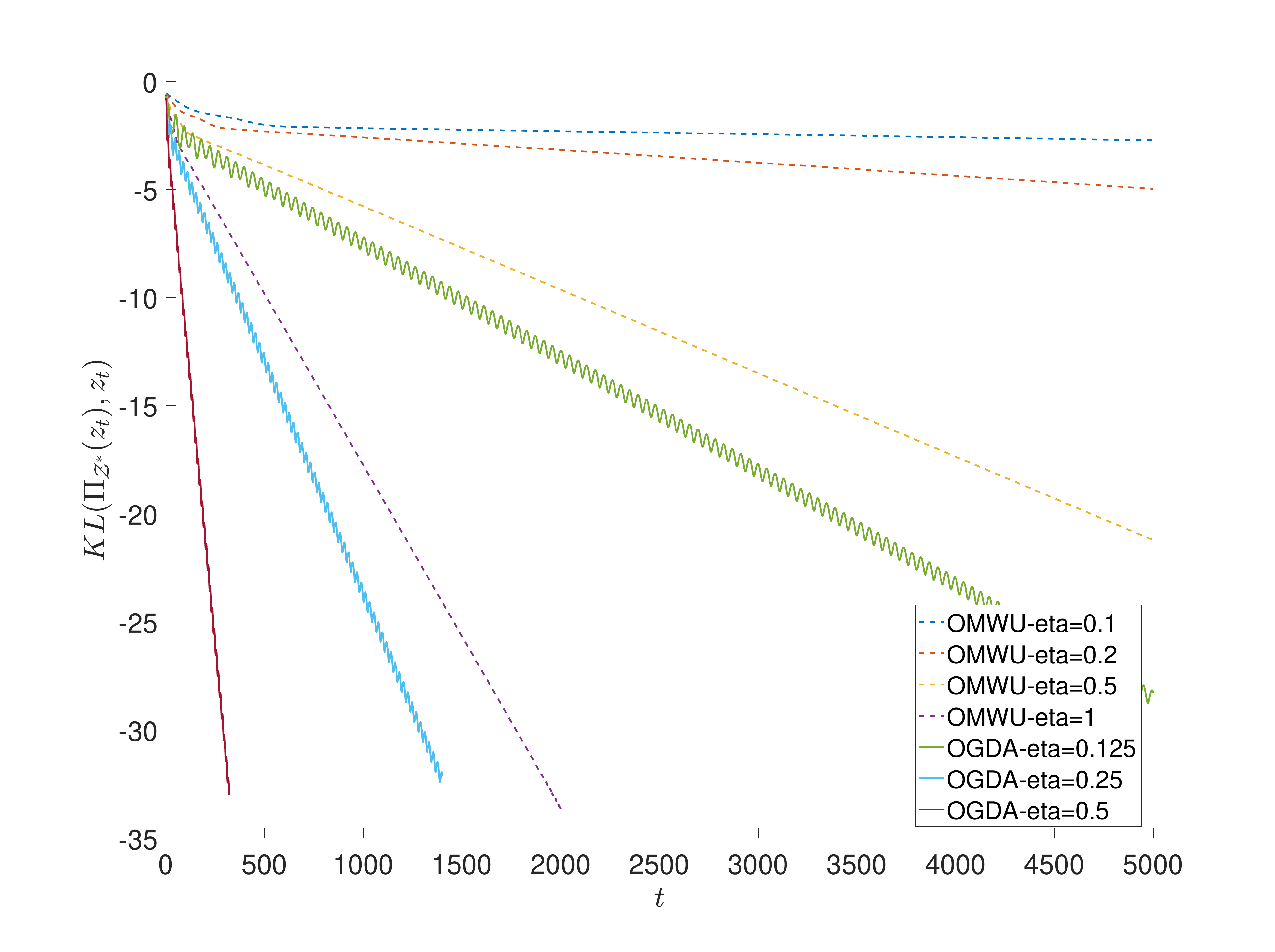}
\includegraphics[width=0.6\textwidth]{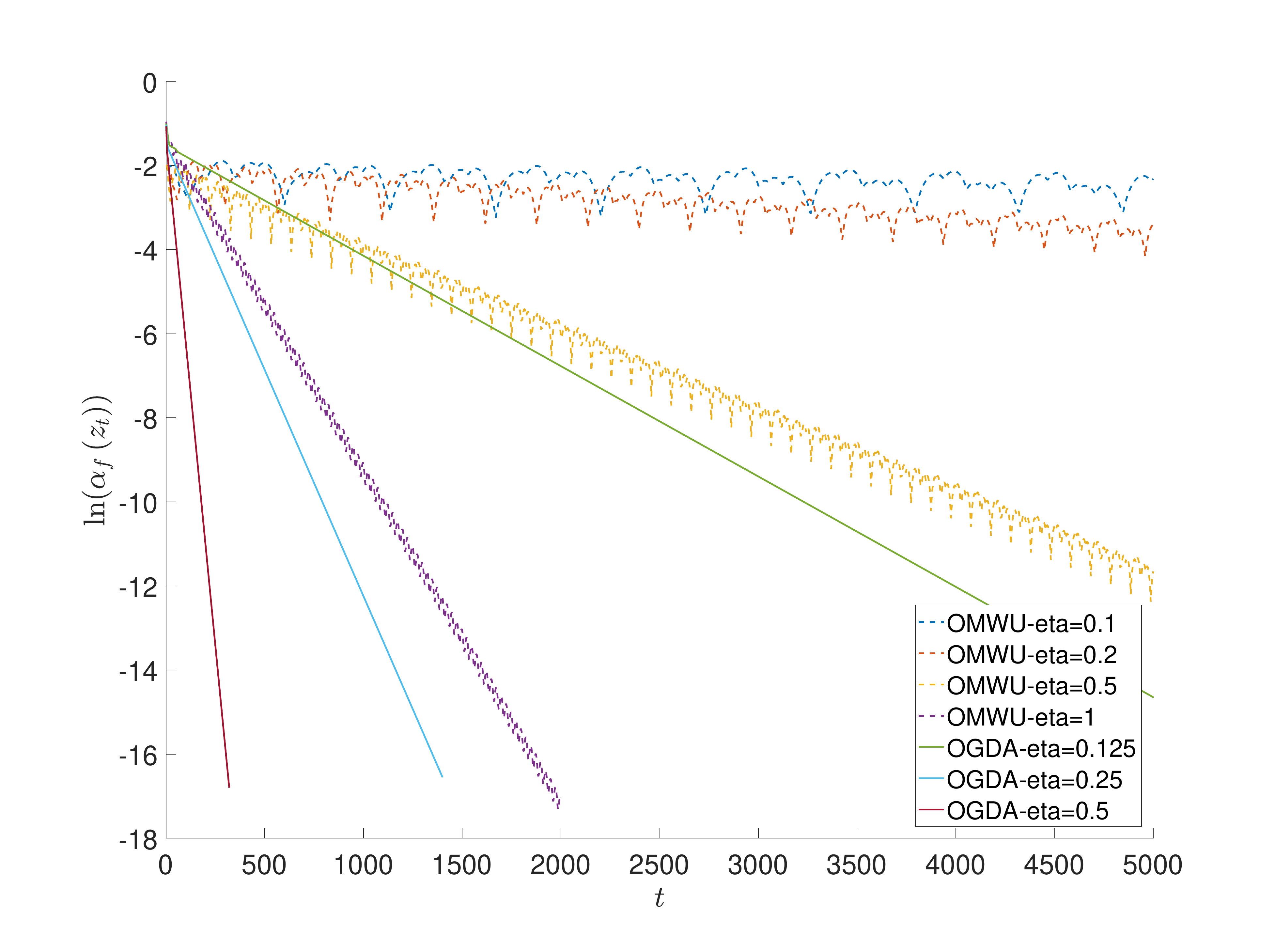}
\caption{\modify{Experiments of \alg and \OMWU with different learning rates on a matrix game with multiple Nash equilibria. ``\alg/\OMWU-eta=$\eta$'' represents the curve of \alg/\OMWU with learning rate $\eta$. We observe from these plots that both \alg and \OMWU enjoy a linear convergence rate, even though we are only able to show the linear convergence of \OMWU under the uniqueness assumption.}
}
\label{fig:MGContinuumNE}
\end{figure}
}

\section{Lemmas for Optimistic Mirror Descent}
\label{app:regret bound sufficient decrease}

We prove \pref{lem: regret bound omwu} in this section.
To do so, we use the following two lemmas.

\begin{lemma}
	\label{lem: OGD useful lemma}
	Let $\calA$ be a convex set and $\bu' = \argmin_{\bu'\in\calA}\left\{\inner{\bu', \g} + D_\psi(\bu', \bu) \right\}$. Then for any $\bu^*\in\calA$, 
	\begin{align}
	\inner{\bu'-\bu^*, \g} \leq D_\psi(\bu^*, \bu) - D_\psi(\bu^*, \bu') - D_\psi(\bu', \bu).    \label{eq: optimality OMD}
	\end{align}
\end{lemma}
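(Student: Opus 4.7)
The plan is to use the standard first-order optimality characterization of the minimizer $\bu'$ and then convert the resulting inner-product inequality into a Bregman-divergence statement via the three-point identity. Since $\psi$ is convex and $\calA$ is convex, $\bu'$ being a minimizer of the convex function $\bu \mapsto \inner{\bu,\g} + D_\psi(\bu,\bu) = \inner{\bu,\g} + \psi(\bu) - \psi(\bu) - \inner{\nabla\psi(\bu), \bu - \bu}$ on $\calA$ means its gradient $\g + \nabla\psi(\bu') - \nabla\psi(\bu)$ points outside $\calA$ at $\bu'$, yielding the variational inequality
\[
\inner{\,\g + \nabla\psi(\bu') - \nabla\psi(\bu),\; \bu^* - \bu'\,} \geq 0 \quad \text{for every } \bu^* \in \calA.
\]
Rearranging gives $\inner{\bu' - \bu^*, \g} \leq \inner{\nabla\psi(\bu) - \nabla\psi(\bu'), \bu' - \bu^*}$.

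The second step is purely algebraic: I would verify the three-point identity
\[
\inner{\nabla\psi(\bu) - \nabla\psi(\bu'),\; \bu' - \bu^*} \;=\; D_\psi(\bu^*, \bu) - D_\psi(\bu^*, \bu') - D_\psi(\bu', \bu),
\]
by plugging in the definition $D_\psi(\va,\vb) = \psi(\va) - \psi(\vb) - \inner{\nabla\psi(\vb), \va - \vb}$ into each of the three terms on the right and observing that the $\psi(\bu^*)$ terms cancel, the $\psi(\bu')$ and $\psi(\bu)$ terms cancel, and only the linear terms in $\nabla\psi(\bu)$ and $\nabla\psi(\bu')$ survive, collapsing to the left-hand side. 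Chaining this identity with the inequality from the optimality condition immediately yields the claim.

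There is no real obstacle here: both ingredients are standard and there is no regularity issue because $\psi$ is assumed differentiable (the two instantiations $\frac{1}{2}\|\cdot\|^2$ and negative entropy are both smooth on the relative interior containing the iterates). The only thing to be slightly careful about, for the negative entropy case used by \OMWU, is that $\nabla\psi$ blows up on the boundary of the simplex; this is handled by the standard observation that the \OMWU update keeps every coordinate strictly positive, so the first-order optimality condition can be applied in the relative interior where $\nabla\psi$ is well-defined, and the conclusion extends to any $\bu^*\in\calA$ by continuity of the Bregman divergence in its first argument.
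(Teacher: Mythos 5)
Your proof is correct and takes essentially the same route as the paper: first-order optimality of $\bu'$ gives the variational inequality, and the three-point identity for Bregman divergences converts the resulting inner product into the stated bound. The only addition is your closing remark on the boundary behavior of $\nabla\psi$ in the entropic case, which the paper leaves implicit but which does not change the argument.
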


\begin{proof}
     Since $D_\psi(\bu',\bu)=\psi(\bu')-\psi(\bu) - \inner{\nabla \psi(\bu), \bu'-\bu}$, by the first-order optimality condition of $\bu'$, we have 
     \begin{align*}
          \left(\g+\nabla \psi(\bu') - \nabla\psi(\bu)\right)^\top (\bu^*-\bu')\geq 0. 
     \end{align*}
     On the other hand, notice that the right-hand side of \pref{eq: optimality OMD} is 
     \begin{align*}
          &\psi(\bu^*) - \psi(\bu) - \inner{\nabla \psi(\bu), \bu^*-\bu} \\
          &\qquad  - \psi(\bu^*) + \psi(\bu') + \inner{\nabla \psi(\bu'), \bu^*-\bu'} \\
          &\qquad - \psi(\bu') + \psi(\bu) + \inner{\nabla \psi(\bu), \bu'-\bu}\\
          &= \inner{\nabla \psi(\bu')-\nabla \psi(\bu), \bu^*-\bu'}. 
     \end{align*}
     Therefore, \pref{eq: optimality OMD} is equivalent to $\inner{\g + \nabla \psi(\bu')-\nabla \psi(\bu), \bu^*-\bu'}\geq 0$, which we have already shown above. 
\end{proof}



\begin{lemma}
	\label{lem: proj optimal}
	Suppose that $\psi$ satisfies $D_\psi(\x,\x')\geq \frac{1}{2}\|\x-\x'\|_p^2$ for some $p\geq 1$, and let $\bu, \bu_1, \bu_2\in \calA$ (a convex set) be related by the following: 
	\begin{align*}
	\bu_1 &= \argmin_{\bu'\in\calA} \left\{\inner{\bu', \g_1} + D_\psi(\bu', \bu)\right\}, \\
	\bu_2 &= \argmin_{\bu'\in\calA} \left\{\inner{\bu', \g_2} + D_\psi(\bu', \bu)\right\}.
	\end{align*}
	Then we have
	\begin{align*}
	\|\bu_1-\bu_2\|_p\leq \|\g_1-\g_2\|_q,  
	\end{align*}
	where $q\geq 1$ and $\frac{1}{p}+\frac{1}{q}=1$. 
\end{lemma}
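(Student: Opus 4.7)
The plan is to follow the standard first-order optimality argument, which mirrors the one used for Lemma~\ref{lem: OGD useful lemma} but applied symmetrically to both minimizers. By the first-order optimality condition of $\bu_1$ as the minimizer over the convex set $\calA$, and similarly for $\bu_2$, I would obtain the two variational inequalities
\[
\inner{\g_1 + \nabla\psi(\bu_1) - \nabla\psi(\bu),\; \bu_2 - \bu_1} \geq 0, \qquad \inner{\g_2 + \nabla\psi(\bu_2) - \nabla\psi(\bu),\; \bu_1 - \bu_2} \geq 0.
\]

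Next, I would add these two inequalities. The $\nabla\psi(\bu)$ terms cancel, and after rearranging the result can be written as
\[
\inner{\nabla\psi(\bu_1) - \nabla\psi(\bu_2),\; \bu_1 - \bu_2} \leq \inner{\g_2 - \g_1,\; \bu_1 - \bu_2}.
\]
The key algebraic identity I would then invoke is that the left-hand side equals the symmetrized Bregman divergence $D_\psi(\bu_1,\bu_2) + D_\psi(\bu_2,\bu_1)$, which follows directly from expanding the definition $D_\psi(\bu,\bv) = \psi(\bu) - \psi(\bv) - \inner{\nabla\psi(\bv),\bu-\bv}$.

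Finally I would apply the hypothesis $D_\psi(\x,\x') \geq \frac{1}{2}\|\x-\x'\|_p^2$ to both summands to lower bound the left-hand side by $\|\bu_1-\bu_2\|_p^2$, and apply Hölder's inequality to upper bound the right-hand side by $\|\g_1-\g_2\|_q \|\bu_1-\bu_2\|_p$. Combining yields $\|\bu_1-\bu_2\|_p^2 \leq \|\g_1-\g_2\|_q \|\bu_1-\bu_2\|_p$, and dividing through by $\|\bu_1-\bu_2\|_p$ (the claim being trivial in the edge case where this vanishes) gives the desired bound. There is no real obstacle here; the only thing to watch is that the strong convexity assumption is only assumed in one direction, which is exactly why the symmetrization step is used so that both $D_\psi(\bu_1,\bu_2)$ and $D_\psi(\bu_2,\bu_1)$ contribute $\tfrac{1}{2}\|\bu_1-\bu_2\|_p^2$ to yield the needed factor.
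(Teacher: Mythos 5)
Your proof is correct and takes essentially the same route as the paper: same first-order optimality conditions, same summation and rearrangement, and the same use of Hölder's inequality at the end. The only cosmetic difference is in the middle step — the paper writes out the two inequalities $\inner{\nabla\psi(\bu_1),\bu_1-\bu_2}\geq\psi(\bu_1)-\psi(\bu_2)+\tfrac12\|\bu_1-\bu_2\|_p^2$ (and its mirror) and sums them, whereas you equivalently invoke the identity $\inner{\nabla\psi(\bu_1)-\nabla\psi(\bu_2),\bu_1-\bu_2}=D_\psi(\bu_1,\bu_2)+D_\psi(\bu_2,\bu_1)$ and apply the strong-convexity hypothesis to each summand; these are the same computation.
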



\begin{proof}
	By the first-order optimality conditions of $\bu_1$ and $\bu_2$, we have
	\begin{align*}
	\inner{\nabla \psi(\bu_1)- \nabla \psi(\bu) + \g_1, \bu_2-\bu_1} \geq 0, \\
	\inner{\nabla \psi(\bu_2) -\nabla \psi(\bu) + \g_2, \bu_1-\bu_2} \geq 0.
	\end{align*}
	Summing them up and rearranging the terms, we get 
	\begin{align}\label{eq: tmptmp2}
     \inner{\bu_2-\bu_1, \g_1-\g_2} \geq \inner{\nabla\psi(\bu_1) - \nabla\psi(\bu_2), \bu_1-\bu_2}.  
	\end{align}
	By the condition on $\psi$, we have $\inner{\nabla\psi(\bu_1), \bu_1- \bu_2}\geq \psi(\bu_1)-\psi(\bu_2) + \frac{1}{2}\|\bu_1-\bu_2\|_p^2$ and $\inner{\nabla\psi(\bu_2), \bu_2-\bu_1}\geq \psi(\bu_2)-\psi(\bu_1) +\frac{1}{2}\|\bu_1-\bu_2\|_p^2$. Summing them up we get $\inner{\nabla\psi(\bu_1) - \nabla\psi(\bu_2), \bu_1-\bu_2} \geq \|\bu_1-\bu_2\|_p^2$. Combining this with \pref{eq: tmptmp2} we get 
	\begin{align*}
	     \inner{\bu_2-\bu_1, \g_1-\g_2} \geq \|\bu_1-\bu_2\|_p^2. 
	\end{align*}
	Since $\inner{\bu_2-\bu_1, \g_1-\g_2}\leq \|\bu_1-\bu_2\|_p\|\g_1-\g_2\|_q$ by H\"{o}lder's inequality, we further get $\|\bu_1-\bu_2\|_p \leq \|\g_1-\g_2\|_q$. 
\end{proof}


\begin{proof}[Proof of \pref{lem: regret bound omwu}]
	Considering \pref{eq: omda update 6}, and using \pref{lem: OGD useful lemma} with $\bu=\zp_t$, $\bu'=\zp_{t+1}$, $\bu^*=\z$, and $\g =\eta F(\z_t) $, we get 
	\begin{align*}
	\eta F(\z_t)^\top (\zp_{t+1}-\z)  \leq D_\psi(\z, \zp_t) - D_\psi(\z, \zp_{t+1}) - D_\psi(\zp_{t+1}, \zp_t). 
	\end{align*}
	Considering \pref{eq: omda update 5}, and using \pref{lem: OGD useful lemma} with $\bu=\zp_{t}$, $\bu'=\z_{t}$, $\bu^*=\zp_{t+1}$, and $\g = \eta F(\z_{t-1})$, we get 
	\begin{align*}
	\eta F(\z_{t-1})^\top (\z_t - \zp_{t+1})\leq D_\psi(\zp_{t+1}, \zp_{t}) - D_\psi(\zp_{t+1}, \z_t) - D_\psi(\z_t, \zp_t). 
	\end{align*}
	Summing up the two inequalities above, and adding $\eta \left(F(\z_t)-F(\z_{t-1})\right)^\top (\z_t-\zp_{t+1})$ to both sides, we get  
	\begin{align}
	&\eta F(\z_t)^\top (\z_t-\z)  \nonumber  \\
	&\leq D_\psi(\z, \zp_t) - D_\psi(\z, \zp_{t+1}) - D_\psi(\zp_{t+1}, \z_t) - D_\psi(\z_t, \zp_t) + \eta \left(F(\z_t)-F(\z_{t-1})\right)^\top (\z_t-\zp_{t+1}).  \label{eq: to be continue 1}
	\end{align} 
	Using \pref{lem: proj optimal} with $\bu=\xp_t$, $\bu_1=\x_t$, $\bu_2=\xp_{t+1}$, $\g_1 = \eta \nabla_{\x}f(\z_{t-1})$ and $\g_2 = \eta \nabla_{\x}f(\z_{t})$, we get $\|\x_t-\xp_{t+1}\|_p\leq \eta \|\nabla_{\x}f(\z_{t-1}) -\nabla_{\x}f(\z_{t})\|_q$. Similarly, we have $\|\y_t-\yp_{t+1}\|_p\leq \eta \|\nabla_{\y}f(\z_{t}) -\nabla_{\y}f(\z_{t-1})\|_q$. Therefore, by H{\"o}lder's inequality, we have
	\begin{align*}
	&\eta \left(F(\z_t)- F(\z_{t-1})\right)^\top (\z_t-\zp_{t+1}) \\
	& \leq \eta \|\x_t-\xp_{t+1}\|_p\|\nabla_{\x}f(\z_{t-1}) -\nabla_{\x}f(\z_{t})\|_q + \eta \|\y_t-\yp_{t+1}\|_p\|\nabla_{\y}f(\z_{t-1}) -\nabla_{\y}f(\z_{t})\|_q \\
	& \leq \eta^2 \|\nabla_{\x}f(\z_{t-1}) -\nabla_{\x}f(\z_{t})\|_q^2 + \eta^2 \|\nabla_{\y}f(\z_{t-1}) -\nabla_{\y}f(\z_{t})\|_q^2 \\
	&= \eta^2 \distp{q}(F(\z_t),F(\z_{t-1})) \\
	&\leq \eta^2 L^2\distp{p}(\z_t,\z_{t-1})  \tag{by assumption}\\
	&\leq \frac{1}{64}\distp{p}(\z_t,\z_{t-1}) . \tag{by our choice of $\eta$}
	\end{align*} 
	Continuing from \pref{eq: to be continue 1}, we then have
	\begin{align*}
	&\eta F(\z_t)^\top (\z_t-\z) \\
	&\leq D_\psi(\z, \zp_t) - D_\psi(\z, \zp_{t+1}) - D_\psi(\zp_{t+1}, \z_t) - D_\psi(\z_t, \zp_t) + \frac{1}{64}\distp{p}(\z_t,\z_{t-1}) \\
	&\leq D_\psi(\z, \zp_t) - D_\psi(\z, \zp_{t+1}) - D_\psi(\zp_{t+1}, \z_t) - D_\psi(\z_t, \zp_t) + \frac{1}{32}\distp{p}(\z_t,\zp_t)+ \frac{1}{32} \distp{p}(\zp_t,\z_{t-1})   \tag{$\|\bu+\bv\|_p^2 \leq \left(\|\bu\|_p + \|\bv\|_p\right)^2 \leq 2\|\bu\|_p^2+ 2\|\bv\|_p^2$} \\
	&\leq D_\psi(\z, \zp_t) - D_\psi(\z, \zp_{t+1}) - D_\psi(\zp_{t+1}, \z_t) - D_\psi(\z_t, \zp_t) + \frac{1}{16}D_\psi(\z_t, \zp_t) + \frac{1}{16} D_\psi(\zp_t, \z_{t-1}) \tag{by the assumption on $\psi$}\\
	&= D_\psi(\z, \zp_t) - D_\psi(\z, \zp_{t+1}) - D_\psi(\zp_{t+1}, \z_t) - \frac{15}{16}D_\psi(\z_t, \zp_t)+ \frac{1}{16} D_\psi(\zp_t, \z_{t-1}). 
	\end{align*}
	This concludes the proof.
\end{proof}

\section{An Auxiliary Lemma on Recursive formulas}

Here, we provide an auxiliary lemma that gives an explicit bound based on a particular recursive formula. This will be useful later for deriving the convergence rate.

\begin{lemma}
     \label{lem: recursion lemma p > 1}
     Consider a non-negative sequence $\{B_t\}_{t=1,2,\cdots}$ that satisfies for some $p>0$ and $q>0$, 
     \begin{itemize}
          \item $B_{t+1}\leq B_t - q B_{t+1}^{p+1}$,  \ \ $\forall t\geq 1$
          \item $q(1+p)B_{1}^p\leq 1$.  
     \end{itemize}
     Then $B_t\leq ct^{-\frac{1}{p}}$, where $c=\max\left\{B_1,\left(\frac{2}{qp}\right)^{\frac{1}{p}}\right\}$.
\end{lemma}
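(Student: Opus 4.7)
The plan is to prove the bound $B_t \le c\,t^{-1/p}$ by induction on $t$. The base case $t = 1$ is immediate since $c \geq B_1$ by definition. Before the inductive step, I would also note that the initial condition $q(1+p)B_1^p \leq 1$ yields $B_1 \leq (1/(q(1+p)))^{1/p} \leq (2/(qp))^{1/p}$, so actually $c = (2/(qp))^{1/p}$ is the active term in the max.

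For the inductive step, the key observation is that the function $\phi(x) \triangleq x + q x^{p+1}$ is strictly increasing on $[0,\infty)$. The recursion can be rewritten as $\phi(B_{t+1}) \leq B_t$, so by monotonicity of $\phi^{-1}$ it suffices to show that $\phi\!\left(c(t+1)^{-1/p}\right) \geq B_t$. Combined with the inductive hypothesis $B_t \leq c\,t^{-1/p}$, it is enough to verify
\[
c\,(t+1)^{-1/p} + q c^{p+1} (t+1)^{-(p+1)/p} \;\geq\; c\,t^{-1/p}.
\]
Dividing both sides by $c(t+1)^{-1/p}$ and substituting $qc^p = 2/p$ (from the value of $c$), this reduces to the clean inequality
\[
1 + \frac{2}{p(t+1)} \;\geq\; \left(1 + \frac{1}{t}\right)^{1/p}.
\]

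The main step is to establish this single scalar inequality for all $t \geq 1$. For the regime $p \geq 1$ (which covers all the applications in the paper, since $\beta$ is an integer $\geq 2$ in Theorems~\ref{thm: beta ge 0} and~\ref{thm: bilinear-general}), the map $x \mapsto x^{1/p}$ is concave, so Bernoulli's inequality gives $(1+1/t)^{1/p} \leq 1 + 1/(pt)$, and the remaining estimate $2/(p(t+1)) \geq 1/(pt)$ is equivalent to $2t \geq t+1$, which holds for $t\geq 1$. This closes the induction. The main obstacle, should one wish to also cover $0 < p < 1$, is that the Bernoulli inequality reverses direction in that regime; one would then instead work with the substitution $A_t = B_t^{-p}$ and exploit convexity of $x \mapsto x^{-p}$ together with the recursion rewritten as $A_{t+1} \geq A_t(1 + q/A_{t+1})^p$, using the global condition $q(1+p)B_t^p \leq 1$ (propagated from $t=1$ by monotonicity of $B_t$) to show that $A_{t+1} - A_t$ stays bounded below by a positive constant of order $pq$, then telescoping to conclude $A_t \gtrsim t \cdot pq$ and hence $B_t \lesssim t^{-1/p}$ with the claimed constant.
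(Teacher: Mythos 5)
Your route genuinely differs from the paper's. The paper first converts the implicit recursion into the explicit form $B_{t+1}\le B_t-\tfrac{q}{2}B_t^{p+1}$ by integrating $x^p$ over $[B_{t+1},B_t]$ and using the propagated condition $q(1+p)B_t^p\le1$, then runs induction against that explicit map; you instead invert the monotone map $\phi(x)=x+qx^{p+1}$ and push the target bound through $\phi^{-1}$ directly. Your observation that the hypothesis forces $B_1\le(2/(qp))^{1/p}$ so the $\max$ defining $c$ is vacuous is correct, your reduction to the scalar inequality $1+\tfrac{2}{p(t+1)}\ge(1+1/t)^{1/p}$ is algebraically right, and the Bernoulli argument closes the case $p\ge1$.

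There is a genuine gap, however: that scalar inequality is false for $0<p<1$. At $p=\tfrac{1}{2}$, $t=1$, the left-hand side is $3$ while the right-hand side is $2^2=4$. The lemma is invoked in the proof of \pref{thm: point convergence} with $p=\beta$ for an arbitrary $\beta>0$ from \pref{def:rsi}, so the sub-unit range must be covered even though the concrete examples in the paper happen to have $\beta\ge2$. Your closing sketch via $A_t=B_t^{-p}$ is the right repair and in fact handles all $p>0$ uniformly (so the Bernoulli branch could be dropped entirely): from $B_{t+1}(1+qB_{t+1}^p)\le B_t$ one has $A_t/A_{t+1}\le(1+q/A_{t+1})^{-p}$, hence
\[
A_{t+1}-A_t \;=\; A_{t+1}\Bigl(1-\tfrac{A_t}{A_{t+1}}\Bigr) \;\ge\; A_{t+1}\Bigl(1-\bigl(1+\tfrac{q}{A_{t+1}}\bigr)^{-p}\Bigr) \;\ge\; \frac{pq}{1+(1+p)\,q/A_{t+1}} \;\ge\; \frac{pq}{2},
\]
where the second inequality uses $(1+x)^p\ge 1+\tfrac{px}{1+x}$ for all $p>0$, $x\ge0$, and the last uses $(1+p)qB_{t+1}^p\le(1+p)qB_1^p\le1$; telescoping with $A_1=B_1^{-p}\ge pq/2$ then yields $A_t\ge\tfrac{pq}{2}t$, i.e.\ $B_t\le ct^{-1/p}$. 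But as written, with this argument left as a sketch and the Bernoulli route carrying the proof, the lemma is not established below $p=1$.
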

\begin{proof}
     We first prove that $B_{t+1}\leq B_t - \frac{q}{2}B_t^{p+1}$. Notice that since $B_t$ are all non-negative, by the first condition, we have $B_{t+1}\leq B_t\leq \cdots \leq B_1$. Using the fundamental theorem of calculus, we have
	\begin{align*}
	B_t^{p+1} - B_{t+1}^{p+1} = \int_{B_{t+1}}^{B_t} \left(\frac{\der }{\der x} x^{p+1}\right) \der x 
	= (p+1)\int_{B_{t+1}}^{B_t} x^p \der x  
	\leq (p+1)(B_{t} - B_{t+1})B_t^{p}
	\end{align*}
	and thus
	\begin{align*}
	B_{t+1}
	\leq B_t - qB_{t+1}^{p+1}  
	\leq B_t - qB_{t}^{p+1} + q(p+1)\left(B_t - B_{t+1}\right)B_t^{p}.  
	\end{align*}
	By rearranging, we get 
	\begin{align*}
	B_{t+1} 
	&\leq\left( 1 - \frac{qB_t^{p}}{1+q(1+p)B_t^{p}}\right)B_t \leq \left( 1 - \frac{qB_t^{p}}{2}\right)B_t = B_t - \frac{q}{2}B_t^{p+1},
	\end{align*}
	where the last inequality is because $q(1+p)B_t^p \leq q(1+p)B_1^p\leq 1$. 
	
	Below we use induction to prove $B_t  \leq c t^{-\frac{1}{p}}$, where $c=\max\left\{B_1,\left(\frac{2}{qp}\right)^{\frac{1}{p}}\right\}$.  This clearly holds for $t=1$. Suppose that it holds for $1, \ldots, t$.
	Note that the function $f(B_t) = \left(1 - \frac{q}{2}B_{t}^{p}\right) B_t$ is increasing in $B_t$ as $f'(B_t) = 1 - \frac{q(p+1)}{2}B_t^p \geq 1 - \frac{q(p+1)}{2}B_1^p \geq 0$. 
	Therefore, we apply the induction hypothesis and get
	\begin{align*}
	B_{t+1}
	&\leq \left(1 - \frac{q}{2}B_{t}^{p}\right) B_t 
	\leq \left(1 - \frac{q}{2} c^{p} t^{-1} \right) c t^{-\frac{1}{p}} \\
	&=c t^{-\frac{1}{p}} - \frac{q}{2} c^{p+1} t^{-1-\frac{1}{p}} 
     \leq c t^{-\frac{1}{p}} -\frac{c}{p} t^{-1-\frac{1}{p}} \tag{$\frac{c}{p}\leq \frac{q}{2}c^{p+1}$ by the definition of $c$}\\
	&\leq c(t+1)^{-\frac{1}{p}}, 
	\end{align*}
	where the last inequality is by the fundamental theorem of calculus: 
	\begin{align*}
	t^{-\frac{1}{p}} - (1+t)^{-\frac{1}{p}} 
	&= \int_{1+t}^t \left(\frac{\der}{\der x} x^{-\frac{1}{p}}\right) \der x 
	= \int_{1+t}^t \left(-\frac{1}{p}\right) x^{-1-\frac{1}{p}}\der x \\
	&= \int_{t}^{t+1} \frac{1}{p} x^{-1-\frac{1}{p}} \der x \leq \frac{1}{p} t^{-1-\frac{1}{p}}.
	\end{align*}
	This completes the induction.
\end{proof}
\section{Proofs of \pref{lem: KL sufficient decrease} and \pref{thm: point convergence omwu}}\label{app:OMWU_proofs}
In this section, we consider $f(\x,\y)=\x^\top \G \y$ with $\calX =\Delta_M$ and $\calY =\Delta_N$ being simplex and $\G\in [-1, 1]^{M\times N}$. We assume that $\G$ has a unique Nash equilibrium $\z^*=(\x^*,\y^*)$. 
The value of the game  is denoted as $\rho = \min_{\x\in\calX} \max_{\y\in\calY} \x^\top \G \y = \max_{\y\in\calY} \min_{\x\in\calX} \x^\top \G \y = \x^{*\top}\G\y^*$.

Before proving \pref{lem: KL sufficient decrease} and \pref{thm: point convergence omwu}, in \pref{sec: constant section}, we define some constants for later analysis; in \pref{sec: aux lemm}, we state more auxiliary lemmas, which are useful when proving \pref{lem: KL sufficient decrease} and \pref{thm: point convergence omwu} in \pref{sec: OMWU main part}. 

\subsection{Some Problem-dependent Constants}\label{sec: constant section}
First, we define a constant $\xi$ that is determined by $\G$. 
\begin{definition}\label{def: xi}
    \begin{align*}
    \xi\triangleq \min\left\{ \min_{i\notin\supp(\x^*)} (\G\y^*)_i - \rho, \quad \rho - \max_{i\notin\supp(\y^*)} (\G^\top \x^*)_i \right\} \in (0,1]. 
    \end{align*}
\end{definition}
The fact $\xi \leq 1$ can be shown by: 
\[
\xi \leq \frac{\min_{i\notin\supp(\x^*)} (\G\y^*)_i - \rho + \rho - \max_{i\notin\supp(\y^*)} (\G^\top \x^*)_i}{2} \leq \frac{\|\G\y^*\|_\infty + \|\G^\top \x^*\|_\infty}{2} \leq 1,
\]
while the fact $\xi>0$ is a direct consequence of Lemma C.3 of \citet{mertikopoulos2018cycles}, stated below. 
\begin{lemma}[Lemma C.3 of \citet{mertikopoulos2018cycles}]\label{lem: gap lemma}
     Let $\G\in\mathbb{R}^{M\times N}$ be a game matrix for a two-player zero-sum game with value $\rho$. Then there exists a Nash equilibrium $(\x^*, \y^*)$ such that
     \begin{align*}
          (\G\y^*)_i &= \rho  &&\forall i\in \supp(\x^*), \\
          (\G\y^*)_i &> \rho  &&\forall i\notin \supp(\x^*), \\
          (\G^\top \x^*)_i &= \rho  &&\forall i\in\supp(\y^*), \\
          (\G^\top \x^*)_i &< \rho &&\forall i\notin\supp(\y^*).
     \end{align*}
\end{lemma}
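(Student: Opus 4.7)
The plan is to derive the lemma from strong LP duality together with Tucker's strict complementarity theorem. First I would reformulate the minimax problem as a pair of dual linear programs: player 1 solves $\min\{v : (\G^\top\x)_j \leq v \text{ for all } j,\ \x\in\Delta_M\}$, and a standard Lagrangian computation shows that its LP dual is $\max\{u : (\G\y)_i \geq u \text{ for all } i,\ \y\in\Delta_N\}$, which is precisely player 2's problem. Both LPs are feasible with finite optima, so strong duality applies, and together with the minimax theorem this gives the common optimal value $\rho$.

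Next I would observe that the two equality conclusions hold for \emph{any} Nash equilibrium $(\x^*,\y^*)$. Indeed, $\x^*$ minimizes the linear functional $\x\mapsto \sum_i x_i (\G\y^*)_i$ over $\Delta_M$, and the minimum equals $\rho$; so every $i\in\supp(\x^*)$ must satisfy $(\G\y^*)_i = \rho$, and the analogous argument on the $\y^*$ side gives $(\G^\top\x^*)_j = \rho$ for $j\in\supp(\y^*)$. Thus the real content of the lemma lies in producing an equilibrium for which the two strict inequalities also hold off-support.

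This is exactly the conclusion of Tucker's strict complementarity theorem, which states that any feasible LP with finite optimum admits a pair of primal--dual optimal solutions that are strictly complementary in every constraint--variable pair. Applied to the LP pair above, pairing each constraint $(\G\y)_i\geq u$ with the primal variable $x_i$ and each constraint $(\G^\top\x)_j\leq v$ with $y_j$, it produces an optimal pair $(\x^*,\y^*)$ --- and hence a Nash equilibrium --- such that for every $i$ exactly one of $x_i^*>0$ or $(\G\y^*)_i > \rho$ holds, and for every $j$ exactly one of $y_j^*>0$ or $(\G^\top\x^*)_j < \rho$ holds. Combined with the equality clauses, this yields all four conclusions.

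The main obstacle I anticipate is pure bookkeeping: matching primal constraints with dual variables so that Tucker's theorem outputs exactly the complementarity structure claimed, with signs and indexing aligned. A self-contained alternative would be to choose $(\x^*,\y^*)$ as a relative interior point of the (convex) set of Nash equilibria, so that its support is maximal; one then argues by a small perturbation that if some $i$ with $x_i^* = 0$ satisfied $(\G\y^*)_i = \rho$, mixing in a tiny amount of the pure strategy $i$ would preserve the equilibrium property, contradicting maximality. This alternative is essentially Tucker's theorem rediscovered in this special case.
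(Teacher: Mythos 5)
The paper does not prove this lemma itself; it cites it directly from \citet{mertikopoulos2018cycles} (Lemma C.3), so there is no internal proof to compare against. Evaluating your proposal on its own merits: the main route via LP duality and the Goldman--Tucker strict complementarity theorem is correct and is the standard way to establish this statement. Your observation that the two equality clauses hold for \emph{every} equilibrium by ordinary complementary slackness is right, and the content of the lemma is indeed precisely the existence of a strictly complementary optimal pair, which Goldman--Tucker supplies once the minimax problem is written as the primal LP $\min\{v:\ \G^\top\x\le v\mathbf{1},\ \x\in\Delta_M\}$ and its dual $\max\{\lambda:\ \G\y\ge\lambda\mathbf{1},\ \y\in\Delta_N\}$. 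The bookkeeping you worry about does work out: $x_i$ pairs with the dual slack $(\G\y)_i-\lambda$ and $y_j$ pairs with the primal slack $v-(\G^\top\x)_j$, which is exactly the complementarity pattern in the four displayed conditions.

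The ``self-contained alternative'' you sketch is not quite a rediscovery of the same fact. Taking $(\x^*,\y^*)$ in the relative interior of $\calZ^*$ does give that $\supp(\x^*)=\bigcup_{\x\in\calX^*}\supp(\x)$ and that $\{i:(\G\y^*)_i=\rho\}=\bigcap_{\y\in\calY^*}\{i:(\G\y)_i=\rho\}$, but the lemma additionally needs these two index sets to coincide, and that is the nontrivial part that Goldman--Tucker provides. Your specific perturbation, mixing $\x^*$ with $\be_i$, does not obviously stay inside $\calX^*$: for $j\in\supp(\y^*)$ one has $(\G^\top\x^*)_j=\rho$ exactly, and the entry $G_{ij}$ may exceed $\rho$, so $((1-\epsilon)\x^*+\epsilon\be_i)^\top\G$ can violate $\le\rho\mathbf{1}$. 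To close that gap you would in effect have to run the auxiliary-LP argument used to prove Goldman--Tucker in the first place, so it is cleaner to simply invoke the theorem as you do in the main argument. With that caveat on the alternative, the proposal is sound.
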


Below, we define
$\mathcal{V}^{*}(\mathcal{Z})=\mathcal{V}^{*}(\mathcal{X})\times \mathcal{V}^{*}(\mathcal{Y})$, where 
\[
\mathcal{V}^{*}(\mathcal{X})\triangleq\{\x: \x\in\Delta_M,~\supp(\x)\subseteq \supp(\x^*)\}
\] 
and 
\[
\mathcal{V}^{*}(\mathcal{Y})\triangleq\{\y: \y\in\Delta_N,~ \supp(\y)\subseteq \supp(\y^*)\}.
\]
\begin{definition}\label{def: cx cy}
    \begin{align*}
        c_x \triangleq \min_{{\x\in\Delta_M \backslash \{\x^*\}}}  \max_{\y\in \mathcal{V}^{*}(\mathcal{Y})} \frac{(\x-\x^*)^\top \G \y}{\|\x-\x^*\|_1}, \qquad c_y \triangleq \min_{{\y\in\Delta_N\backslash\{\y^*\}}}  \max_{\x\in \mathcal{V}^{*}(\mathcal{X})}  \frac{\x^\top \G (\y^*- \y)}{\|\y^*- \y\|_1}.
    \end{align*} 
\end{definition}

Note that in the definition of $c_x$ and $c_y$, the outer minimization is over an open set, which may make the definition problematic as the optimal value may not be attained. However, the following lemma shows that $c_x$ and $c_y$ are well-defined. 

\begin{lemma}\label{lem: cx-well-defined}
    $c_x$ and $c_y$ are well-defined, and $0<c_x,c_y\le 1$.
\end{lemma}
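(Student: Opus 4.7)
\textbf{Proof plan for \pref{lem: cx-well-defined}.}

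My plan is to first handle the well-definedness issue by exploiting the scale invariance of the quotient, then establish the two-sided bound, with the strict positivity being the main substantive step.

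\emph{Well-definedness via compactness.} The quotient $(\x-\x^*)^\top\G\y/\|\x-\x^*\|_1$ depends on $\x$ only through the direction $\d(\x)=(\x-\x^*)/\|\x-\x^*\|_1$. I will rewrite $c_x$ as a minimum over the set of admissible directions
\[
\mathcal{D}=\bigl\{\d\in\mathbb{R}^M:\ \|\d\|_1=1,\ \textstyle\sum_i d_i=0,\ d_i\ge 0\text{ for all } i\notin\supp(\x^*)\bigr\},
\]
which coincides with $\{\d(\x):\x\in\Delta_M\setminus\{\x^*\}\}$ (the nonnegativity constraint on coordinates outside $\supp(\x^*)$ is forced, since otherwise $\x^*+t\d\notin\Delta_M$ for any $t>0$). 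This set is closed and bounded, hence compact. Because $\mathcal{V}^*(\mathcal{Y})$ is a compact polytope and $\d\mapsto\d^\top\G\y$ is linear, the function $\d\mapsto\max_{\y\in\mathcal{V}^*(\mathcal{Y})}\d^\top\G\y$ is continuous, so the infimum is attained on $\mathcal{D}$. This proves $c_x$ is well-defined.

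\emph{Nonnegativity and upper bound.} Taking $\y=\y^*\in\mathcal{V}^*(\mathcal{Y})$ inside the max, I use that $\y^*$ is maximin optimal: $\x^\top\G\y^*\ge\rho=\x^{*\top}\G\y^*$ for every $\x\in\Delta_M$, hence $(\x-\x^*)^\top\G\y^*\ge 0$, so $c_x\ge 0$. For the upper bound, since $\G\in[-1,1]^{M\times N}$ and $\y\in\Delta_N$ I get $\|\G\y\|_\infty\le 1$, and H\"older's inequality gives $(\x-\x^*)^\top\G\y\le\|\x-\x^*\|_1$; dividing by $\|\x-\x^*\|_1$ shows every achievable value is $\le 1$, hence $c_x\le 1$.

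\emph{Strict positivity (the main step).} I will argue by contradiction, leveraging the uniqueness of the Nash equilibrium together with \pref{lem: gap lemma}. Suppose $c_x=0$; then by the compactness step there exists $\d^*\in\mathcal{D}$ with $\d^{*\top}\G\y\le 0$ for every $\y\in\mathcal{V}^*(\mathcal{Y})$. Define the perturbation $\x_\epsilon=\x^*+\epsilon\d^*$. For small $\epsilon>0$, $\x_\epsilon\in\Delta_M\setminus\{\x^*\}$ (the admissibility of $\d^*$ guarantees nonnegativity on the coordinates where $x^*_i=0$, and $\sum d^*_i=0$ preserves the sum). I will show $\x_\epsilon$ is also a minimax optimal strategy, contradicting uniqueness. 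Decompose $\max_\y\x_\epsilon^\top\G\y=\max_i(\G^\top\x_\epsilon)_i$ and bound each coordinate:
\begin{itemize}
\item For $i\in\supp(\y^*)$, testing $\y=\be_i\in\mathcal{V}^*(\mathcal{Y})$ yields $(\G^\top\d^*)_i=\d^{*\top}\G\be_i\le 0$, and \pref{lem: gap lemma} gives $(\G^\top\x^*)_i=\rho$, so $(\G^\top\x_\epsilon)_i\le\rho$.
\item For $i\notin\supp(\y^*)$, \pref{lem: gap lemma} gives $(\G^\top\x^*)_i\le\rho-\xi$, so for all $\epsilon\le\xi/\|\G^\top\d^*\|_\infty$ we still have $(\G^\top\x_\epsilon)_i\le\rho$.
\end{itemize}
Thus $\max_\y\x_\epsilon^\top\G\y\le\rho$, meaning $\x_\epsilon$ is minimax optimal. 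Combined with $\y^*$ this produces a second Nash equilibrium $(\x_\epsilon,\y^*)\ne(\x^*,\y^*)$, contradicting uniqueness. Hence $c_x>0$. The claim for $c_y$ follows by the symmetric argument, swapping the roles of $\x$ and $\y$, of $\calX$ and $\calY$, and of minimax and maximin optimality.

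The main obstacle is the strict positivity step: the trivial bound by plugging in $\y=\y^*$ only gives $c_x\ge 0$, since $\y^*$ alone does not distinguish $\x^*$ from other $\x$ that happen to be best responses against $\y^*$. What makes the argument work is using the entire set $\mathcal{V}^*(\mathcal{Y})$ of ``active'' maximin directions, which together with the strict margin $\xi$ on the complement $\supp(\y^*)^c$ gives enough rigidity to promote a boundary point $\x_\epsilon$ into a minimax optimum and invoke uniqueness.
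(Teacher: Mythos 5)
Your proof is correct and follows essentially the same strategy as the paper's. For well-definedness, the paper rewrites the minimization as a minimum over the closed annulus $\{\x\in\Delta_M:\|\x-\x^*\|_1\geq x_{\min}^*\}$ by scaling $\x-\x^*$ outward, while you pass to the compact set of unit-$\ell_1$ directions $\mathcal{D}$; both are the same observation (the objective depends only on the direction of $\x-\x^*$) packaged slightly differently, and yours is arguably cleaner. For strict positivity, the paper argues about $c_y$, separating the cases $c_y<0$ (immediate contradiction with $\y^*$ being a best response to $\x^*$) and $c_y=0$ (construct $\y'=\y^*+\tfrac{\xi}{2}(\y-\y^*)$ and show it is maximin optimal, contradicting uniqueness); you argue symmetrically about $c_x$, use the test vector $\y=\y^*$ to dispose of $c_x<0$, and in the $c_x=0$ case construct $\x_\epsilon=\x^*+\epsilon\d^*$ with a small $\epsilon$ controlled by the margin $\xi$ and show it is minimax optimal. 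The two perturbation arguments are the same in substance — split the coordinates into support and non-support of the opponent's equilibrium strategy, use \pref{lem: gap lemma} for the equality on the support, and use the strict margin $\xi$ off the support — and both invoke uniqueness to close the contradiction. One very small point worth noting explicitly in a final write-up: when $\|\G^\top\d^*\|_\infty=0$ the bound $\epsilon\le\xi/\|\G^\top\d^*\|_\infty$ is vacuous and any admissible $\epsilon>0$ works, so the argument is still complete.
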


\begin{proof}
    We first show $c_x$ and $c_y$ are well-defined. To simplify the notations, we define $x_{\min}^*\triangleq\min_{i\in \supp(\x^*)}x_i^*$ and $\calX'\triangleq\{\x: \x\in \Delta_M,~\|\x-\x^*\|_1\ge x_{\min}^*\}$, and define $y_{\min}^*$ and $\calY'$ similarly. We will show that
\begin{align*}
    c_x = \min_{\x\in \calX'}  \max_{\y\in\mathcal{V}^{*}(\mathcal{Y})} \frac{(\x-\x^*)^\top \G \y}{\|\x-\x^*\|_1},\quad c_y = \min_{\y\in\calY'}  \max_{\x\in\mathcal{V}^{*}(\mathcal{X})}  \frac{\x^{\top} \G (\y^*- \y)}{\|\y^*- \y\|_1}, 
\end{align*}
which are well-defined as the outer minimization is now over a closed set. Consider $c_x$, it suffices to show that for any $\x\in \Delta_M$ such that $\x\ne \x^*$ and $\|\x-\x^*\|_1 < x_{\min}^*$, there exists $\x'\in \Delta_M$ such that $\|\x'-\x^*\|_1 = x_{\min}^*$ and
\begin{align}\label{eq: mapping}
     \frac{(\x-\x^*)^\top \G \y}{\|\x-\x^*\|_1} = \frac{(\x'-\x^*)^\top \G \y}{\|\x'-\x^*\|_1}, \;\forall \y.
\end{align}

In fact, we can simply choose $\x' = \x^*+(\x-\x^*)\cdot \frac{x_{\min}^*}{\|\x-\x^*\|_1}$. We first argue that $\x'$ is still in $\Delta_M$. For each $j\in [K]$, if $x_j-x^*_j\ge 0$, we surely have $x'_j \ge x_j^* +0\ge 0$; otherwise, $x_j^* > x_j\geq 0$ and thus $j\in \supp(\x^*)$ and $x^*_j\ge x_{\min}^*$, which implies $x'_j \geq x^*_j - |x_j-x^*_j|\cdot \frac{x_{\min}^*}{\|\x-\x^*\|_1} \ge x^*_j - x_{\min}^* \ge 0$. In addition, $\sum_{j} x'_j = \sum_j x^*_j =1$. Combining these facts, we have $\x'\in \Delta_M$. 

Moreover, according to the definition of $\x'$, $\|\x'-\x^*\|_1 = x_{\min}^*$ holds. Also, since $\x^*-\x$ and $\x^*-\x'$ are parallel vectors, \pref{eq: mapping} is satisfied. The  arguments above show that the $c_x$ in \pref{def: cx cy} is a well-defined real number. The case of $c_y$ is similar.

Now we show $0<c_x,c_y\le 1$. The fact that $c_x, c_y\leq 1$ is a direct consequence of $\G$ being in $[-1, 1]^{M\times N}$. Below, we 
use contradiction to prove that $c_y>0$. First, if $c_y < 0$, then there exists $\y\neq \y^*$ such that $ \x^{*\top}\G\y^*< \x^{*\top}\G \y$. This contradicts with the fact that $(\x^*, \y^*)$ is the equilibrium. 
    
On the other hand, if $c_y=0$, then there is some $\y\neq \y^*$ such that 
\begin{equation}\label{eq:some_condition}
\max_{\x\in \mathcal{V}^{*}(\mathcal{X})}\x^{\top}\G(\y^*-\y) = 0.
\end{equation}
 Consider the point $\y' = \y^* + \frac{\xi}{2}(\y-\y^*)$ (recall the definition of $\xi$ in \pref{def: xi} and that $0< \xi \leq 1$),
    which lies on the line segment between $\y^*$ and $\y$. Then, for any ${\x\in\calX}$,
    \begin{align*}
        \x^{\top}\G\y' 
        &= \sum_{i\notin\supp(\x^*)} x_i(\G\y')_i +  \sum_{i\in\supp(\x^*)} x_i(\G\y')_i \\
        &\geq \sum_{i\notin\supp(\x^*)} \big(x_i(\G\y^*)_i- x_i\|\y'-\y^*\|_1\big)  +\sum_{i\in\supp(\x^*)}\left(\frac{\xi}{2}\cdot x_i(\G(\y-\y^*))_i + x_i(\G\y^*)_i\right) \tag{using $G_{ij}\in [-1,-1]$ for the first part and $\y'=\y^*+\frac{\xi}{2}(\y-\y^*)$ for the second} \\
        &\geq \sum_{i\notin\supp(\x^*)} \big(x_i(\G\y^*)_i- x_i\|\y'-\y^*\|_1\big)  +\sum_{i\in\supp(\x^*)} x_i\rho   \tag{using \pref{eq:some_condition} and $(\G\y^*)_i=\rho$ for all $i\in\supp(\x^*)$} \\
       &\geq \sum_{i\notin\supp(\x^*)} \big(x_i\left((\G\y^*)_i- \xi\right) \big)  +\sum_{i\in\supp(\x^*)} x_i\rho   \tag{using $\y'-\y^*=\frac{\xi}{2}(\y-\y^*)$ and $\|\y-\y^*\|_1 \leq 2$} \\
        &\geq \sum_{i\notin\supp(\x^*)} x_i\rho +  \sum_{i\in\supp(\x^*)} x_i\rho  \tag{by the definition of $\xi$} \\
        &= \rho.
    \end{align*}
    This shows that $\min_{\x\in\calX} \x^{\top}\G\y'\ge \rho$, that is, $\y' \neq \y^*$ is also a maximin point, contradicting that $\z^*$ is unique. Therefore, $c_y>0$ has to hold, and so does $c_x>0$ by the same argument.
\end{proof} 

Finally, we define the following constant that depends on $\G$: 
\begin{definition}\label{def: epsilon definition}
\begin{align*}
     \epsilon\triangleq\min_{j\in\supp({\z^*})}  \exp\left(-\frac{\ln(MN)}{z_j^*}\right). 
\end{align*}
\end{definition}

\subsection{Auxiliary Lemmas}\label{sec: aux lemm}

All lemmas stated in this section is for the case $f(\x,\y)=\x^\top \G \y$ with $\calZ =\Delta_M \times \Delta_N$ and a unique Nash equilibrium $\z^*=(\x^*,\y^*)$. 

\begin{lemma}
    \label{lem:them5prime}
    For any $\z\in \calZ$, we have
    \[
    \max_{\z'\in \mathcal{V}^{*}(\mathcal{Z})}F(\z)^\top(\z-\z')\ge C\|\z^*-\z\|_1
    \] 
    for $C = \min\{c_x, c_y\} \in (0, 1]$.  
\end{lemma}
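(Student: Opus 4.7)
The plan is to expand $F(\z)^\top(\z - \z')$ explicitly using the bilinear structure and then decouple the maximization over $\z' = (\x', \y') \in \calV^*(\calX) \times \calV^*(\calY)$. Since $F(\z) = (\G\y, -\G^\top\x)$, a direct computation gives
\[
F(\z)^\top(\z - \z') \;=\; \x^\top \G \y' \;-\; \x'^\top \G \y,
\]
so the $\x'$- and $\y'$-optimizations separate:
\[
\max_{\z' \in \calV^*(\calZ)} F(\z)^\top(\z - \z') \;=\; \max_{\y' \in \calV^*(\calY)} \x^\top \G \y' \;-\; \min_{\x' \in \calV^*(\calX)} \x'^\top \G \y.
\]

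The key observation I would use next is a consequence of \pref{lem: gap lemma}: for every $\y' \in \calV^*(\calY)$ we have $\supp(\y') \subseteq \supp(\y^*)$, and since $(\G^\top \x^*)_i = \rho$ on $\supp(\y^*)$, it follows that $\x^{*\top} \G \y' = \rho$. Symmetrically, $\x'^\top \G \y^* = \rho$ for every $\x' \in \calV^*(\calX)$. This lets me rewrite the two pieces as deviations from $\rho$:
\begin{align*}
\max_{\y' \in \calV^*(\calY)} \x^\top \G \y' &= \rho + \max_{\y' \in \calV^*(\calY)} (\x - \x^*)^\top \G \y', \\
\min_{\x' \in \calV^*(\calX)} \x'^\top \G \y &= \rho - \max_{\x' \in \calV^*(\calX)} \x'^\top \G (\y^* - \y).
\end{align*}
The $\rho$ terms cancel when we subtract, leaving exactly the two quantities appearing in the definitions of $c_x$ and $c_y$ (\pref{def: cx cy}).

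At this point I would apply \pref{def: cx cy} directly: whenever $\x \neq \x^*$, the first maximum is at least $c_x \|\x - \x^*\|_1$, and whenever $\y \neq \y^*$, the second is at least $c_y \|\y - \y^*\|_1$. The degenerate cases $\x = \x^*$ or $\y = \y^*$ contribute a trivial lower bound of $0$ to the corresponding side, so the combined inequality
\[
\max_{\z' \in \calV^*(\calZ)} F(\z)^\top(\z - \z') \;\geq\; c_x \|\x - \x^*\|_1 + c_y \|\y - \y^*\|_1
\]
holds in all cases. Lower-bounding the right-hand side by $\min\{c_x, c_y\}\bigl(\|\x - \x^*\|_1 + \|\y - \y^*\|_1\bigr) = C \|\z - \z^*\|_1$ then completes the proof.

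There is no substantial obstacle here; the proof is essentially a bookkeeping exercise once one spots that the bilinearity of $f$, combined with the equalizer property $(\G^\top \x^*)_i = \rho$ on $\supp(\y^*)$ and $(\G \y^*)_i = \rho$ on $\supp(\x^*)$ from \pref{lem: gap lemma}, forces the $\rho$ constants to cancel and reduces the statement to exactly the definitions of $c_x$ and $c_y$. The only mild subtlety is checking that the edge cases $\x = \x^*$ and $\y = \y^*$ are handled correctly, which follows since the corresponding maxima are $\geq 0$ (take $\y' = \y^*$ or $\x' = \x^*$, giving value $0$).
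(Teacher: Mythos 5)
Your proof is correct and follows essentially the same route as the paper: expand $F(\z)^\top(\z-\z')$ using the bilinear structure, separate the maximizations over $\x'$ and $\y'$, use the equalizer properties from the gap lemma to subtract out $\rho$, and then invoke the definitions of $c_x$ and $c_y$ followed by $\min\{c_x,c_y\}(\|\x-\x^*\|_1+\|\y-\y^*\|_1)=C\|\z-\z^*\|_1$. Your explicit handling of the edge cases $\x=\x^*$ and $\y=\y^*$ (which the paper leaves implicit, though it is immediate since both sides vanish) is a small but welcome bit of extra care.
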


\begin{proof}
    Recall that $\rho={\x^*}^\top \G\y^*$ is the game value and note that
    \begin{align*}
    \max_{\z'\in \mathcal{V}^{*}(\mathcal{Z})}F(\z)^\top(\z-\z')
    &= \max_{\z'\in\mathcal{V}^*(\calZ)}(\x-\x')^\top \G \y + \x^\top \G (\y' - \y) =   \max_{\z'\in\mathcal{V}^*(\calZ)} -\x^{\prime\top} \G \y + \x^\top \G \y' \\
    &=\max_{\x'\in \mathcal{V}^{*}(\mathcal{X})}\left(\rho -\x'^\top \G\y\right)+\max_{\y'\in \mathcal{V}^{*}(\mathcal{Y})} \left(\x^\top \G\y'-\rho\right)\\
    &=\max_{\x'\in \mathcal{V}^{*}(\mathcal{X})}\x'^\top \G(\y^*-\y)+\max_{\y'\in \mathcal{V}^{*}(\mathcal{Y})}(\x-\x^*)^\top \G\y'    \tag{\pref{lem: gap lemma}}\\
    &\ge c_y\|\y^*-\y\|_1+c_x\|\x^*-\x\|_1\tag{by \pref{def: cx cy}}\\
    &\ge \min\{c_x,c_y\}\|\z^*-\z\|_1,
    \end{align*}
    which completes the proof.
\end{proof}

\begin{lemma}\label{lem:reverse pinsker}
For any $\z\in\calZ$, we have
\[
\KL(\z^*,\z)\le\sum_{i\in\supp(\z^*)}\frac{(z^*_i-z_i)^2}{z_i}+\sum_{i\notin\supp(\z^*)}z_i\le \frac{1}{\min_{i\in \supp(\z^*)}z_i}\|\z^*-\z\|_1.
\]

\end{lemma}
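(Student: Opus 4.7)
The plan is to prove the two inequalities separately, both by elementary manipulations. Since $\KL(\z^*,\z)=\KL(\x^*,\x)+\KL(\y^*,\y)$ and the right-hand side also decomposes additively over the $\x$ and $\y$ coordinates, it suffices to prove the bound for a single simplex, then add.

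For the first inequality, the natural starting point is the standard estimate $\ln t\le t-1$, which for $i\in\supp(\z^*)$ gives
\[
z_i^*\ln\frac{z_i^*}{z_i}\le z_i^*\left(\frac{z_i^*}{z_i}-1\right)=\frac{z_i^*(z_i^*-z_i)}{z_i}=\frac{(z_i^*-z_i)^2}{z_i}+(z_i^*-z_i),
\]
after writing $z_i^*=(z_i^*-z_i)+z_i$ in the numerator. Terms with $i\notin\supp(\z^*)$ contribute nothing to $\KL(\z^*,\z)$. Summing over the $\x$-coordinates and using $\sum_i x_i^*=\sum_i x_i=1$, the linear remainder telescopes into
\[
\sum_{i\in\supp(\x^*)}(x_i^*-x_i)=1-\sum_{i\in\supp(\x^*)}x_i=\sum_{i\notin\supp(\x^*)}x_i,
\]
and likewise for $\y$. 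Summing the two gives exactly the first inequality.

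For the second inequality, I would use two simple monotonicity observations. First, for $i\in\supp(\z^*)$ we have $z_i\ge \min_{j\in\supp(\z^*)}z_j$ and $|z_i^*-z_i|\le 1$ (since the simplex coordinates lie in $[0,1]$), hence
\[
\frac{(z_i^*-z_i)^2}{z_i}\le \frac{|z_i^*-z_i|}{\min_{j\in\supp(\z^*)}z_j}.
\]
Second, $\min_{j\in\supp(\z^*)}z_j\le 1$, so every $i\notin\supp(\z^*)$ term satisfies $z_i\le z_i/\min_{j\in\supp(\z^*)}z_j$. Adding these up and recognizing that for $i\notin\supp(\z^*)$ we have $z_i=|z_i^*-z_i|$ (since $z_i^*=0$), we obtain
\[
\sum_{i\in\supp(\z^*)}\frac{(z_i^*-z_i)^2}{z_i}+\sum_{i\notin\supp(\z^*)}z_i\le \frac{1}{\min_{j\in\supp(\z^*)}z_j}\sum_i |z_i^*-z_i|=\frac{\|\z^*-\z\|_1}{\min_{j\in\supp(\z^*)}z_j}.
\]

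No step poses real difficulty; the only thing to be careful about is that the identity $\sum_{i\in\supp(\z^*)}(z_i^*-z_i)=\sum_{i\notin\supp(\z^*)}z_i$ relies on both $\z^*$ and $\z$ having the same total mass, which is why I carry out the argument separately on the $\x$- and $\y$-simplices before adding. Otherwise the bound is a clean one-line application of $\ln t\le t-1$ followed by a $|z_i^*-z_i|\le 1$ type relaxation.
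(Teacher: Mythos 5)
Your proof is correct and follows essentially the same route as the paper's: an elementary logarithmic bound produces the middle quantity, and the second inequality then follows by bounding each summand by $|z_i^*-z_i|/\min_{j\in\supp(\z^*)} z_j$ using $|z_i^*-z_i|\le 1$ and $\min_j z_j\le 1$. The only small variation is that you apply $\ln t\le t-1$ termwise and telescope the linear remainder, whereas the paper first applies Jensen's inequality to the sum and then $\ln(1+u)\le u$; both amount to the same estimate on the logarithm and reach the identical intermediate decomposition.
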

\begin{proof}
Using the definition of the Kullback-Leibler divergence, we have
\begin{align*}
\KL(\x^*,\x)&=\sum_{i}x^*_i\ln\left(\frac{x_i^*}{x_i}\right)\le \ln \left(\sum_i \frac{{x^*_i}^2}{x_i}\right)=\ln\left(1+\sum_i\frac{({x^*_i}-x_i)^2}{x_i}\right)\le \sum_i\frac{({x^*_i}-x_i)^2}{x_i},
\end{align*}
where the first inequality is by the concavity of the $\ln(\cdot)$ function, and the second inequality is because $\ln(1+u)\le u$. Considering $i\in\supp(\x^*)$ and $i\notin\supp(\x^*)$ separately in the last summation, we have
\begin{align*}
\sum_i\frac{({x^*_i}-x_i)^2}{x_i}=\sum_{i\in\supp(\x^*)}\frac{({x^*_i}-x_i)^2}{x_i}+\sum_{i\notin\supp(\x^*)}\frac{(x_i)^2}{x_i}=\sum_{i\in\supp(\x^*)}\frac{({x^*_i}-x_i)^2}{x_i}+\sum_{i\notin\supp(\x^*)}{x_i}.
\end{align*}

The case for $\KL(\y^*,\y)$ is similar. Combining both cases finishes the proof of the first inequality (recall that $\KL(\z^*,\z)$ is defined as $\KL(\x^*,\x) + \KL(\y^*,\y)$). The second inequality is straightforward:
\begin{align*}
\sum_{i\in\supp(\z^*)}\frac{(z^*_i-z_i)^2}{z_i}+\sum_{i\notin\supp(\z^*)}z_i
&\le \frac{1}{\min_{i\in \supp(\z^*)}z_i}\left(\sum_{i\in\supp(\z^*)} |z^*_i-z_i| + \sum_{i\notin\supp(\z^*)}|z_i|\right) \\
&= \frac{1}{\min_{i\in \supp(\z^*)}z_i}\|\z^*-\z\|_1.
\end{align*}
\end{proof}

\begin{lemma}\label{lem:stability}
For $\eta\le\frac{1}{8}$, \OMWU guarantees $\frac{3}{4}\zpp_{t,i}\le z_{t,i}\le \frac{4}{3}\zpp_{t,i}$ and $\frac{3}{4}\zpp_{t,i}\le \zpp_{t+1,i}\le \frac{4}{3}\zpp_{t,i}$. 
\end{lemma}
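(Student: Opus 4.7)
The plan is to exploit the multiplicative form of the \OMWU update together with the fact that, for the matrix game $f(\x,\y)=\x^\top \G\y$ with $\G\in[-1,1]^{M\times N}$ and $\x,\y$ on the simplex, every coordinate of $F(\z)=(\nabla_\x f,-\nabla_\y f)=(\G\y,-\G^\top\x)$ lies in $[-1,1]$. The bound will then follow from elementary estimates on $\exp(\pm\eta)$ when $\eta\le\tfrac18$.

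First I would write the \OMWU update in the unified form
\[
z_{t,i} = \frac{\zpp_{t,i}\exp\bigl(-\eta F(\z_{t-1})_i\bigr)}{\sum_j \zpp_{t,j}\exp\bigl(-\eta F(\z_{t-1})_j\bigr)},
\qquad
\zpp_{t+1,i} = \frac{\zpp_{t,i}\exp\bigl(-\eta F(\z_t)_i\bigr)}{\sum_j \zpp_{t,j}\exp\bigl(-\eta F(\z_t)_j\bigr)},
\]
understanding that each normalization is taken separately on the $\x$-block and the $\y$-block. Since $|F(\z)_i|\le 1$, each numerator exponential lies in $[\exp(-\eta),\exp(\eta)]$, and since the denominator is a convex combination of such quantities it lies in the same interval. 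Dividing, the multiplicative factor $z_{t,i}/\zpp_{t,i}$ (and likewise $\zpp_{t+1,i}/\zpp_{t,i}$) is sandwiched between $\exp(-2\eta)$ and $\exp(2\eta)$.

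Finally I would use $\eta\le \tfrac18$ to conclude: $\exp(2\eta)\le e^{1/4}<\tfrac43$, since $\tfrac14<\ln\tfrac43$; and $\exp(-2\eta)\ge e^{-1/4}>\tfrac34$, since $\tfrac14<\ln\tfrac43=-\ln\tfrac34$. Combining these two estimates gives $\tfrac34 \zpp_{t,i}\le z_{t,i}\le \tfrac43 \zpp_{t,i}$ and $\tfrac34 \zpp_{t,i}\le \zpp_{t+1,i}\le \tfrac43 \zpp_{t,i}$, as required.

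There is essentially no obstacle here; the only thing to double-check is the numerical comparison $e^{1/4}<\tfrac43$ (equivalently $\tfrac14<\ln\tfrac43\approx 0.2877$), which is clear, and the verification that $\|F(\z)\|_\infty\le 1$ uniformly, which is immediate from $\|\G\|_{\max}\le 1$ and $\|\x\|_1=\|\y\|_1=1$.
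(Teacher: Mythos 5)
Your proof is correct and follows essentially the same route as the paper: bound the per-coordinate gradient by $1$ (from $\G\in[-1,1]^{M\times N}$ and the simplex constraint), sandwich numerator and denominator of the multiplicative update between $\exp(-\eta)$ and $\exp(\eta)$, and then observe $\exp(\pm 2\eta)\in[\tfrac34,\tfrac43]$ when $\eta\le\tfrac18$. The only cosmetic difference is that you write the $\x$- and $\y$-blocks in a unified $F$ notation while the paper treats the $\x$-block explicitly and says the other cases are similar.
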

\begin{proof}
This is shown directly by the update of $\xp_t$:
\[
\frac{\xpp_{t,i}\exp{(-\eta)}}{\exp{(\eta)}}\le \xpp_{t+1,i}=\frac{\xpp_{t,i}\exp{(-\eta\cdot (\G\y_{t})_i)}}{\sum_{j}\xpp_{t,j}\exp{(-\eta\cdot (\G\y_{t})_j)}}\le\frac{\xpp_{t,i}\exp{(\eta)}}{\exp(-\eta)}.
\]
So by the condition on $\eta$, we have $\frac{3}{4}\xpp_{t,i}\le\exp(-2\eta)\cdot \xpp_{t,i}\le\xpp_{t+1,i}\le \exp(2\eta)\cdot\xpp_{t,i}\le \frac{4}{3}\xpp_{t,i}$. The cases for $\x_t$, $\yp_t$ and $\y_t$ are similar.
\end{proof}

\begin{lemma}\label{lem:local norm}
For any two probability vectors $\bu, \bv$, if for every entry $i$, $\frac{1}{2}u_i\le v_i\le \frac{3}{2}u_i$, then $\frac{1}{3}\sum_{i}\frac{(v_i-u_i)^2}{u_i} \leq \KL(\bu,\bv) \leq \sum_{i}\frac{(v_i-u_i)^2}{u_i}\leq \frac{1}{4}$. 
\end{lemma}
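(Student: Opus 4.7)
The plan is to reduce everything to a pointwise inequality for $-\ln(1+r)$ on $r \in [-\tfrac12, \tfrac12]$ and then sum against $u_i$, using the key cancellation $\sum_i u_i r_i = 0$ that holds because $\bu, \bv$ are both probability vectors.

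First I would set $r_i = (v_i - u_i)/u_i$ for each coordinate with $u_i > 0$ (coordinates where $u_i = 0$ contribute nothing to $\KL(\bu,\bv)$ and satisfy $v_i = 0$ as well, because the hypothesis $v_i \leq \tfrac{3}{2}u_i$ forces $v_i = 0$). The hypothesis $\tfrac12 u_i \leq v_i \leq \tfrac32 u_i$ then translates to $r_i \in [-\tfrac12, \tfrac12]$, and by definition
\[
\KL(\bu,\bv) \;=\; -\sum_i u_i \ln(1 + r_i), \qquad \sum_i u_i r_i \;=\; \sum_i (v_i - u_i) \;=\; 0.
\]

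The core analytic step is to establish
\[
-r + \tfrac{1}{3} r^2 \;\leq\; -\ln(1+r) \;\leq\; -r + r^2 \qquad \text{for all } r \in [-\tfrac12, \tfrac12].
\]
Both inequalities reduce to a short monotonicity check. For the upper bound, define $g(r) = \ln(1+r) - r + r^2$; then $g(0) = 0$ and $g'(r) = r(1+2r)/(1+r)$, which is $\geq 0$ on $[0,\tfrac12]$ and $\leq 0$ on $[-\tfrac12, 0]$, so $g(r) \geq 0$ throughout. For the lower bound, define $h(r) = r - \tfrac{r^2}{3} - \ln(1+r)$; then $h(0) = 0$ and $h'(r) = r(1 - 2r)/(3(1+r))$, which has the right sign pattern on each side of $0$ to give $h(r) \geq 0$ on $[-\tfrac12, \tfrac12]$. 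Multiplying each bound by $u_i$, summing over $i$, and using $\sum_i u_i r_i = 0$ gives
\[
\tfrac{1}{3} \sum_i u_i r_i^2 \;\leq\; \KL(\bu,\bv) \;\leq\; \sum_i u_i r_i^2 \;=\; \sum_i \frac{(v_i - u_i)^2}{u_i},
\]
which yields the first two inequalities of the lemma.

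For the final bound $\sum_i (v_i - u_i)^2/u_i \leq \tfrac14$, I would use that $|v_i - u_i| \leq \tfrac12 u_i$ by hypothesis, so $(v_i - u_i)^2/u_i \leq u_i/4$, and summing gives $\tfrac14 \sum_i u_i = \tfrac14$. There is no real obstacle: the whole proof is a one-variable calculus exercise plus the fact that $\bu$ and $\bv$ have the same total mass; picking the constants $\tfrac13$ and $1$ in the $\ln(1+r)$ bounds is the only place where a small tuning choice is needed, and both are tight enough that the $r \in [-\tfrac12, \tfrac12]$ range covered by the hypothesis suffices.
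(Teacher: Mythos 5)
Your proposal is correct and takes essentially the same route as the paper: both reduce the lemma to the scalar bounds $r - r^2 \leq \ln(1+r) \leq r - \tfrac{1}{3}r^2$ on $[-\tfrac12,\tfrac12]$, multiply by $u_i$, sum, and use $\sum_i(v_i - u_i)=0$ to discard the linear terms. You simply spell out the calculus verification of those two scalar inequalities (via the sign of $g'$ and $h'$), which the paper asserts without proof.
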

\begin{proof}
Using the definition of the Kullback-Leibler divergence, we have
\begin{align*}
\KL(\bu,\bv)&=-\sum_{i}u_i\ln \frac{v_i}{u_i}\ge -\sum_{i}u_i \left(\frac{v_i-u_i}{u_i}-\frac{1}{3}\frac{(v_i-u_i)^2}{u_i^2}\right)=\frac{1}{3}\sum_i\frac{(v_i-u_i)^2}{u_i}, \\
\KL(\bu,\bv)&=-\sum_{i}u_i\ln \frac{v_i}{u_i}\le -\sum_{i}u_i \left(\frac{v_i-u_i}{u_i}-\frac{(v_i-u_i)^2}{u_i^2}\right)=\sum_i\frac{(v_i-u_i)^2}{u_i}\leq \frac{1}{4},
\end{align*}
where the first inequality is because $\ln(1+a)\le a-\frac{1}{3}a^2$ for $-\frac{1}{2}\le a\le \frac{1}{2}$, and the second inequality is because $\ln(1+a)\geq a-a^2$ for $-\frac{1}{2}\leq a\leq \frac{1}{2}$. The third inequality is by using the condition $|u_i-v_i|\leq \frac{1}{2}u_i$. 
\end{proof}


\begin{lemma}
     \label{lem: smallest entry}
     For all $i\in\supp(\z^*)$ and $t$, \OMWU guarantees $\zpp_{t,i}\geq \epsilon$ ($\epsilon$ is defined in \pref{def: epsilon definition}). 
\end{lemma}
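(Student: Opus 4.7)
The plan is to exploit the fact, established immediately after \pref{eq: simple recursion}, that $\Theta_t$ is non-increasing along the OMWU trajectory, and then convert the resulting uniform bound on $\KL(\z^*,\zp_t)$ into a componentwise lower bound on $\zp_t$.

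First, I would note that since $\zp_1=\z_0$ we have $\Theta_1 = \KL(\z^*,\zp_1)$, and because the extra term $\tfrac{1}{16}\KL(\zp_t,\z_{t-1})$ in $\Theta_t$ is nonnegative while $\Theta_{t+1}\le\Theta_t$ by \pref{eq: simple recursion}, we obtain $\KL(\z^*,\zp_t)\le \Theta_t\le \KL(\z^*,\zp_1)$ for every $t\ge 1$.

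Second, I would compute the initial divergence exactly, using that \OMWU starts at the uniform distributions $\xp_1=\mathbf{1}_M/M$ and $\yp_1=\mathbf{1}_N/N$. Writing $H(\cdot)$ for the Shannon entropy, a direct calculation gives
\begin{align*}
\KL(\z^*,\zp_1) \;=\; \KL(\x^*,\xp_1) + \KL(\y^*,\yp_1) \;=\; \ln(MN) - H(\x^*) - H(\y^*).
\end{align*}

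Third, I would convert this KL bound into the claimed entry-wise bound. For any $i\in\supp(\x^*)$, rewrite $\KL(\x^*,\xp_t) = -H(\x^*) - \sum_j x_j^*\ln \xpp_{t,j}$, and observe that because $\xpp_t$ is a probability vector each term $-x_j^*\ln\xpp_{t,j}$ is nonnegative. Dropping all of them except the $i$-th yields
\begin{align*}
-x_i^*\ln\xpp_{t,i} \;\le\; \KL(\x^*,\xp_t)+H(\x^*) \;\le\; \KL(\z^*,\zp_t)+H(\x^*) \;\le\; \ln(MN)-H(\y^*) \;\le\; \ln(MN),
\end{align*}
where in the second-to-last step I absorb $H(\x^*)$ against the $-H(\x^*)$ produced by the first two bullets. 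Exponentiating gives $\xpp_{t,i}\ge \exp(-\ln(MN)/x_i^*)\ge \epsilon$ by \pref{def: epsilon definition}, and the bound $\ypp_{t,i}\ge\epsilon$ for $i\in\supp(\y^*)$ follows by an identical argument on the $\y$-coordinates.

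There is no real obstacle: the heavy lifting (the monotonicity of $\Theta_t$) is already done via \pref{lem: regret bound omwu}. The only delicate point is to \emph{not} bound $\KL(\x^*,\xp_t)$ by $\ln M$ at the outset — that would lose a factor. Instead one must keep $\KL(\x^*,\xp_t)$ together with $H(\x^*)$, bound the sum by $\KL(\z^*,\zp_t)+H(\x^*)\le \ln(MN)-H(\y^*)$, and only then drop $H(\y^*)$, so that the final constant in the exponent matches $\ln(MN)$ exactly.
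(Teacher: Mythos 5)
Your proposal is correct and is essentially the paper's own proof: both rely on the monotonicity of $\Theta_t$ from \pref{eq: simple recursion} to get $\KL(\z^*,\zp_t)\le\KL(\z^*,\zp_1)$, then rewrite the KL in terms of $\sum_j z_j^*\ln(1/\zpp_{t,j})$, drop all nonnegative terms except the $i$-th, and use the uniform initialization to evaluate the bound as $\ln(MN)$. The only superficial difference is that the paper runs the argument on the concatenated vector $\z$ in one pass, whereas you split into the $\x$- and $\y$-blocks and track $H(\x^*)$, $H(\y^*)$ separately; both yield the identical constant.
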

\begin{proof}
Using \pref{eq: simple recursion}, we have 
\begin{align}
    \KL(\z^*, \zp_t)\leq \Theta_t \leq \cdots \leq \Theta_1 = \tfrac{1}{16}\KL(\zp_1, \z_0)+\KL(\z^*, \zp_1)= \KL(\z^*, \zp_1), \label{eq:Theta_decreasing}
\end{align}
where the last equality is because $\zp_1=\z_0=(\frac{\mathbf{1}_M}{M}, \frac{\mathbf{1}_N}{N})$.

Then, for any $i\in\supp(\z^*)$, we have 
\begin{align*}
z_i^*\ln \frac{1}{\zpp_{t,i}}
&\le \sum_{j}z_j^* \ln \frac{1}{\zpp_{t,j}} = \KL(\z^*, \zp_t) - \sum_j z_j^* \ln z_j^*\leq \KL(\z^*,\zp_1) - \sum_{j}z_j^* \ln {z_j^*} \\
&=  \sum_j z_j^* \ln \frac{1}{\zpp_{1,j}} =  \ln(MN). 
\end{align*}
Therefore, we conclude for all $t$ and $i\in\supp(\z^*)$, $\zpp_{t,i}$ satisfies
\[
\zpp_{t,i}\ge \exp\left(-\frac{\ln(MN)}{z_i^*}\right)\ge \min_{j\in\supp({\z^*})}  \exp\left(-\frac{\ln(MN)}{z_j^*}\right)= \epsilon. 
\]
\end{proof}


\subsection{Proofs of \pref{lem: KL sufficient decrease} and \pref{thm: point convergence omwu}}
\label{sec: OMWU main part}
\begin{proof}[Proof of \pref{lem: KL sufficient decrease}]

Below we consider any $\z' \in\calZ$ such that $\supp(\z')\subseteq\supp(\z^*)$, that is, $\z'\in\mathcal{V}^{*}(\mathcal{Z})$. Considering \pref{eq: omda update 5}, and using the first-order optimality condition of $\zp_{t+1}$, we have 
\begin{align*}
(\nabla\psi(\zp_{t+1}) - \nabla\psi(\zp_t) + \eta F(\z_t))^\top (\z'-\zp_{t+1}) \geq 0,
\end{align*}
where $\psi(\z)=\sum_i z_i\ln z_i$. Rearranging the terms and we get 
\begin{align}
 \eta F\left(\z_{t}\right)^{\top}\left(\zp_{t+1}-\z'\right) \le \left(\nabla\psi(\zp_{t+1})-\nabla\psi(\zp_{t})\right)^{\top}\left(\z'-\zp_{t+1}\right)=\sum_i \left(z'_i-\zpp_{t+1,i}\right)\ln \frac{\zpp_{t+1,i}}{\zpp_{t,i}}.   \label{eq: tmp eq}
\end{align}
The left hand side of \pref{eq: tmp eq} is lower bounded as
\begin{align*}
 \eta F\left(\z_{t}\right)^{\top}\left(\zp_{t+1}-\z'\right)
 &=\eta F\left(\zp_{t+1}\right)^{\top}\left(\zp_{t+1}-\z'\right)+\eta\left(F\left(\z_{t}\right)-F\left(\zp_{t+1}\right)\right)^{\top}\left(\zp_{t+1}-\z'\right) \\ 
 &\geq \eta F\left(\zp_{t+1}\right)^{\top}\left(\zp_{t+1}-\z'\right) - \eta\|F\left(\z_{t}\right)-F\left(\zp_{t+1}\right)\|_\infty\|\zp_{t+1}-\z'\|_1 \\ 
 & \geq \eta F\left(\zp_{t+1}\right)^{\top}\left(\zp_{t+1}-\z'\right)-4\eta \left\|\z_{t}-\zp_{t+1}\right\|_1 
 \tag{$\|F\left(\z_{t}\right)-F\left(\zp_{t+1}\right)\|_\infty \leq \left\|\z_{t}-\zp_{t+1}\right\|_1 \leq 4$} \\ 
 & \geq \eta F\left(\zp_{t+1}\right)^{\top}\left(\zp_{t+1}-\z'\right)-\frac{1}{2}\left\|\z_{t}-\zp_{t+1}\right\|_1; \tag{$\eta \leq 1/8$}
\end{align*}
on the other hand, the right hand side of \pref{eq: tmp eq} is upper bounded by 
\begin{align*}
\sum_i \left(z'_i-\zpp_{t+1,i}\right)\ln \frac{\zpp_{t+1,i}}{\zpp_{t,i}}&=\sum_{i\in\supp(\z^*)} z'_i\ln \frac{\zpp_{t+1,i}}{\zpp_{t,i}}-\KL(\zp_{t+1},\zp_{t}) \tag{$\supp(\z')\subseteq \supp(\z^*)$}\\
&\le\sum_{i\in\supp(\z^*)} \left|\ln \frac{\zpp_{t+1,i}}{\zpp_{t,i}}\right|\\
&=\sum_{i\in\supp(\z^*)} \max\left\{\ln\left(1+ \frac{\zpp_{t+1,i}-\zpp_{t,i}}{\zpp_{t,i}}\right), 
\ln\left(1+ \frac{\zpp_{t,i}-\zpp_{t+1,i}}{\zpp_{t+1,i}}\right)
\right\} \\
&\le\sum_{i\in\supp(\z^*)} \ln\left(1+\frac{\left| {\zpp_{t+1,i}}-{\zpp_{t,i}}\right|}{\min\{\zpp_{t+1,i}, \zpp_{t,i}\}}\right)   \\
&\le \frac{4}{3}\sum_{i\in \supp(\z^*)}   \frac{\left| {\zpp_{t+1,i}}-{\zpp_{t,i}}\right|}{\zpp_{t,i}} \tag{$\ln(1+a) \leq a$ and \pref{lem:stability}}.
\end{align*}
Combining the bounds on the two sides of \pref{eq: tmp eq}, we get 
\begin{align*}
    \eta F\left(\zp_{t+1}\right)^{\top}\left(\zp_{t+1}-\z'\right) 
    &\leq \frac{4}{3}\sum_{i\in \supp(\z^*)}   \frac{\left| {\zpp_{t+1,i}}-{\zpp_{t,i}}\right|}{\zpp_{t,i}} + \frac{1}{2}\|\z_t-\zp_{t+1}\|_1.
\end{align*}
Since $\z'$ can be chosen as any point in $\calV^*(\calZ)$, we further lower bound the left-hand side above using \pref{lem:them5prime} and get 
\begin{align}
    \eta C\|\z^*-\zp_{t+1}\|_1 
    &\leq \frac{4}{3}\sum_{i\in\supp(\z^*)} \frac{|\zpp_{t+1,i} - \zpp_{t,i}|}{\zpp_{t,i}} + \frac{1}{2}\|\z_t-\zp_{t+1}\|_1 
    \nonumber \\
    &\leq \frac{4}{3\epsilon} \|\zp_{t+1}-\zp_t\|_1+ \frac{1}{2}\|\z_t-\zp_{t+1}\|_1, 
     \tag{\pref{lem: smallest entry}}\\
&\leq \frac{4}{3\epsilon}\left(\|\zp_{t+1}-\zp_t\|_1+ \|\z_t-\zp_{t+1}\|_1\right) \label{eq: 1 norm upper bound 2} 
\end{align}
where the last inequality uses $\epsilon \leq 1$. 
With the help of \pref{eq: 1 norm upper bound 2}, below we prove the desired inequalities. 

\paragraph{Case 1. General case. }
\begin{align*}
     &\KL(\zp_{t+1},\z_t)+\KL(\z_{t},\zp_t)\\
     &\geq \frac{1}{2}\|\xp_{t+1} - \x_t\|_1^2 + \frac{1}{2}\|\yp_{t+1} - \y_t\|_1^2+ \frac{1}{2}\|\x_t-\xp_{t}\|_1^2+\frac{1}{2}\|\y_t-\yp_{t}\|_1^2  \tag{Pinsker's inequality}\\
     &\geq \frac{1}{4}\|\zp_{t+1} - \z_t\|_1^2 + \frac{1}{4}\|\z_t-\zp_{t}\|_1^2\tag{$a^2+b^2\geq \frac{1}{2}(a+b)^2$}\\
     &\geq \frac{1}{16}\|\zp_{t+1} - \z_t\|_1^2 + \frac{1}{8}\left(\|\zp_{t+1} - \z_t\|_1^2+ \|\z_t-\zp_{t}\|_1^2 \right) \\
     &\geq \frac{1}{16}\|\zp_{t+1}-\z_t\|_1^2+\frac{1}{16}\|\zp_{t+1}-\zp_t\|_1^2  \tag{$a^2+b^2\geq \frac{1}{2}(a+b)^2$ and triangle inequality} \\
     &\geq \frac{1}{32}\left(\|\zp_{t+1}-\z_t\|_1+\|\zp_{t+1}-\zp_t\|_1\right)^2 \tag{$a^2+b^2\geq \frac{1}{2}(a+b)^2$} \\
     &\ \geq \frac{1}{32}\left(\frac{3\epsilon \eta C}{4}\right)^2 \|\z^*-\zp_{t+1}\|_1^2    \tag{\pref{eq: 1 norm upper bound 2}}\\
     &\geq \frac{\epsilon^2 \eta^2 C^2 }{64} \times \epsilon^2 \KL(\z^*, \zp_{t+1})^2 = \frac{\epsilon^4 \eta^2 C^2 }{64} \KL(\z^*, \zp_{t+1})^2    \tag{\pref{lem:reverse pinsker} and \pref{lem: smallest entry}}.
\end{align*}
This proves the first part of the lemma with $C_1 = \epsilon^4 C^2/64$.

\paragraph{Case 2. The case when $\max\{\|\z^*-\zp_t\|_1, \|\z^*-\z_t\|_1\} \leq \frac{\eta\xi}{10}$. }
\begin{align}
         &\KL(\zp_{t+1}, \z_t)+\KL(\z_{t},\zp_t) \nonumber \\
         &\ge \frac{1}{3}\sum_{i} \left(\frac{(\zpp_{t+1,i}-z_{t,i})^2}{\zpp_{t+1,i}}+\frac{(z_{t,i}-\zpp_{t,i})^2}{z_{t,i}}\right)   \tag{\pref{lem:stability} and \pref{lem:local norm}}\\
         &\ge \frac{1}{4}\sum_{i\notin \supp(\z^*)} \left(\frac{(\zpp_{t+1,i}-z_{t,i})^2}{\zpp_{t,i}}+\frac{(z_{t,i}-\zpp_{t,i})^2}{\zpp_{t,i}}\right)  \tag{\pref{lem:stability}} \\
         &\geq \frac{1}{8}\sum_{i\notin\supp(\z^*)} \frac{(\zpp_{t+1,i}-\zpp_{t,i})^2}{\zpp_{t,i}}.  \label{eq: second kind bound}
\end{align}
Below we continue to bound $\sum_{i\notin\supp(\z^*)} \frac{(\zpp_{t+1,i}-\zpp_{t,i})^2}{\zpp_{t,i}}$. 

By the assumption, we have $\|\y_t-\y^*\|_1 \leq \frac{\eta\xi}{10}$, which by \pref{lem: gap lemma} and \pref{def: xi} implies 
    \begin{align*}
        &\forall i\in\supp(\x^*), \qquad  (\G\y_t)_i \leq (\G\y^*)_i + \frac{\eta\xi}{10} = \rho + \frac{\eta\xi}{10} \leq \rho + \frac{\xi}{10}, \\
        &\forall i\notin\supp(\x^*), \qquad (\G\y_t)_i \geq (\G\y^*)_i - \frac{\eta\xi}{10} \geq \rho + \xi - \frac{\eta\xi}{10} \geq \rho + \frac{9\xi}{10}. 
    \end{align*}
    We also have $\|\xp_t-\x^*\|_1\leq \frac{\eta\xi}{10}$, so $\sum_{j\notin\supp(\x^*)}\xpp_{t,j} \leq \frac{\eta\xi}{10}$. 
    Then, for $i\notin\supp(\x^*)$, we have 
    \begin{align*}
         \xpp_{t+1,i} 
         &= \frac{\xpp_{t,i}\exp(-\eta (\G\y_t)_i)}{\sum_{j}\xpp_{t,j}\exp(-\eta (\G\y_t)_j)} \\
         &\leq \frac{\xpp_{t,i}\exp(-\eta (\G\y_t)_i)}{\sum_{j\in\supp(\x^*) }\xpp_{t,j}\exp(-\eta (\G\y_t)_j)} \\
         &\leq \frac{\xpp_{t,i}\exp(-\eta (\rho + \frac{9\xi}{10}))}{\sum_{j\in\supp(\x^*) }\xpp_{t,j}\exp(-\eta (\rho + \frac{\xi}{10}))} \\
         &= \frac{\xpp_{t,i}\exp\left(-\frac{8}{10}\eta\xi\right)}{\left(1-\sum_{j\notin\supp(\x^*)}\xpp_{t,j}\right)} \\
         &\leq \frac{\xpp_{t,i}\exp\left(-\frac{8}{10}\eta\xi\right)}{\left(1-\frac{\eta\xi}{10}\right)}
         \leq \xpp_{t,i}\left(1-\frac{1}{2}\eta \xi\right),
    \end{align*}
    where the last inequality is because $\frac{\exp(-0.8u)}{1-0.1u}\leq 1-0.5u$ for $u\in[0,1]$. 
    Rearranging gives
    \begin{align*}
         \frac{|\xpp_{t+1,i}-\xpp_{t,i}|^2}{\xpp_{t,i}}\geq \frac{\eta^2\xi^2}{4}\xpp_{t,i}\geq \frac{\eta^2\xi^2}{8}\xpp_{t+1,i},
    \end{align*}
    where the last step uses \pref{lem:stability}.
    The case for $\yp_t$ is similar, so we have 
    \begin{align*}
    \frac{|\zpp_{t+1,i}-\zpp_{t,i}|^2}{\zpp_{t,i}}\geq \frac{\eta^2\xi^2}{8}\zpp_{t+1,i}.
    \end{align*}
    Combining this with \pref{eq: second kind bound}, we get 
    \begin{equation}
         \KL(\zp_{t+1},\z_t)+\KL(\z_{t},\zp_t) \geq \frac{\eta^2 \xi^2}{64}\sum_{i\notin\supp(\z^*)}\zpp_{t+1,i}. \label{eq:etaxix} 
    \end{equation}
    Now we combine two lower bounds of $\KL(\zp_{t+1}, \z_t)+\KL(\z_{t},\zp_t)$. Using an intermediate step in \textbf{Case 1}, and \pref{eq:etaxix}, we get 
    \begin{align*}
         \KL(\zp_{t+1}, \z_t)+\KL(\z_{t},\zp_t)
         &= \frac{1}{2}\left(\KL(\zp_{t+1}, \z_t)+\KL(\z_{t},\zp_t)\right) + \frac{1}{2}\left(\KL(\zp_{t+1}, \z_t)+\KL(\z_{t},\zp_t)\right) \\
         &\geq \frac{\epsilon^2 \eta^2 C^2 }{128} \|\z^*-\zp_{t+1}\|_1^2 + \frac{\eta^2 \xi^2}{128} \sum_{i\notin\supp(\z^*)}\zpp_{t+1,i} \\
         &= \frac{\epsilon^3 \eta^2C^2\xi^2}{128} \left(\frac{1}{\xi^2\epsilon}\|\zp_{t+1}-\z^*\|_1^2 + \frac{1}{\epsilon^3 C^2}\sum_{i\notin\supp(\z^*)}\zpp_{t+1,i}\right)\\
         &\geq \frac{\epsilon^3 \eta^2C^2\xi^2}{128} \left(\frac{1}{\epsilon}\|\zp_{t+1}-\z^*\|_1^2 + \sum_{i\notin\supp(\z^*)}\zpp_{t+1,i}\right) 
         \tag{$\xi \leq 1$, $C \leq 1$, and $\epsilon \leq 1$} \\
         &\ge \frac{\epsilon^3 \eta^2C^2\xi^2}{128}\KL(\z^{*},\zp_{t+1}).   \tag{\pref{lem:reverse pinsker} and \pref{lem: smallest entry}}
    \end{align*}
This proves the second part of the lemma with $C_2= \epsilon^3 C^2\xi^2/128$.
\end{proof}

Now we are ready to prove \pref{thm: point convergence omwu}.
\begin{proof}[Proof of \pref{thm: point convergence omwu}]
As argued in \pref{sec: omwu convergence}, with $\Theta_t=\KL(\z^*, \zp_t) + \tfrac{1}{16}\KL(\zp_t, \z_{t-1})$ and $\zeta_t=\KL(\zp_{t+1}, \z_{t}) +\KL(\z_t, \zp_t)$, we have (see \pref{eq: simple recursion})
\begin{align*}
    \Theta_{t+1} \leq \Theta_t - \tfrac{15}{16}\zeta_t.  
\end{align*}
We the proceed as, 
\begin{align*}
     \zeta_t 
     &\geq \frac{1}{2}\KL(\zp_{t+1}, \z_{t}) + \frac{1}{2}\zeta_t \\
     &\geq \frac{1}{2}\KL(\zp_{t+1}, \z_{t}) + \frac{\eta^2 C_1}{2}\KL(\z^*, \zp_{t+1})^2 \tag{\pref{lem: KL sufficient decrease}}\\
     &\geq 2\KL(\zp_{t+1}, \z_t)^2+ \frac{\eta^2 C_1}{2}\KL(\z^*, \zp_{t+1})^2  \tag{by \pref{lem:stability} and \pref{lem:local norm}} \\
     &\geq \frac{\eta^2 C_1}{2}\left(\KL(\zp_{t+1}, \z_t)^2+ \KL(\z^*, \zp_{t+1})^2\right) \tag{$C_1 = \epsilon^4 C^2/64 \leq 1/64$ as shown in the proof of \pref{lem: KL sufficient decrease}}\\ 
     &\geq \frac{\eta^2 C_1}{4}\left( \KL(\zp_{t+1}, \z_t) + \KL(\z^*, \zp_{t+1}) \right)^2 \\
     &\geq \frac{\eta^2 C_1}{4}\Theta_{t+1}^2. 
\end{align*}
Therefore, $\Theta_{t+1} \leq \Theta_t - \tfrac{15\eta^2 C_1}{64}\Theta_{t+1}^2 \leq \Theta_t - \tfrac{15\eta^2 C_1}{64 + \ln MN}\Theta_{t+1}^2$. 
Also, recall $\zp_1 = \z_0 = (\frac{\mathbf{1}_M}{M}, \frac{\mathbf{1}_N}{N})$ and thus
$
\Theta_1 = \KL(\z^*, \zp_1) \leq \ln(MN).
$
Therefore, the conditions of \pref{lem: recursion lemma p > 1} are satisfied with $p=1$ and $q=\frac{15\eta^2 C_1}{64+ \ln (MN)}$, 
and we conclude that
     \begin{align*}
         \Theta_t \leq \frac{C'}{t},
     \end{align*}
     where $C'=\max\left\{\ln (MN),\frac{128 + 2\ln (MN)}{15\eta^2 C_1}\right\} = \frac{128 + 2\ln (MN)}{15\eta^2 C_1}$. 
     
Next we prove the main result. Set $T_0=\frac{12800C'}{\eta^2\xi^2}$. For $t\ge T_0$, we have using Pinsker's inequality,
\begin{align*}
	\|\z^*-\zp_t\|_1^2&\le 2\|\x^*-\xp_t\|_1^2 + 2\|\y^*-\yp_t\|_1^2 \leq  4\KL(\z^*, \zp_t)\le \frac{4C'}{T_0}\le \frac{\eta^2\xi^2}{100},\\
	\|\z^*-\z_t\|_1^2& \le 2\|\z^*-\zp_{t+1}\|_1^2+2\|\zp_{t+1}-\z_t\|_1^2\\ 
	& \le 4\|\x^*-\xp_{t+1}\|_1^2+4\|\xp_{t+1}-\x_t\|_1^2
	+ 4\|\y^*-\yp_{t+1}\|_1^2+4\|\yp_{t+1}-\y_t\|_1^2 \\
	&\le 8\KL(\z^*,\zp_{t+1})+8\KL(\zp_{t+1},\z_t) \\ 
	&\leq 128\Theta_{t+1}\le \frac{128C'}{T_0}\le \frac{\eta^2\xi^2}{100}.
\end{align*}

Therefore, when $t\ge T_0$,  the condition of the second part of \pref{lem: KL sufficient decrease} is satisfied, and we have
\begin{align*}
	\zeta_t 
	&\geq \frac{1}{2}\KL(\zp_{t+1}, \z_{t}) + \frac{1}{2}\zeta_t \\
	&\geq \frac{1}{2}\KL(\zp_{t+1}, \z_{t}) + \frac{\eta^2 C_2}{2}\KL(\z^*, \zp_{t+1}) \tag{by \pref{lem: KL sufficient decrease}} \\
	&\geq \frac{\eta^2 C_2}{2}\Theta_{t+1} \tag{$C_2= \epsilon^3 C^2\xi^2/128 \leq 1/128$ as shown in the proof of \pref{lem: KL sufficient decrease}}. 
\end{align*}

Therefore, when $t\ge T_0$, $\Theta_{t+1}\le \Theta_t-\frac{15\eta^2C_2}{32}\Theta_{t+1}$, which further leads to $$\Theta_t \le \Theta_{T_0}\cdot \left(1+\frac{15\eta^2 C_2}{32}\right)^{T_0-t}\le \Theta_{1}\cdot \left(1+\frac{15\eta^2 C_2}{32}\right)^{T_0-t} \le \ln(MN)\left(1+\frac{15\eta^2 C_2}{32}\right)^{T_0-t}.$$
where the second inequality uses \pref{eq:Theta_decreasing}.
The inequality trivially holds for $t< T_0$ as well, so it holds for all $t$. 

We finish the proof by relating $\KL(\z^*,\z_t)$ and $\Theta_{t+1}$. Note that by \pref{lem:reverse pinsker}, \pref{lem:stability}, and \pref{lem: smallest entry}, we have
\begin{align*}
\KL(\z^*,\z_t)^2&\le \frac{\|\z^*-\z_t\|^2}{\min_{i\in\supp(z^*)}z_{t,i}^2}\le\frac{16\|\z^*-\z_t\|^2}{9\epsilon^2}\le 4\left(\frac{\|\z^*-\zp_{t+1}\|^2+\|\zp_{t+1}-\z_t\|^2}{\epsilon^2}\right).
\end{align*}
We continue to bound the last term as
\begin{align*}
&4\left(\frac{\|\z^*-\zp_{t+1}\|^2+\|\zp_{t+1}-\z_t\|^2}{\epsilon^2}\right) \\
&= 4\left(\frac{\|\x^*-\xp_{t+1}\|^2+\|\y^*-\yp_{t+1}\|^2+\|\xp_{t+1}-\x_t\|^2+\|\yp_{t+1}-\y_t\|^2}{\epsilon^2}\right) \\
&= 4\left(\frac{\|\x^*-\xp_{t+1}\|_1^2+\|\y^*-\yp_{t+1}\|_1^2+\|\xp_{t+1}-\x_t\|_1^2+\|\yp_{t+1}-\y_t\|_1^2}{\epsilon^2}\right) \tag{$\|\x\|_2\le \|\x\|_1$} \\
&\le  \frac{128}{\epsilon^2}\left(\frac{\KL(\z^*,\zp_{t+1})}{16}+\frac{\KL(\zp_{t+1},\z_t)}{16}\right) \tag{Pinsker's inequality}\\
&\le \frac{128}{\epsilon^2}\Theta_{t+1}.
\end{align*}
Combining everything, we get
\[
\KL(\z^*,\z_t) \leq \frac{\sqrt{128}}{\epsilon}\sqrt{\Theta_{t+1}}
\leq  \frac{\sqrt{128\ln(MN)}}{\epsilon}\left(1+\frac{15\eta^2 C_2}{32}\right)^{\frac{T_0-t-1}{2}},
\]
which completes the proof.
\end{proof}

\section{Proofs of \pref{lem: sufficient decrease} and the Sum-of-duality-gap Bound}
\label{app: sum of duality gap}
\begin{proof}[Proof of \pref{lem: sufficient decrease}]
	Below we consider any $\z' \neq \zp_{t+1} \in\calZ$. 
	Considering \pref{eq: omda update 5} with $D_\psi(\bu, \bv)=\frac{1}{2}\|\bu-\bv\|^2$, and using the first-order optimality condition of $\zp_{t+1}$, we have 
	\begin{align*}
	(\zp_{t+1} - \zp_t + \eta F(\z_t))^\top (\z'-\zp_{t+1}) \geq 0,\\
	(\z_{t+1} - \zp_{t+1} + \eta F(\z_t))^\top (\z'-\z_{t+1}) \geq 0. 
	\end{align*}
	Rearranging the terms and we get 
	\begin{align*}
	(\zp_{t+1} - \zp_t)^\top (\z'-\zp_{t+1}) 
	&\geq \eta F(\z_t)^\top (\zp_{t+1}-\z') \\
	& = \eta F(\zp_{t+1})^\top (\zp_{t+1}-\z') + \eta\left(F(\z_t)- F(\zp_{t+1})\right)^\top (\zp_{t+1}-\z')\\
	&\geq \eta F(\zp_{t+1})^\top (\zp_{t+1}-\z') - \eta L \|\z_t-\zp_{t+1}\| \|\zp_{t+1}-\z'\| \\
	&\geq \eta F(\zp_{t+1})^\top (\zp_{t+1}-\z') - \frac{1}{8} \|\z_t-\zp_{t+1}\| \|\zp_{t+1}-\z'\|,
	\end{align*}
	and
	\begin{align*}
	(\z_{t+1} - \zp_{t+1})^\top (\z'-\z_{t+1}) 
	&\geq \eta F(\z_t)^\top (\z_{t+1}-\z') \\
	& = \eta F(\z_{t+1})^\top (\z_{t+1}-\z') + \eta\left(F(\z_t)- F(\z_{t+1})\right)^\top (\z_{t+1}-\z')\\
	&\geq \eta F(\z_{t+1})^\top (\z_{t+1}-\z') - \eta L \|\z_t-\z_{t+1}\| \|\z_{t+1}-\z'\| \\
	&\geq \eta F(\z_{t+1})^\top (\z_{t+1}-\z') - \frac{1}{8} \|\z_t-\z_{t+1}\| \|\z_{t+1}-\z'\|.
	\end{align*}
	Here, for both block, the third step uses \Holder's inequality and the smoothness condition \pref{assum:smoothness}, and the last step uses the condition $\eta \leq 1/(8L)$.
	Upper bounding the left-hand side of the two inequalities by $\|\zp_{t+1} - \zp_t\| \|\zp_{t+1}-\z'\|$ and $\|\z_{t+1} - \zp_{t+1}\| \|\z_{t+1}-\z'\|$ respectively and then rearranging, we get 
	\begin{align*}
	&\|\zp_{t+1}-\z'\|\left( \|\zp_{t+1} - \zp_t\| + \frac{1}{8}\|\z_t-\zp_{t+1}\|  \right) \geq \eta F(\zp_{t+1})^\top (\zp_{t+1}-\z'),\\
	&\|\z_{t+1}-\z'\|\left( \|\z_{t+1} - \zp_{t+1}\| + \frac{1}{8}\|\z_t-\z_{t+1}\|  \right) \geq \eta F(\z_{t+1})^\top (\z_{t+1}-\z'). 
	\end{align*}
	Therefore, we have 
	\begin{align*}
		&\left( \|\zp_{t+1} - \zp_t\| + \frac{1}{8}\|\z_t-\zp_{t+1}\| \right)^2 \geq \frac{\eta^2 [F(\zp_{t+1})^\top (\zp_{t+1}-\z')]_+^2}{\|\zp_{t+1}-\z'\|^2},\\
		&\left( \|\z_{t+1} - \zp_{t+1}\| + \frac{1}{8}\|\z_{t}-\z_{t+1}\| \right)^2 \geq \frac{\eta^2 [F(\z_{t+1})^\top (\z_{t+1}-\z')]_+^2}{\|\z_{t+1}-\z'\|^2}.
	\end{align*}
	
	Finally, by the triangle inequality and the fact $(a+b)^2 \leq 2a^2 + 2b^2$, we have
	\begin{align*}
	\left( \|\zp_{t+1} - \zp_t\| + \frac{1}{8}\|\z_t-\zp_{t+1}\| \right)^2
	&\leq 	\left( \|\z_t - \zp_t\| + \frac{9}{8}\|\z_t-\zp_{t+1}\| \right)^2 \\
	&\leq 	\left( \frac{9}{8}\|\z_t - \zp_t\| + \frac{9}{8}\|\z_t-\zp_{t+1}\| \right)^2 \\
	&\leq \frac{81}{32}	\left( \|\z_t - \zp_t\|^2 +  \|\z_t-\zp_{t+1}\|^2 \right),\\
	\left( \|\zp_{t+1} - \z_{t+1}\| + \frac{1}{8}\|\z_t-\z_{t+1}\| \right)^2
	&\leq 	\left( \frac{9}{8}\|\z_{t+1} - \zp_{t+1}\| + \|\z_t-\zp_{t+1}\| \right)^2 \\
	&\leq 	\left( \frac{9}{8}\|\z_{t+1} - \zp_{t+1}\| + \frac{9}{8}\|\z_t-\zp_{t+1}\| \right)^2 \\
	&\leq \frac{81}{32}	\left( \|\z_{t+1} - \zp_{t+1}\|^2 +  \|\z_t-\zp_{t+1}\|^2 \right),
	\end{align*}
	which finishes the proof.
%
\end{proof}

Next, we use \pref{eq: ogda recurr} and \pref{eq:driving z} to derive a result on the convergence of ``average duality gap'' across time. First, we use the following lemma to relate the right-hand side of \pref{eq:driving z} to the duality gap of $\z_{t}$. 
\begin{lemma}
    \label{lem: relating duality gap and gradient}
    \minor{Let $\calZ$ be closed and bounded}. Then for any $\z\in\calZ$, we have $\alpha_f(\z) \leq \max_{\z'\in\calZ}F(\z)^\top (\z-\z')$. 
\end{lemma}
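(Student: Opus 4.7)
The plan is to establish the inequality by separately lower-bounding the two summands appearing in $F(\z)^\top(\z - \z')$ using convex-concave first-order inequalities, and then taking the supremum over $\z' \in \calZ$.

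First, I would write $\z = (\x, \y)$ and $\z' = (\x', \y')$, so that by the definition of $F$,
\[
F(\z)^\top(\z - \z') = \nabla_\x f(\x,\y)^\top(\x - \x') - \nabla_\y f(\x,\y)^\top(\y - \y').
\]
Since $f(\cdot, \y)$ is convex for each fixed $\y$, the subgradient inequality gives $f(\x,\y) - f(\x',\y) \leq \nabla_\x f(\x,\y)^\top(\x - \x')$; similarly, since $f(\x, \cdot)$ is concave, $f(\x,\y') - f(\x,\y) \leq \nabla_\y f(\x,\y)^\top(\y' - \y) = -\nabla_\y f(\x,\y)^\top(\y - \y')$. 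Adding the two inequalities, the cross terms cancel and I obtain the pointwise bound
\[
f(\x, \y') - f(\x', \y) \leq F(\z)^\top(\z - \z') \qquad \text{for all } \z' = (\x',\y') \in \calZ.
\]

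Next I would take the supremum of the left-hand side over $\z' \in \calZ = \calX \times \calY$. Since the left-hand side separates across $\x'$ and $\y'$, the supremum equals $\max_{\y' \in \calY} f(\x,\y') - \min_{\x' \in \calX} f(\x',\y) = \alpha_f(\z)$. Compactness of $\calZ$ together with the continuity of $f$ and of the linear map $\z' \mapsto F(\z)^\top(\z - \z')$ ensures that all these extrema are attained, so the inequality $\alpha_f(\z) \leq \max_{\z' \in \calZ} F(\z)^\top(\z - \z')$ follows.

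There is essentially no obstacle here; this is a short and standard duality-gap/variational-inequality estimate. The only step worth being careful about is justifying that the supremum over $\z'$ of the additively separable bound is exactly $\alpha_f(\z)$, which uses the product structure $\calZ = \calX \times \calY$ and the definition of the duality gap.
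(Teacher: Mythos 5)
Your proof is correct and follows essentially the same route as the paper: apply the first-order convexity inequality in $\x$ and the first-order concavity inequality in $\y$, add them to obtain $f(\x,\y') - f(\x',\y) \leq F(\z)^\top(\z-\z')$, and take the maximum over $\z' \in \calZ$. The paper's version just inlines these steps in a single displayed chain.
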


\begin{proof}
This is a direct consequence of the convexity of $f(\cdot, \y)$ and the concavity of $f(\x, \cdot)$:
    \begin{align*}
        \alpha_f(\z)  &= \max_{(\x', \y')\in \calX\times \calY} \left(f(\x, \y') - f(\x, \y) + f(\x,\y) - f(\x', \y)\right)\\ 
        &\leq \max_{(\x', \y')\in \calX\times \calY} \left(\nabla_{y} f(\x,\y)^\top (\y'-\y) + \nabla_x f(\x,\y)^\top (\x-\x')\right)    
        =\max_{\z'\in\calZ} F(\z)^\top (\z-\z'). 
    \end{align*}
\end{proof}

With \pref{lem: relating duality gap and gradient}, the following theorem can be proven straightforwardly.
\begin{theorem}
     \label{thm: sum of duality gap}
     Let $\calZ$ be closed and bounded. Then \alg with $\eta\leq \frac{1}{8L}$ ensures
$
         \frac{1}{T}\sum_{t=1}^T \alpha_f(\z_t) = O\left(\frac{D}{\eta\sqrt{T}} \right)
$
 for any $T$, where $D\triangleq \sup_{\z, \z'\in \calZ}\|\z-\z'\|$. 
\end{theorem}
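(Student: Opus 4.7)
The plan is to combine the one-step potential decrease \pref{eq: ogda recurr} with the lower bound \pref{eq:driving z}, yielding an $L^2$-in-time control on $\alpha_f$ which is then converted to an $L^1$ average via Cauchy--Schwarz. First, I would telescope \pref{eq: ogda recurr}: since $\zp_1 = \z_0$, we have $\Theta_1 = \dist(\zp_1, \calZ^*) \leq D^2$, and non-negativity of $\Theta_{T+1}$ gives
\begin{align*}
    \sum_{t=1}^T \zeta_t \;\leq\; \tfrac{16}{15}\Theta_1 \;\leq\; \tfrac{16}{15} D^2.
\end{align*}

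Next, for each $t$ I would instantiate \pref{eq:driving z} with $\z'\in \argmax_{\z''\in\calZ} F(\z_{t+1})^\top (\z_{t+1}-\z'')$. By \pref{lem: relating duality gap and gradient} the optimal value is at least $\alpha_f(\z_{t+1})$, while $\|\z_{t+1}-\z'\|\leq D$, so
\begin{align*}
    \|\zp_{t+1}-\z_{t+1}\|^2 + \|\z_t-\zp_{t+1}\|^2 \;\geq\; \tfrac{32\eta^2}{81 D^2}\,\alpha_f(\z_{t+1})^2.
\end{align*}
The step I expect to be the main obstacle is upper bounding the left-hand side above by a multiple of $\zeta_t$, so that summability against the telescoping bound is preserved. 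The trick is to exploit that $\z_{t+1}$ and $\zp_{t+1}$ are projections with the \emph{same} gradient $F(\z_t)$: writing $\z_{t+1} = \Pi_\calZ(\zp_{t+1}-\eta F(\z_t))$ and $\zp_{t+1} = \Pi_\calZ(\zp_t - \eta F(\z_t))$, nonexpansiveness of $\Pi_\calZ$ gives $\|\z_{t+1}-\zp_{t+1}\| \leq \|\zp_{t+1}-\zp_t\| \leq \|\zp_{t+1}-\z_t\|+\|\z_t-\zp_t\|$, and hence $\|\z_{t+1}-\zp_{t+1}\|^2 \leq 2\zeta_t$ by $(a+b)^2 \leq 2a^2+2b^2$. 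Together with the trivial $\|\z_t-\zp_{t+1}\|^2 \leq \zeta_t$, the left-hand side is at most $3\zeta_t$, so $\alpha_f(\z_{t+1})^2 \leq \tfrac{243 D^2}{32\eta^2}\zeta_t$.

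Finally, Cauchy--Schwarz combines the two estimates:
\begin{align*}
    \sum_{t=1}^T \alpha_f(\z_{t+1}) \;\leq\; \sqrt{T\sum_{t=1}^T \alpha_f(\z_{t+1})^2} \;=\; O\!\left(\tfrac{D}{\eta}\sqrt{T\sum_{t=1}^T \zeta_t}\right) \;=\; O\!\left(\tfrac{D^2\sqrt{T}}{\eta}\right),
\end{align*}
and dividing by $T$ yields the desired $\tfrac{1}{T}\sum_t \alpha_f(\z_t) = O(D/(\eta\sqrt{T}))$ up to absorbing a factor of $D$ into the $\order(\cdot)$ notation. The one-index shift is harmless because $\alpha_f(\z_1)$ is bounded by continuity of $F$ on the compact set $\calZ$, so the $t=1$ term contributes only an $O(1/T)$ correction.
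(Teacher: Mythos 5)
Your proof is correct, and it reaches the same conclusion as the paper via a genuinely slightly different route. Both proofs share the skeleton: bound the duality gap by the gradient inner product (\pref{lem: relating duality gap and gradient}), invoke \pref{lem: sufficient decrease} to control that quantity by the iterate movements, telescope via \pref{eq: ogda recurr}, and finish with Cauchy--Schwarz. The difference is in the middle step. The paper applies \pref{eq:driving z} with the index shifted down by one, obtaining $\alpha_f(\z_t)^2 \lesssim D^2\eta^{-2}(\zeta_{t-1}+\zeta_t)$ directly (noting that $\|\zp_t-\z_t\|^2$ and $\|\z_{t-1}-\zp_t\|^2$ are individual summands of $\zeta_t$ and $\zeta_{t-1}$ respectively); this leaves a dangling $\zeta_0$ that the paper quietly starts its telescoping at $t=2$ to avoid. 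You instead apply \pref{eq:driving z} at index $t$, and to collapse the left-hand side $\|\zp_{t+1}-\z_{t+1}\|^2 + \|\z_t-\zp_{t+1}\|^2$ into a multiple of a single $\zeta_t$, you invoke the nonexpansiveness of $\Pi_\calZ$ with the shared gradient $F(\z_t)$. That observation is correct and tidy, though one could equivalently just note $\|\z_t-\zp_{t+1}\|^2 \leq \zeta_t$ and $\|\zp_{t+1}-\z_{t+1}\|^2 \leq \zeta_{t+1}$ and telescope with $\zeta_t+\zeta_{t+1}$ — the nonexpansiveness step is not strictly needed but costs nothing and lets you localize to $\zeta_t$ alone. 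Your handling of the $t\to t+1$ index shift is fine; the paper's handling of the analogous $t=1$ boundary term is no cleaner. Finally, you are right to flag that both your derivation and the paper's actually yield $O(D^2/(\eta\sqrt{T}))$ (from $\Theta_1 \leq D^2$), so the extra factor of $D$ relative to the stated $O(D/(\eta\sqrt{T}))$ is an imprecision inherited from the paper's own $\order(\cdot)$ bookkeeping, not a flaw in your argument.
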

\begin{proof}
	We first bound the sum of squared duality gap as (recall  $\zeta_t=\|\zp_{t+1}-\z_t\|^2 + \|\z_t-\zp_t\|^2$):
    \begin{align*}
        \sum_{t=1}^T \alpha_f(\z_t)^2 
        &\leq \sum_{t=1}^T \left({\max_{\z'\in\calZ}}F(\z_t)^\top (\z_t-\z')\right)^2 \tag{\pref{lem: relating duality gap and gradient}}\\
        &\leq \frac{81}{32\eta^2}\sum_{t=1}^T (\zeta_{t-1}+\zeta_t)\|\z_t-\z'\|^2 \tag{\pref{lem: sufficient decrease}}\\
        &\leq \order\left( \frac{D^2}{\eta^2}\sum_{t=2}^T (\Theta_{t-1} - \Theta_{t}+\Theta_t - \Theta_{t+1})\right) \tag{\pref{eq: ogda recurr}} \\
        &= 
        \order\left(\frac{D^2}{\eta^2}\right) \tag{telescoping}. 
    \end{align*}
    Finally, by Cauchy-Schwarz inequality, we get $\frac{1}{T}\sum_{t=1}^T \alpha_f(\z_t)\leq \frac{1}{T}\sqrt{T\sum_{t=1}^T \alpha_f(\z_t)^2}=\order\left(\frac{D}{\eta \sqrt{T}}\right)$. 
\end{proof}

This theorem indicates that $\alpha_f(\z_t)$ is converging to zero.
A rate of \modify{$\alpha_f(\z_t) = \order(\frac{D}{\eta\sqrt{t}})$} would be compatible with the theorem, but is not directly implied by it.
%
In a recent work, \citet{Golowich2020Last} consider the unconstrained setting and show that the extra-gradient  algorithm obtains the rate \modify{$\alpha_f(\z_t) = \order(\frac{D}{\eta\sqrt{t}})$}, under an extra assumption that the Hessian of $f$ is also Lipschitz \modify{(since \citet{Golowich2020Last} study the unconstrained setting, their duality gap $\alpha_f$ is defined only with respect to the best responses that lie within a ball of radius $D$ centered around the equilibrium)}.
Note that the extra-gradient algorithm requires more cooperation between the two players  compared to \alg and is less suitable for a repeated game setting.


\modify{
\section{The Equivalence between \rsi and Metric Subregularity}\label{app:ms}

In this section, we formally that show our \rsi condition with $\beta = 0$ is equivalent to metric subregularity. Before introducing the main theorem, we introduce several definitions. We let $\calZ^*\subseteq \calZ\subseteq \mathbb{R}^K$ ($\calZ^*$ and $\calZ$ follow the same definitions as in our main text). First, we define the element-to-set distance function $d$: 
\begin{definition}
    The element-to-set distance function $d$: $\mathbb{R}^K \times 2^{\mathbb{R}^K}\rightarrow \mathbb{R}$ is defined as $d(\z,\calS)=\inf_{\z'\in \calS}\|\z-\z'\|$. 
\end{definition}

The definition of metric subregularity involves a set-valued operator $\calT: \calZ \rightarrow 2^{\mathbb{R}^K}$, which maps an element of $\calZ$ to a set in $\mathbb{R}^K$. 

\begin{definition}
A set-valued operator $\calT$ is called \emph{metric subregular} at $(\bar{\z}, \bv)$ for $\bv\in\calT(\bar{\z})$ if there exists $\kappa>0$ and a neighborhood $\Omega$ of $\bar{\z}$ such that
\begin{equation*}
    d(\bv, \calT(\z))\ge\kappa d(\z,\calT^{-1}(\bv))
\end{equation*}
for all $\z\in\Omega$, where $\x \in \calT^{-1}(\bv) \Leftrightarrow \bv\in \calT(\x)$.  If $\Omega=\calZ$, we call $\calT$ \emph{globally metric subregular}. 

\end{definition}

The following definition of \emph{normal cone} is also required in the analysis:  
\begin{definition}
The normal cone of $\calZ$ at point $\z$ is 
    $\calN(\z)=\{\g \;|\; \g^\top(\z'-\z)\le 0,~\forall \z'\in \calZ\}$ (we omit its dependence on $\calZ$ for simplicity). Equivalently, $\calN(\z)$ is the polar cone of the convex set $\calZ-\z$ (a property that we will use in the proof).
\end{definition}

Now we are ready to show that our \rsi condition with $\beta=0$ is equivalent to metric subregularity of the operator $\calN+F$, defined via: $(\calN+F)(\z) = \{\g+F(\z) \;|\; \g \in \calN(\z) \}$.


     
     
    \begin{theorem}
Let $z^*\in\calZ^*$. Then the following two statements are equivalent: 
\begin{itemize}
    \item $(\calN+F)$ is globally metric subregular at $(\z^*, \zero)$ with $\kappa>0$; 
    \item For all $z\in\calZ\backslash \calZ^*$, $\max_{\z'\in\calZ} F(\z)^\top\frac{(\z-\z')}{\|\z-\z'\|}\ge\kappa d(\z,\calZ^*)$.
\end{itemize}
\end{theorem}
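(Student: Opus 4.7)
My plan is to reduce both statements to a single pointwise identity. Concretely, I will establish that $(\calN+F)^{-1}(\zero)=\calZ^*$, and that for every $\z\in\calZ$,
\[
d(\zero,(\calN+F)(\z))\;=\;\sup_{\z'\in\calZ,\,\z'\neq\z}\left[\frac{F(\z)^\top(\z-\z')}{\|\z-\z'\|}\right]_+.
\]
Given these two identities, the global subregularity inequality $d(\zero,(\calN+F)(\z))\ge\kappa\, d(\z,(\calN+F)^{-1}(\zero))$ reduces term-by-term to the \rsi inequality with $\beta=0$. For $\z\in\calZ^*$ both sides vanish; for $\z\notin\calZ^*$ the supremum above is automatically strictly positive, so the $[\,\cdot\,]_+$ can be dropped and the two inequalities match exactly.

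\textbf{Identifying the zero-set.} By definition, $\zero\in(\calN+F)(\z)$ iff $-F(\z)\in\calN(\z)$, i.e., $F(\z)^\top(\z'-\z)\ge 0$ for every $\z'\in\calZ$. Under the convex-concave hypothesis on $f$, this variational inequality characterizes precisely the Nash equilibria $\calZ^*$ (the same fact already used in the paper to derive $F(\z_t)^\top(\z_t-\z)\ge 0$ for $\z\in\calZ^*$). Hence $(\calN+F)^{-1}(\zero)=\calZ^*$.

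\textbf{Computing the distance via Moreau.} Since $(\calN+F)(\z)=\calN(\z)+F(\z)$ is a translate of a cone, $d(\zero,(\calN+F)(\z))=d(-F(\z),\calN(\z))$. Convexity of $\calZ$ makes $\calN(\z)$ a closed convex cone whose polar is the tangent cone $T_\calZ(\z)=\overline{\mathrm{cone}(\calZ-\z)}$, so Moreau's decomposition gives $d(-F(\z),\calN(\z))=\|P_{T_\calZ(\z)}(-F(\z))\|$. Next I invoke the standard identity $\|P_K \bd\|=\sup\{\bw^\top \bd:\bw\in K,\|\bw\|\le 1\}$ valid for any closed convex cone $K$ (which follows by decomposing $\bd=P_K\bd+P_{K^\circ}\bd$, using $\bw^\top P_{K^\circ}\bd\le 0$ on $K$, and exhibiting $\bw=P_K\bd/\|P_K\bd\|$ as a maximizer). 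With $K=T_\calZ(\z)$ and $\bd=-F(\z)$, density of $\{\lambda(\z'-\z):\lambda\ge 0,\;\z'\in\calZ\}$ in $T_\calZ(\z)$ together with an elementary one-dimensional optimization over $\lambda$ (maximizing $\lambda\cdot F(\z)^\top(\z-\z')$ subject to $\lambda\|\z'-\z\|\le 1$) yields exactly the claimed identity.

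The main obstacle, such as it is, lies in the density/closure step: one has to verify carefully that supremizing over the dense subset of concrete ray directions $\lambda(\z'-\z)$ intersected with the unit ball yields the same value as supremizing over all of $T_\calZ(\z)\cap B$, which a short continuity-and-rescaling argument handles. Everything else is bookkeeping from convex analysis, and once the pointwise identity is in place the two statements of the theorem become literally the same inequality at every $\z\in\calZ$, so the equivalence follows with the same constant $\kappa$ in both directions.
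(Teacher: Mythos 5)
Your proposal is correct and follows essentially the same route as the paper: both reduce metric subregularity to a pointwise identity for $d(\zero,(\calN+F)(\z))$, identify $(\calN+F)^{-1}(\zero)=\calZ^*$ via the variational-inequality characterization of equilibria, and pass through Moreau's decomposition $d(-F(\z),\calN(\z))=\|\Pi_{\calN^\circ(\z)}(-F(\z))\|$ before optimizing over rays of the form $\lambda(\z'-\z)$. Your version is marginally tidier in two respects: you invoke the abstract identity $\|P_K\bd\|=\sup\{\bw^\top\bd:\bw\in K,\|\bw\|\le1\}$ instead of the paper's explicit two-variable $\argmin$ computation, and you correctly identify $\calN^\circ(\z)$ as the \emph{closed} conic hull $T_\calZ(\z)$ and flag the resulting density step, whereas the paper writes $\calN^\circ(\z)=\textbf{ConicHull}(\calZ-\z)$ without closure and silently supremizes over the dense unclosed set; neither difference changes the substance of the argument.
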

          \begin{proof}
Let $\calT=\calN+F$. Notice that 
\begin{align*}
    \z\in\calZ^* ~\Leftrightarrow~ F(\-z)^\top(\z'-\z)\geq 0 ~\Leftrightarrow~ -F(\z)\in \calN(\z) ~\Leftrightarrow~ \zero\in (\calN+F)(\z). 
\end{align*}
Therefore, $\zero\in\calT(\z^*)$ indeed holds, and we have $\calT^{-1}(\zero)=\calZ^*$. 
%
This means that the first statement in the theorem is equivalent to
\begin{align*}
              d(\zero, \calT(\z))&\ge \kappa d(\z,\calT^{-1}(\zero))
              ~\Leftrightarrow~ d(\zero, \calN(\z)+F(\z))\ge \kappa d(\z,\calZ^*). 
          \end{align*}
          This inequality holds trivially when $\z\in \calZ^*$.
          Thus, to complete the proof, it suffices to prove that  $d(\zero,\calN(\z)+F(\z))=\max_{\z'\in\calZ} F(\z)^\top\frac{(\z-\z')}{\|\z-\z'\|}$ for $\z\in \calZ\backslash\calZ^*$.
          To do so, note that
          \begin{align*}
              &d(\zero, \calN(\z)+F(\z))\\
              &=d(-F(\z),\calN(\z))\\
              &=\|-F(\z)-\Pi_{\calN(\z)}(-F(\z))\|\\
              &=\|\Pi_{\calN^\circ(\z)}(-F(\z))\| 
          \end{align*}
          where $\calN^\circ(\z)=\{\g \;|\; \g^\top \n\le 0,~\forall \n \in \calN(z)\}$ is the polar cone of $\calN(\z)$ and the last step is by Moreau's theorem.
		  Now consider the projection of $-F(\z)$ onto the polar cone $\calN^{\circ}(\z)$:
			\begin{align*}
				\Pi_{\calN^\circ(\z)}(-F(\z)) &= \argmin_{\y\in \calN^\circ(\z)}\|-F(\z)-\y\|^2 \\
												  &= \argmin_{\y\in \calN^\circ(\z)}\left\{2F(\z)^\top \y+\|\y\|^2\right\} \\
												  &= \argmin_{\y\in \calN^\circ(\z)}\left\{2F(\z)^\top \frac{\y}{\|\y\|}\cdot \|\y\|+\|\y\|^2\right\} \\
												  &= \argmin_{\lambda\geq 0, ~\bar{\z}\in\calN^{\circ}(\z),~ \|\bar{\z}\|=1}\left\{2\lambda F(\z)^\top \bar{\z}+\lambda^2\right\}, 
			\end{align*}
		    where the last equality is because $\calN^{\circ}(\z)$ is a cone. Next, we find the $\bar{\z}^*$ and $\lambda^*$ that realize the last $\argmin$ operator: notice that the objective is increasing in $F(\z)^\top \bar{\z}$, so $\bar{\z}^*=\argmin_{\bar{\z}\in\calN^\circ(\z):~\|\bar{\z}\|=1} \left\{F(\z)^\top \bar{\z}\right\}$, and thus $\lambda^*=-F(\z)^\top\bar{\z}^*$ when $F(\z)^\top\bar{\z}^*\leq 0$ and $\lambda^*=0$ otherwise.
			Therefore, 
			\begin{align*}
			    \|\Pi_{\calN^\circ(\z)}(-F(\z))\| = \lambda^*  = \max\left\{0, \max_{\bar{\z}\in\calN^\circ(\z),\|\bar{\z}\|=1} -F(\z)^\top\bar{\z} \right\}. 
			\end{align*}
			
			Note that $\calN(z)$ is the polar cone of the conic hull of $\calZ-\z$. Therefore, $\calN^{\circ}(z)=(\textbf{ConicHull}(\calZ-\z))^{\circ\circ} = \textbf{ConicHull}(\calZ-\z)$ and
			\begin{align*}
			    \max\left\{0, \max_{\bar{\z}\in\calN^\circ(\z),\|\bar{\z}\|=1} -F(\z)^\top\bar{\z} \right\} &= \max\left\{0,\max_{\z'\in \cal Z} F(\z)^\top \frac{(\z-\z')}{\|\z'-\z\|}\right\}.
			\end{align*}
			
			Finally, note that when $\z\in \calZ\backslash\calZ^*$, we have $\max_{\z'\in \calZ} F(\z)^\top(\z-\z') > 0$. Combining all the facts above, we have shown $d( \zero, \calN(\z)+F(\z)) = \max_{\z'\in \calZ}F(\z)^\top\frac{(\z-\z')}{\|\z-\z'\|}$.
          \end{proof}
}


\section{Proof of \pref{thm: bilinear-polytope}}
\label{app:bilinear polytope}

\begin{proof}[Proof of \pref{thm: bilinear-polytope}]
     Let $\rho = \min_{\x\in\calX}\max_{\y\in\calY}\x^\top \G\y = \max_{\y\in\calY}\min_{\x\in\calX}\x^\top \G\y$ be the game value. In this proof, we prove that there exists some $c>0$ such that 
     \begin{align}
          \max_{\y'\in\calY} \x^\top \G\y' - \rho \geq c\|\x-\Pi_{\calX^*}(\x)\|    \label{eq: polytope condition}
     \end{align}
     for all $\x\in\calX$. Similarly we prove  \begin{align*}
          \max_{\x'\in\calX} \rho - \x^{\prime\top} \G\y \geq c\|\y-\Pi_{\calY^*}(\y)\|
     \end{align*}
     for all $\y\in\calY$. 
     Assume that the diameter of the polytope is $D<\infty$. 
    \modify{Then combining the two proves 
	\begin{align*}
	\max_{\z'}\frac{F(\z)^\top (\z-\z')}{\|\z-\z'\|} 
	&\geq \frac{1}{D}\max_{\z'}F(\z)^\top (\z-\z') 
	= \frac{1}{D}\left(\max_{\y'}\x^\top \G\y' - \min_{\x'} \x^{\prime \top }\G\y\right) \\
	&\geq \frac{c}{D}\left(\|\y-\Pi_{\calY^*}(\y)\| + \|\x-\Pi_{\calX^*}(\x)\|\right) 
	\geq \frac{c}{D}\|\z-\Pi_{\calZ^*}(\z)\|,
	\end{align*}
	 meaning that \typeone holds with $\beta=0$. 
     }
     We break the proof into following several claims. 
     \paragraph{Claim 1. } If $\calX, \calY$ are polytopes, then $\calX^*$ and $\calY^*$ are also polytopes. 
	\begin{proof}[Proof of Claim 1]
		Note that $\calX^* = \left\{\x\in \calX: \max_{\y\in\calY} \x^\top \G\y\leq \rho \right\}$. Since $\calY$ is a polytope, the maximum is attained at vertices of $\calY$. Therefore, $\calX^*$ can be equivalently written as $\left\{\x\in \calX: \max_{\y\in \calV(\calY)} \x^\top \G\y\leq \rho \right\}$, where $\calV(\calY)$ is the set of vertices of $\calY$. Since the constraints of $\calX^*$ are all linear constraints, $\calX^*$ is a polytope. 
	\end{proof} 
	
	With Claim 1, we without loss of generality write $\calX^*$ as 
	\begin{align*}
	     \calX^* = \left\{\x\in\mathbb{R}^M: \quad \ai^\top \x \leq b_i, \text{\ \ for\ } i=1,\ldots, L, \quad \quad \ci^\top \x \leq d_i, \text{\ \ for\ } i=1,\ldots, K \right\}, 
	\end{align*}
	where the $\ai^\top \x \leq b_i$ constraints come from $\x\in\calX$ and the $\ci^\top \x \leq d_i$ constraints come from $\max_{\y\in \calV(\calY)} \x^\top \G\y\leq \rho$.  
    Below, we refer to $\ai^\top \x \leq b_i$ as the \emph{feasibility constraints}, and $\ci^\top \x \leq d_i$ as the \emph{optimality constraints}.
    In fact, one can identify the $i$-th optimality constraint as $\ci = \G\y^{(i)}$ and $d_i=\rho$, where $\y^{(i)}$ is the $i$-th vertex of $\calY$.
    This is based on our construction of $\calX^*$ in the proof of Claim 1.
    Therefore, $K=|\calV(\calY)|$.  
	
Since \pref{eq: polytope condition} clearly holds for $\x\in\calX^*$, below, we focus on an $\x\in\calX\backslash \calX^*$, and let $\x^*\triangleq \Pi_{\calX^*}(\x)$. 

We say a constraint is \emph{tight} at $\x^*$ if $\ai^\top \x^*=b_i$ or $\ci^\top \x^*=d_i$. Below we assume that there are $\ell$ tight feasibility constraints at and $k$ tight optimality constraints at $\x^*$. Without loss of generality, we assume these tight constraints correspond to $i=1,\ldots, \ell$ and $i=1,\ldots, k$ respectively. That is, 
\begin{align*}
     \ai^\top \x^* &= b_i,  \qquad \text{for\ } i=1,\ldots, \ell, \\
     \ci^\top \x^* &= d_i, \qquad \text{for\ } i=1,\ldots, k. 
\end{align*}

\paragraph{Claim 2.} $\x$ violates at least one of the tight optimality constraint at $\x^*$. 
\begin{proof}[Proof of Claim 2]
     We prove this by contradiction. Suppose that $\x$ satisfies all $k$ tight optimality constraints at $\x^*$. Then $\x$ must violates some of the remaining $K-k$ optimality constraints (otherwise $\x\in\calX^*$). Assume that it violates constraints $K-n+1, \ldots, K$ for some $1\leq n\leq K-k$. Thus, we have the following: 
     \begin{align*}
         \ci^\top \x\leq d_i \quad &\text{for\ } i =1,\ldots K-n;\\
         \ci^\top \x> d_i \quad &\text{for\ }i=K-n+1, \ldots, K.  
     \end{align*}
     Recall that $\ci^\top \x^* \leq d_i$ for $i=1, \ldots, K-n$ and $\ci^\top \x^* < d_i$ for all $i=K-n+1,\ldots, K$. Thus, there exists some $\x'$ that lies strictly between $\x$ and $\x^*$ that makes all constraints hold (notice that $\x$ and $\x^*$ both satisfy all feasibility constraints), which contradicts with $\Pi_{\calX^*}(\x)=\x^*$. 
\end{proof}

\paragraph{Claim 3.} $\max_{\y'\in\calY} \left( \x^\top \G\y'-\rho\right) \geq \max_{i\in\{1,\ldots, k\}} \ci^\top (\x-\x^*)$. 
\begin{proof}[Proof of Claim 3]
     Recall that we identify $\ci$ with $\G\y^{(i)}$ and $d_i=\rho$. Therefore, 
     \begin{align*}
         \max_{\y'\in\calY} \left(\x^\top \G\y'-\rho\right) = \max_{i\in\{1,\ldots, |\calV(\calY)|\}} \left(\ci^\top \x - d_i\right) \geq \max_{i\in\{1,\ldots, k\}} \left(\ci^\top \x - d_i\right) = \max_{i\in\{1,\ldots, k\}} \ci^\top (\x-\x^*), 
     \end{align*}
     where the last equality is because $\ci^\top \x^*=d_i$ for $i=1,\ldots, k$. 
\end{proof}

Recall from linear programming literature \cite{davis2016a, davis2016b} that the \emph{normal cone} of $\calX^*$ at $\x^*$ is expressed as follows: 
	\begin{align*}
	\mathcal{N}_{\x^*}=\Big\{\x'-\x^*: \ \  \x'\in\mathbb{R}^M, \quad \Pi_{\calX^*}(\x')=\x^* \Big\} = \left\{ \sum_{i=1}^\ell p_i\ai + \sum_{i=1}^{k} q_i\ci: \quad p_i\geq 0, \quad q_i\geq 0 \right\}. 
	\end{align*}
	The normal cone of $\calX^*$ at $\x^*$ consists of all outgoing normal vectors of $\calX^*$ originated from $\x^*$. Clearly, $\x-\x^*$ belongs to $\calN_{\x^*}$. However, besides the fact that $\x-\x^*$ is a normal vector of $\calX^*$, we also have the additional constraints that $\x\in\calX$. We claim that in our case, $\x-\x^*$ lies in the following smaller cone (which is a subset of $\calN_{\x^*}$): 
	
	\paragraph{Claim 4. } $\x-\x^*$ belongs to
	\begin{align*}
	\mathcal{M}_{\x^*}= \Bigg\{ \sum_{i=1}^\ell p_i\ai + \sum_{i=1}^{k} q_i\ci: \; p_i\geq 0, \; q_i\geq 0,  \; \aj^\top \left(\sum_{i=1}^\ell p_i\ai + \sum_{i=1}^{k} q_i\ci\right)\leq 0, \ \ \forall j=1,\ldots, \ell \Bigg\}. 
	\end{align*}
		\begin{proof}[Proof of Claim 4. ]
		As argued above, $\x-\x^*\in\mathcal{N}_{\x^*}$, and thus $\x-\x^*$ can be expressed as $\sum_{i=1}^\ell p_i\ai + \sum_{i=1}^{k} q_i\ci$ with $p_i\geq 0, q_i\geq 0$. To prove that $\x-\x^*\in\mathcal{M}_{\x^*}$, we only need to prove that it satisfies the additional constraints, that is,
		\begin{align*}
		\ai^\top (\x-\x^*)\leq 0, \ \ \forall i=1,\ldots, \ell.  
		\end{align*}
		This is shown by noticing that for all $i=1,\ldots, \ell$, 
		\begin{align*}
		 \ai^\top (\x-\x^*) 
		 &= \left(\ai^\top \x^* - b_i\right) +  \ai^\top (\x-\x^*) \tag{the $i$-th constraint is tight at $\x^*$}\\
		 &= \ai^\top \left( \x^* + \x-\x^* \right) - b_i \\
		 &= \ai^\top \x - b_i \leq 0. \tag{$\x\in\mathcal{X}$}
		\end{align*}
	\end{proof}
	
\paragraph{Claim 5.} $\x-\x^*$ can be written as $\sum_{i=1}^{\ell} p_i\ai + \sum_{i=1}^k q_i \ci$ with  $0\leq p_i, q_i\leq C'\|\x-\x^*\|$ for all $i$ and some problem-dependent constant $C'<\infty$. 

\begin{proof}[Proof of Claim 5] 
    Notice that $\frac{\x-\x^*}{\|\x-\x^*\|}\in\calM_{\x^*}$ (because $\mathbf{0}\neq \x-\x^*\in\calM_{\x^*}$ and $\calM_{\x^*}$ is a cone). Furthermore, $\frac{\x-\x^*}{\|\x-\x^*\|}\in \{\bv\in\mathbb{R}^M: \|\bv\|_\infty\leq 1\}$. Therefore, $\frac{\x-\x^*}{\|\x-\x^*\|}\in \calM_{\x^*} \cap \{\bv\in\mathbb{R}^M: \|\bv\|_\infty\leq 1\}$, which is a bounded subset of the cone $\calM_{\x^*}$. 
    
    Below we argue that there exists a large enough $C'>0$ such that
    \begin{align*}
      & \Bigg\{ \sum_{i=1}^\ell p_i\ai + \sum_{i=1}^{k} q_i\ci: \; 0\leq p_i, q_i\leq C', \;\forall i \Bigg\} \quad  \supseteq  \quad \calM_{\x^*} \cap \{\bv\in\mathbb{R}^M: \|\bv\|_\infty\leq 1\} \quad \triangleq \quad \calP. 
    \end{align*}
    
    To see this, first note that $\calP$ is a polytope. For every vertex $\hatv$ of $\calP$, the smallest $C'$ such that $\hatv$ belongs to the left-hand side is the solution of the following linear programming: 
    \begin{align*}
         &\min_{p_i, q_i, C_{\hatv}'} C_{\hatv}' \quad 
         \textit{s.t.\ \ } \hatv = \sum_{i=1}^\ell p_i\ai +  \sum_{i=1}^k q_i \ci,  \quad 0\leq p_i, q_i\leq C_{\hatv}'.  
    \end{align*}
    Since $\widehat{v}\in\calM_{\x^*}$, this linear programming is always feasible and admits a finite solution $C_{\hatv}'<\infty$. Now let $C'=\max_{\hatv\in \calV(\calP)} C_{\hatv}'$, where $\calV(\calP)$ is the set of all vertices of $\calP$. Then since any $v\in\calP$ can be expressed as a convex combination of points in $\calV(\calP)$, $\bv$ can be also be expressed as $\sum_{i=1}^\ell  p_i\ai + \sum_{i=1}^k q_i\ci$ with $0\leq p_i, q_i\leq C'$.

    To sum up, $\frac{\x-\x^*}{\|\x-\x^*\|}$ can be represented as $\sum_{i=1}^\ell  p_i\ai + \sum_{i=1}^k q_i\ci$ with $0\leq p_i, q_i\leq C'$. This further implies that $\x-\x^*$ can be represented as $\sum_{i=1}^\ell  p_i\ai + \sum_{i=1}^k q_i\ci$ with $0\leq p_i, q_i\leq C'\|\x-\x^*\|$. 
    Notice that $C'$ only depends on the set of tight constraints at $\x^*$. 
\end{proof}

Finally, we are ready to combine all previous claims and prove the desired inequality. 

     Define $A_i\triangleq \ai^\top (\x-\x^*)$ and $C_i\triangleq \ci^\top (\x-\x^*)$. 
     By Claim 5, we can write $\x-\x^*$ as $\sum_{i=1}^\ell  p_i\ai + \sum_{i=1}^k q_i\ci$ with $0\leq p_i, q_i\leq C'\|\x-\x^*\|$, and thus, 
     \begin{align*}
          \sum_{i=1}^{\ell} p_iA_i + \sum_{i=1}^k q_i C_i =  \left(\sum_{i=1}^\ell  p_i\ai  + \sum_{i=1}^k q_i\ci\right)^\top  \left(\x-\x^*\right) = \|\x-\x^*\|^2.  
     \end{align*}
On the other hand, since $\x-\x^*\in \calM_{\x^*}$ by Claim 4, we have
\begin{align*}
     \sum_{i=1}^\ell p_iA_i = \sum_{i=1}^\ell p_i \ai^\top (\x-\x^*) \leq 0
\end{align*}
and 
\begin{align*}
     \sum_{i=1}^k q_i C_i \leq \left(\max_{i\in\{1,\ldots, k\}}C_i\right) \sum_{i=1}^k q_i \leq \left(\max_{i\in\{1,\ldots, k\}}C_i\right) kC'\|\x-\x^*\|, 
\end{align*}
where in the first inequality we use the fact $p_i\geq 0$, and in the second inequality we use the fact $\max_{i\in\{1,\ldots, k\}}C_i>0$ (by Claim 2) and $0 \leq q_i\leq C'\|\x-\x^*\|$. 

Combining the three inequalities above, we get 
\begin{align*}
     \max_{i\in\{1,\ldots, k\}}C_i \geq \frac{1}{kC'}\|\x-\x^*\|. 
\end{align*}
Then by Claim 3, 
     \begin{align*}
          \max_{\y'\in\calY} \left(\x^\top \G\y' - \rho\right) \geq \max_{i\in\{1,\ldots, k\}} C_i \geq \frac{1}{kC'}\|\x-\x^*\|. 
     \end{align*}
Note that $k$ and $C'$ only depend on the set of tight constraints at the projection point $\x^*$, and there are only finitely many different sets of tight constraints. 
Therefore, we conclude that there exists a constant $c > 0$ such that $\max_{\y'\in\calY} \left(\x^\top \G\y' - \rho\right) \geq c\|\x-\x^*\|$ holds for all $\x$ and $\x^*$, which completes the proof.
\end{proof}
\section{Proof of \pref{thm: strongly convex} and \pref{thm: beta ge 0}}
\label{app:exmaples condition 2}
\begin{proof}[Proof of \pref{thm: strongly convex}]
	Suppose that $f$ is $\gamma$-strongly-convex in $\x$ and $\gamma$-strongly-concave in $\y$, and let $(\x^*, \y^*)\in\calZ^*$. Then for any $(\x, \y)$ we have 
	\begin{align*}
	&f(\x, \y) - f(\x^*, \y) \leq \nabla_x f(\x, \y)^\top (\x-\x^*) - \frac{\gamma}{2}\|\x-\x^*\|^2, \\
	&f(\x, \y^*) - f(\x, \y) \leq \nabla_y f(\x, \y)^\top (\y^*-\y) - \frac{\gamma}{2}\|\y-\y^*\|^2.
	\end{align*}
	Summing up the two inequalities, and noticing that $f(\x, \y^*)-f(\x^*, \y)\geq 0$ for any $(\x^*, \y^*)\in \calZ^*$, we get 
	\begin{align*}
	F(\z)^\top (\z-\z^*) \geq \frac{\gamma}{2}\|\z-\z^*\|^2,
	\end{align*}
	\modify{
	and therefore, for $\z\notin \calZ^*$, 
	\begin{align*}
	     \frac{F(\z)^\top (\z-\z^*)}{\|\z-\z^*\|} \geq \frac{\gamma}{2}\|\z-\z^*\|,
	\end{align*}
which implies \typetwo with $\beta = 0$ and $C = \gamma/2$.	}
\end{proof}

\begin{proof}[Proof of \pref{thm: beta ge 0}]
First, we show that $f$ has a unique Nash Equilibrium $\z^*=\left(\x^*, \y^*\right)=\left((0,1),(0,1)\right)$. As $f$ is a strictly monotone decreasing function with respect to $y_{1}$, we must have $y^*_{1}=0$ and $y^*_{2}=1$. In addition, if $\x=(0,1)$, $\max_{\y\in \calY}f(\x,\y)=-\min_{\y\in  \calY}y_{1}^{2n}=0$. If $\x\ne (0,1)$, then by choosing $\y^*=(0,1)$, $f(\x,\y^*)=x_{1}^{2n}>0$. Therefore, we have $\x^*=(0,1)$, which proves that the unique Nash Equilibrium is $\x^*=(0,1),\y^*=(0,1)$.

	Second, we show that $f$ satisfies \typetwo with $\beta=2n-2$. In fact, for any $\z=(\x,\y)\neq \z^*$, we have
	\begin{align*}
		F(\z)^\top(\z-\z^*) &= \begin{bmatrix}
		 2nx_{1}^{2n-1}-y_{1}\\ 
		 0\\
		 2ny_{1}^{2n-1}+x_{1}\\
		 0
		\end{bmatrix}^\top \begin{bmatrix}
		x_{1}\\ 
		x_{2}-1\\
		y_{1}\\
		y_{2}-1
		\end{bmatrix} \\
		& = 2n\left(x_{1}^{2n}+y_{1}^{2n}\right) \\
		& \ge 4n\cdot\left( \frac{x_{1}^2+y_{1}^2}{2}\right)^n \tag{Jensen's inequality} \\
		&=\frac{n}{2^{n-2}}\left(x_{1}^2+y_{1}^2\right)^n. \\
	\end{align*}
	Note that $\|\z-\z^*\|=\sqrt{x_{1}^2+(1-x_{2})^2+y_{1}^2+(1-y_{2})^2}=\sqrt{2x_{1}^2+2y_{1}^2}$. 
	\modify{
	Therefore, we have $\frac{F(\z)^\top(\z-\z^*)}{\|\z-\z^*\|}\ge\frac{n}{2^{2n-2}}\|\z-\z^*\|^{2n-1}$. 
	 This shows that $f$ satisfies \typetwo with $\beta = 2n-2$ and $C=\frac{n}{2^{2n-2}}$.
}
\end{proof}
\modify{
\section{Proof of \pref{thm: point convergence}}
\label{app:point convergence}
\begin{proof}[Proof of \pref{thm: point convergence}]
	As argued in \pref{sec:convergence}, with $\Theta_t = \|\zp_t - \Pi_{\calZ^*}(\zp_t)\|^2 + \tfrac{1}{16}\|\zp_t-\z_{t-1}\|^2$, $\zeta_t=\|\zp_{t+1}-\z_t\|^2 + \|\z_t-\zp_t\|^2$, we have (see \pref{eq: ogda recurr})
	\begin{align}
	     \Theta_{t+1} \leq \Theta_t - \tfrac{15}{16}\zeta_{t}.    \label{eq: tmp condition to holds}
	\end{align}
	Below, we relate $\zeta_t$ to $\Theta_{t+1}$ using the \rsi condition,
	and then apply \pref{lem: recursion lemma p > 1} to show
	\begin{align}
	\Theta_{t} \leq 
	\begin{cases}
	2\dist(\zp_1, \calZ^*)(1+C_5)^{-t} &\text{if $\beta = 0$,} \\
	\left[\left(1+4\left(\frac{4}{\beta}\right)^{\frac{1}{\beta}}\right)\dist(\zp_1, \calZ^*) + 2\left(\frac{2}{C_5\beta}\right)^{\frac{1}{\beta}}\right]t^{-\frac{1}{\beta}} &\text{if $\beta > 0$,}
	\end{cases} \label{eq:original_goal}
	\end{align}
	 where $C_5= \min\left\{\frac{16 \eta^2 C^2}{81}, \frac{1}{2}\right\}$ as defined in the statement of the theorem.
	This is enough to prove the theorem since 
		\begin{align*}
		\dist(\z_t, \calZ^*)  &\le \|\z_t-\Pi_{\calZ^*}(\zp_{t+1})\|^2 \\
		&\le 2\|\zp_{t+1}-\Pi_{\calZ^*}(\zp_{t+1})\|^2+2\|\zp_{t+1}-\z_t\|^2 \\
		&\le 32\Theta_{t+1} \leq 32\Theta_t.
	\end{align*}
	Next, we prove \pref{eq:original_goal}. We first show a simple fact by \pref{eq: tmp condition to holds}: 
	\begin{align}
	\|\zp_{t+1}-\z_t\|^2 \leq \zeta_t \leq \frac{16}{15}\Theta_t \leq \cdots \leq \frac{16}{15} \Theta_1.  \label{eq: simple simple}
	\end{align}
	Notice that
	\begin{align*}
	\zeta_t &\geq \frac{1}{2}\|\zp_{t+1}-\z_t\|^2 + \frac{1}{2} \left( \|\zp_{t+1}-\z_t\|^2 + \|\z_t-\zp_t\|^2 \right)\\
	&\geq \frac{1}{2}\|\zp_{t+1}-\z_t\|^2 + \frac{16\eta ^2}{81} \sup_{\z'\in\calZ}\frac{\left[F(\zp_{t+1})^\top (\zp_{t+1}-\z')\right]_+^2}{\|\zp_{t+1}-\z'\|^2}  \tag{ \pref{lem: sufficient decrease}} \\
	&\geq   \frac{1}{2}\|\zp_{t+1}-\z_t\|^{2} + \frac{16\eta ^2C^2}{81} \|\zp_{t+1}-\Pi_{\calZ^*}(\zp_{t+1})\|^{2(\beta+1)}    \tag{\typeone condition} \\
	&\geq  \min\left\{\frac{16 \eta^2 C^2}{81}, \frac{1}{2}\left(\frac{15}{16\Theta_1}\right)^\beta\right\}\left(\|\zp_{t+1}-\z_t\|^{2(\beta+1)} + \|\zp_{t+1}-\Pi_{\calZ^*}(\zp_{t+1})\|^{2(\beta+1)}\right)   
	\tag{by \pref{eq: simple simple}} \\
	&\geq  \min\left\{\frac{16 \eta^2 C^2}{2^\beta\cdot 81}, \frac{1}{2}\left(\frac{15}{32\Theta_1}\right)^\beta\right\} \left(\|\zp_{t+1}-\z_t\|^{2} + \|\zp_{t+1}-\Pi_{\calZ^*}(\zp_{t+1})\|^{2}\right)^{\beta+1}   \tag{by H\"{o}lder's inequality: $ 
          (a^{\beta+1} + b^{\beta+1})(1+1)^{\beta} \geq (a+b)^{\beta+1} 
     $}\\
	&\geq \min\left\{\frac{C_5}{2^\beta}, \frac{1}{2}\left(\frac{1}{4\Theta_1}\right)^\beta\right\} \Theta_{t+1}^{\beta+1}   \tag{recall that $C_5 = \min\{\frac{16\eta^2 C^2}{81}, \frac{1}{2}\}$} \\
	&= C'\Theta_{t+1}^{\beta+1}.   \tag{define $C'= \min\left\{\frac{C_5}{2^\beta}, \frac{1}{2}\left(\frac{1}{4\Theta_1}\right)^\beta\right\}$}
	\end{align*}
	Combining this with \pref{eq: tmp condition to holds}, we get 
	\begin{align}
	\Theta_{t+1}  
	&\leq \Theta_t -  C'\Theta_{t+1}^{\beta+1}
	\label{eq: key recursion}
	\end{align}
	When $\beta=0$, \pref{eq: key recursion} implies $\Theta_{t+1}\leq (1+C_5)^{-1}\Theta_t$, which immediately implies $\Theta_t \leq (1+C_5)^{-t+1}\Theta_1\leq 2\Theta_1(1+C_5)^{-t}$. 
	 When $\beta>0$, \pref{eq: key recursion} is of the form specified in \pref{lem: recursion lemma p > 1} with $p=\beta$ and $q = C'$. Note that the second required condition is satisfied: $C'(\beta+1) \Theta_1^\beta \leq \frac{\beta+1}{2\cdot 4^\beta}\leq 1$. Therefore, by the conclusion of \pref{lem: recursion lemma p > 1}, 
	 \begin{align*}
	     \Theta_t 
	     &\leq \max\left\{\Theta_1, \left(\frac{2}{C'\beta}\right)^{\frac{1}{\beta}}\right\}t^{-\frac{1}{\beta}} = \max\left\{\Theta_1, \left(\frac{2\cdot 2^\beta}{C_5\beta}\right)^{\frac{1}{\beta}}, 4\Theta_1\left(\frac{4}{\beta}\right)^{\frac{1}{\beta}}\right\}t^{-\frac{1}{\beta}} \\
	     &\leq  \left[\left(1+4\left(\frac{4}{\beta}\right)^{\frac{1}{\beta}}\right)\Theta_1 + 2\left(\frac{2}{C_5\beta}\right)^{\frac{1}{\beta}}\right]t^{-\frac{1}{\beta}}. 
	 \end{align*}
	 \pref{eq:original_goal} is then proven by noticing that $\Theta_1=\dist(\zp_1, \calZ^*)$.
%
	\end{proof}

}

\section{Proof of \pref{thm: bilinear-general}}\label{app:proofthm9}

\begin{proof}[Proof of \pref{thm: bilinear-general}]
     Consider the following $2\times 2$ bilinear game with curved feasible sets: 
     \begin{align*}
          f(\x,\y) &= \x^\top \G \y = 
          \begin{bmatrix}
               {x_1}  & {x_2}
          \end{bmatrix}
          \begin{bmatrix}
               0 & -1 \\
               1  & 0
          \end{bmatrix}
          \begin{bmatrix}
               {y_1} \\ {y_2}
          \end{bmatrix},   \\
          \calX &= \left\{ \x: \quad 0\leq {x_1}\leq \frac{1}{2}, \quad 0\leq {x_2}\leq \frac{1}{4}, \quad  {x_2} \geq {x_1}^2 \right\}, \\
          \calY &= \left\{ \y: \quad  0\leq {y_1} \leq \frac{1}{2}, \quad 0\leq {y_2}\leq \frac{1}{4}, \quad  {y_2} \geq {y_1}^2 \right\}.
     \end{align*}
     Below, we use \textbf{Claim 1} - \textbf{Claim 5} to argue that if the two players start from $\x_0=\y_0=\xp_0=\yp_0=(\frac{1}{2}, \frac{1}{4})$, and use any constant learning rate $\eta\leq \frac{1}{64}$, then the convergence is sublinear in the sense that $\|\z_{t}-\z^*\| \geq \Omega(1/t)$. \modify{Then, in \textbf{Claim 6}, we show that in this example, \rsi holds with $\beta=3$. }

     \paragraph{Claim 1. }\label{claim:one} The unique equilibrium is $\x^*=\mathbf{0}$, $\y^*=\mathbf{0}$. 

          When $\x=\mathbf{0}$, clearly $\max_{\y'\in\calY} f(\x,\y')=0$. When $\x\neq \mathbf{0}$, we prove $\max_{\y'\in\calY} f(\x,\y')>0$ below. If ${x_1}\neq 0$, we let ${y'_1}=\frac{1}{2}{x_1}$ and ${y'_2}=\frac{1}{4}{x_1}^2$ (which satisfies $\y'\in\calY$), and thus
          \begin{align*}
             f(\x,\y') &= {x_2}{y'_1} - {x_1}{y'_2} 
             = {x_1}^2 \cdot \frac{1}{2}{x_1} - {x_1}\cdot \frac{1}{4}{x_1}^2 
             = \frac{1}{4}{x_1}^3 > 0. 
          \end{align*}
          If ${x_1}=0$ but ${x_2}\neq 0$, we let ${y'_1}=\frac{1}{2}, {y'_2}=\frac{1}{4}$, and thus
          \begin{align*}
             f(\x,\y') &= {x_2}{y'_1} - {x_1}{y'_2} = \frac{1}{2} {x_2} > 0. 
          \end{align*}
          Thus, $\max_{\y'\in\calY} f(\x,\y')>0$ if $\x\neq \mathbf{0}$, and $\x^*=\mathbf{0}$ is the unique optimal solution for $\x$. By the symmetry between $\x$ and $\y$ (because $\G=-\G^\top$), we can also prove that the unique optimal solution for $\y$ is $\y^*=\mathbf{0}$. 

     
     \paragraph{Claim 2.}\label{claim:two} Suppose that $\x_0=\y_0=\xp_0=\yp_0=(\frac{1}{2}, \frac{1}{4})$. Then, at any step $t\in [T]$, we have $\x_t = \y_t$ and $\xp_t=\yp_t$, and all $\x_t, \y_t, \xp_t, \yp_t$ belong to $\{\bu\in \mathbb{R}^2:  u_{2}=u_{1}^2\}$.  \\

     	We prove this by induction. The base case trivially holds. Suppose that for step $t$, we have $\x_t = \y_t$, $\xp_t=\yp_t$, and $\x_t, \y_t, \xp_t, \yp_t\in\{\bu\in \mathbb{R}^2:  u_{2}=u_{1}^2\}$. Then consider step $t+1$. According to the dynamic of \alg, we have
     	\begin{align}
     		&\xp_{t+1} = \Pi_{\calX}\left\{\xp_t-\eta
     		\begin{bmatrix}
     		-y_{t,2} \\ y_{t,1}
     		\end{bmatrix}\right\} = \Pi_{\calX}\left\{\begin{bmatrix}
     		\xpp_{t,1}+\eta y_{t,2} \\ \xpp_{t,2}-\eta y_{t,1}
     		\end{bmatrix}
     		\right\} \label{eq:xp_t+1},\\
     		&\x_{t+1} = \Pi_{\calX}\left\{\xp_{t+1}-\eta
     		\begin{bmatrix}
     		-y_{t,2} \\ y_{t,1}
     		\end{bmatrix}\right\} = \Pi_{\calX}\left\{\begin{bmatrix}
     		\xpp_{t+1,1}+\eta y_{t,2}, \\ \xpp_{t+1,2}-\eta y_{t,1}
     		\end{bmatrix}
     		\right\} ,\nonumber\\
     		&\yp_{t+1} = \Pi_{\calY}\left\{\yp_t+\eta
     		\begin{bmatrix}
     		x_{t,2} \\ -x_{t,1}
     		\end{bmatrix}\right\} = \Pi_{\calY}\left\{\begin{bmatrix}
     		\ypp_{t,1}+\eta x_{t,2} \\ \ypp_{t,2}-\eta x_{t,1}
     		\end{bmatrix}
     		\right\} ,\nonumber\\
     		&\y_{t+1} = \Pi_{\calY}\left\{\yp_{t+1}+\eta
     		\begin{bmatrix}
     		x_{t,2} \\ -x_{t,1}
     		\end{bmatrix}\right\} = \Pi_{\calY}\left\{\begin{bmatrix}
     		\ypp_{t+1,1}+\eta x_{t,2} \\ \ypp_{t+1,2}-\eta x_{t,1}
     		\end{bmatrix}
     		\right\}.\nonumber
     	\end{align}
     	
     	According to induction hypothesis, we have $\xp_{t+1}=\yp_{t+1}$, which further leads to $\x_{t+1}=\y_{t+1}$.
     	
     	Now we prove that for any $\begin{bmatrix}
     		{x_1} \\ {x_2}
     	\end{bmatrix}$ such that ${x_1}\ge 0$, ${x_2}\le \frac{1}{4}$ and ${x_2}<{x_1}^2$, $\begin{bmatrix}
     	\overline{x}_{1} \\ \overline{x}_{2}
     	\end{bmatrix}=\Pi_{\calX}\left\{\begin{bmatrix}
     	{x_1} \\ {x_2}
     	\end{bmatrix}\right\}$ satisfies that $\overline{x}_{1}^2=\overline{x}_{2}$. Otherwise, suppose that $\overline{x}_{1}^2<\overline{x}_{2}$. Then according to the intermediate value theorem, there exists $\begin{bmatrix}
     	\widetilde{x}_{1} \\ \widetilde{x}_{2}
     	\end{bmatrix}$ that lies in the line segment of $\begin{bmatrix}
     	{x}_{1} \\ {x}_{2}
     	\end{bmatrix}$ and $\begin{bmatrix}
     	\overline{x}_{1} \\ \overline{x}_{2}
     	\end{bmatrix}$ such that $\widetilde{x}_{1}^2=\widetilde{x}_{2}$. Moreover, as ${x_1}\ge 0$, $\widetilde{x}_{1}\ge 0$, ${x_2}\le \frac{1}{4}$, $\widetilde{x}_{2}\le\frac{1}{4}$, we know that $\begin{bmatrix}
     	\widetilde{x}_{1} \\ \widetilde{x}_{2}
     	\end{bmatrix}\in \calX$. Therefore, we have $\|\widetilde{\x}-\x\|< \|\overline{\x}-\x\|$, which leads to contradiction.  
     	
     	Now consider $\xp_{t+1}$. According to induction hypothesis, we have $(\xpp_{t,1}+\eta y_{t,2})^2\ge\xpp_{t,1}^2=\xpp_{t,2}\ge \xpp_{t,2}-\eta y_{t,1}$. If equalities hold, trivially we have $\xpp_{t+1,1}^2=\xpp_{t,1}^2 = \xpp_{t,2} = \xpp_{t+1,2}$ according to \pref{eq:xp_t+1}. Otherwise, as $\xpp_{t,1}+\eta y_{t,2}\ge 0$, $\xpp_{t,2}-\eta y_{t,1}\le \frac{1}{4}$, according to the analysis above, we also have $\xpp_{t+1,1}^2=\xpp_{t+1,2}$. Applying similar analysis to $\yp_{t+1}$, $\x_{t+1}$ and $\y_{t+1}$ finishes the induction proof.

 	 \paragraph{Claim 3.}\label{claim:three} 
 	 With $\eta\le \frac{1}{64}$, the following holds for all $t\geq 1$,
 	 \begin{align}
 	 		& x_{t,1} \in \left[\frac{1}{2}\xpp_{t,1}, 2\xpp_{t,1}\right] \label{eq:induct-stab},\\
 	 		& \xpp_{t,1}\in\left[\xpp_{t-1,1}-4\eta \xpp_{t-1,1}^2, \xpp_{t-1,1}+4\eta \xpp_{t-1,1}^2\right] \label{eq:induct-decr}.
 	 \end{align}

 	 	We prove the claim by induction on $t$.  	 		
  		The case $t=1$ trivially holds. Suppose that \pref{eq:induct-stab} and \pref{eq:induct-decr} hold at step $t$. Now consider step $t+1$. 
  		
  		\paragraph{Induction to get \pref{eq:induct-decr}. }
  		According to Claim 2, we have 
  		\begin{align*}
  			\xp_{t+1} = \Pi_{\calX}\left\{\xp_t-\eta
  			\begin{bmatrix}
  			-y_{t,2} \\ y_{t,1}
  			\end{bmatrix}\right\} = \Pi_{\calX}\left\{\begin{bmatrix}
  			\xpp_{t,1}+\eta x_{t,1}^2 \\ \xpp_{t,1}^2-\eta x_{t,1}
  			\end{bmatrix}
  			\right\},
   		\end{align*}
   		and $\xp_{t+1} = (u, u^2)$ for some $u \in [0, 1/2]$.
  		Using the definition of the projection function, we have
  		\begin{align*}
  			\xpp_{t+1,1}= \argmin_{u\in[0,\frac{1}{2}]}\left\{\left(\xpp_{t,1}+\eta x_{t,1}^2-u\right)^2+\left(\xpp_{t,1}^2-\eta x_{t,1}-u^2\right)^2\right\}\triangleq \argmin_{u\in[0,\frac{1}{2}]}g(u).
  		\end{align*}
  		
  		Now we show that $\argmin_{u\in [0, \frac{1}{2}]}g(u)=\argmin_{u\in \mathbb{R}}g(u)$. Note that
  		\begin{align}\label{eq: nabla g}
                 \nabla g(u) = 2(u-\xpp_{t,1}-\eta x_{t,1}^2)+4u\left(u^2+\eta x_{t,1}-\xpp_{t,1}^2\right),
  		\end{align}
  		
  		Therefore, when $u>\frac{1}{2}$, using $x_{t,1}\le \frac{1}{2}$, we have  		
		\begin{align}\label{eq: ge1/2}
  			\nabla g(u) > -2\eta x_{t,1}^2+2\eta x_{t,1} \ge 0,
  		\end{align}
  		which means $g(u)>g(\frac{1}{2})$. 
  		On the other hand, when $u<0$, using $\xpp_{t,1}\le \frac{1}{2}$, we have 
  		\begin{align}\label{eq: le0}
  			\nabla g(u) < 2u-4u\xpp_{t,1}^2 \le u < 0,
  		\end{align}
  		which means $g(u)>g(0)$.
		Combining \pref{eq: ge1/2} and \pref{eq: le0}, we know that $\argmin_{u\in [0, \frac{1}{2}]}g(u)=\argmin_{u\in \mathbb{R}}g(u)$.
		Therefore, $\xpp_{t+1,1}$ is the unconstrained minimizer of convex function $g(u)$, which means $\nabla g(\xpp_{t+1,1})=0$.
  		Below we use contradiction to prove that $\xpp_{t+1,1}\geq \xpp_{t,1}-4\eta \xpp_{t,1}^2$. 
  		If $\xpp_{t+1,1}<\xpp_{t,1}-4\eta \xpp_{t,1}^2$, we use \pref{eq: nabla g} and get  
		\begin{align*}
		\nabla g(\xpp_{t+1,1})&=2(\xpp_{t+1,1}-\xpp_{t,1}-\eta x_{t,1}^2)+4\xpp_{t+1,1}\left(\xpp_{t+1,1}^2+\eta x_{t,1}-\xpp_{t,1}^2\right)\\
		&< 2(-4\eta\xpp_{t,1}^2-\eta x_{t,1}^2)+4\xpp_{t+1,1}\left(\eta x_{t,1}-8\eta\xpp_{t,1}^3+16\eta^2\xpp_{t,1}^4\right) \\
		&\le -\frac{17}{2}\eta\xpp_{t,1}^2 +4\xpp_{t+1,1}\left(2\eta\xpp_{t,1}-8\eta\xpp_{t,1}^3+16\eta^2\xpp_{t,1}^4\right) \tag{\pref{eq:induct-stab}} \\
		&\le  -\frac{17}{2}\eta\xpp_{t,1}^2 +4\xpp_{t+1,1}\left(2\eta\xpp_{t,1}+16\eta^2\xpp_{t,1}^4\right)\\
		&\le -\frac{17}{2}\eta\xpp_{t,1}^2 +4(\xpp_{t,1}-4\eta\xpp_{t,1}^2)\left(2\eta\xpp_{t,1}+16\eta^2\xpp_{t,1}^4\right) \tag{$\xpp_{t+1,1}<\xpp_{t,1}-4\eta \xpp_{t,1}^2$}\\
		&= -\frac{1}{2}\eta\xpp_{t,1}^2+64\eta^2\xpp_{t,1}^5-32\eta^2\xpp_{t,1}^3-256\eta^3\xpp_{t,1}^6 \\
		&\le -\frac{1}{2}\eta\xpp_{t,1}^2-16\eta^2\xpp_{t,1}^3-256\eta^3\xpp_{t,1}^6 \tag{$\xpp_{t,1}\le \frac{1}{2}$}\\
		&\le 0,
		\end{align*} 
  		which leads to contradiction. 
  		Similarly, if $\xpp_{t+1,1}>\xpp_{t,1}+4\eta \xpp_{t,1}^2$, we have
  		\begin{align*}
  		\nabla g(\xpp_{t+1,1})&=2(\xpp_{t+1,1}-\xpp_{t,1}-\eta x_{t,1}^2)+4\xpp_{t+1,1}\left(\xpp_{t+1,1}^2+\eta x_{t,1}-\xpp_{t,1}^2\right)\\
  		&> 2(4\eta\xpp_{t,1}^2-\eta x_{t,1}^2)+4\xpp_{t+1,1}\left(\eta x_{t,1}+8\eta\xpp_{t,1}^3+16\eta^2\xpp_{t,1}^4\right) \\
  		&\ge 0. \tag{\pref{eq:induct-stab}} 
  		\end{align*} 
  	 The calculations above conclude that 
  	 \begin{align}\label{eq: induction-stab1}
  	 	\xpp_{t+1,1}\in\left[\xpp_{t,1}-4\eta \xpp_{t,1}^2, \xpp_{t,1}+4\eta \xpp_{t,1}^2\right].
  	 \end{align}
  	 
  	 \paragraph{Induction to get \pref{eq:induct-stab}. }
  	 Similarly, we have 
  	 \begin{align*}
  	 x_{t+1,1}= \argmin_{u\in[0,\frac{1}{2}]}\left\{\left(\xpp_{t+1,1}+\eta x_{t,1}^2-u\right)^2+\left(\xpp_{t+1,1}^2-\eta x_{t,1}-u^2\right)^2\right\}\triangleq \argmin_{u\in[0,\frac{1}{2}]}h(u),
  	 \end{align*} 
       \begin{align*}
            \nabla h(u) = 2(u-\xpp_{t+1,1} - \eta x_{t,1}^2) + 4u(u^2 + \eta x_{t,1} - \xpp_{t+1,1}^2), 
       \end{align*}
  	 and $\nabla h(x_{t+1,1})=0$. If $x_{t+1,1}<\frac{1}{2}\xpp_{t+1,1}$, we have
  	 \begin{align*}
  	 \nabla h(x_{t+1,1}) &=2(x_{t+1,1}-\xpp_{t+1,1}-\eta x_{t,1}^2)+4x_{t+1,1}\left(x_{t+1,1}^2+\eta x_{t,1}-\xpp_{t+1,1}^2\right) \\
  	 &< -\xpp_{t+1,1}-2\eta x_{t,1}^2-3x_{t+1,1}\xpp_{t+1,1}^2+2\eta\xpp_{t+1,1}x_{t,1} \tag{$x_{t+1,1}<\frac{1}{2}\xpp_{t+1,1}$} \\
  	 &\le 0. \tag{$\eta\le\frac{1}{64},x_{t,1}\le\frac{1}{2}$}
  	 \end{align*}
  	 If $x_{t+1,1} > 2\xpp_{t+1,1}$, we also have
  	 \begin{align*}
  	 \nabla h(x_{t+1,1}) &=2(x_{t+1,1}-\xpp_{t+1,1}-\eta x_{t,1}^2)+4x_{t+1,1}\left(x_{t+1,1}^2+\eta x_{t,1}-\xpp_{t+1,1}^2\right) \\
  	 &> 2\xpp_{t+1,1}-2\eta x_{t,1}^2+24\xpp_{t+1,1}^3+8\eta\xpp_{t+1,1}x_{t,1} \tag{$x_{t+1,1}>2\xpp_{t+1,1}$} \\
  	 &\ge 2\xpp_{t+1,1}-2\eta x_{t,1}^2+24\xpp_{t+1,1}^3+8\eta (\xpp_{t,1}-4\eta\xpp_{t,1}^2)x_{t,1} \tag{\pref{eq: induction-stab1}}\\
  	 &\ge 2\xpp_{t+1,1}-2\eta x_{t,1}^2+24\xpp_{t+1,1}^3+8\eta (\tfrac{1}{2}x_{t,1}-4\eta\xpp_{t,1}^2)x_{t,1} \tag{\pref{eq:induct-stab}} \\
  	 &=2\xpp_{t+1,1}+2\eta x_{t,1}^2+24\xpp_{t+1,1}^3-32\eta^2\xpp_{t,1}^2x_{t,1} \\
  	 &\ge 2\xpp_{t+1,1}+\frac{1}{4}\eta \xpp_{t,1}^2+24\xpp_{t+1,1}^3-32\eta^2\xpp_{t,1}^2x_{t,1} \tag{\pref{eq:induct-stab}}\\
  	 & \ge 0. \tag{$\eta\le \frac{1}{64}, x_{t,1}\le \frac{1}{2}$}
  	 \end{align*}
  	 Both lead to contradiction. 
  	 Therefore, we conclude that $x_{t+1}\in [\frac{1}{2}\xpp_{t+1,1},2\xpp_{t+1,1}]$, which finishes the induction proof.
 
 	 \paragraph{Claim 4. }  $x_{t,1}\ge \xpp_{t,1}-4\eta \xpp_{t,1}^2$, for all $t\ge 1$.
 	 
 	 The case $t=1$ holds trivially. For $t\ge 2$, we prove this by contradiction. Using the definition of the projection function, we have:
 	 
 	 \begin{align*}
 	 	x_{t+1,1} = \argmin_{u\in \left[0,\frac{1}{2}\right]}\left\{\left(\xpp_{t+1,1}+\eta x_{t,1}^2-u\right)^2+\left(\xpp_{t+1,1}^2-\eta x_{t,1}-u^2\right)^2\right\}\triangleq \argmin_{u\in \left[0, \frac{1}{2}\right]}h(u).
 	 \end{align*}
 	 
 	 Similar to the analysis in \textbf{Claim 3}, we have $\argmin_{u\in \left[0, \frac{1}{2}\right]}h(u) = \argmin_{u\in \mathbb{R}}h(u)$, which means that $\nabla h(x_{t+1,1})=0$. Note that $\eta\le \frac{1}{64}$ and $0\le\xpp_{t,1}\le \frac{1}{2}$, according to  \pref{eq:induct-stab} and \pref{eq:induct-decr}, we have
 	 \begin{align*}
 	 	\xpp_{t+1,1}\in\left[\xpp_{t,1}-4\eta \xpp_{t,1}^2, \xpp_{t,1}+4\eta \xpp_{t,1}^2\right]\subseteq \left[\frac{31}{32}\xpp_{t,1}, \frac{33}{32}\xpp_{t,1}\right],
 	 \end{align*}
  which means that
  	 \begin{align}
 	 	x_{t,1}\in \left[\frac{1}{2}\xpp_{t,1}, 2\xpp_{t,1}\right]\subseteq \left[\frac{16}{33}\xpp_{t+1,1}, \frac{64}{31}\xpp_{t+1,1}\right].\label{eq:x stab}
 	 \end{align}
 	 
 	 If $x_{t+1,1} < \xpp_{t+1,1} - 4\eta\xpp_{t+1,1}^2$, we show that $\nabla h(x_{t+1,1}) < 0$. In fact,
 	 
 	 \begin{align*}
 	 	\nabla h(x_{t+1,1})&=2(x_{t+1,1}-\xpp_{t+1,1}-\eta x_{t,1}^2)+4x_{t+1,1}\left(x_{t+1,1}^2+\eta x_{t,1}-\xpp_{t+1,1}^2\right)\\
 	 	&< 2(-4\eta\xpp_{t+1,1}^2-\eta x_{t,1}^2)+4x_{t+1,1}\left(\eta x_{t,1}-8\eta\xpp_{t+1,1}^3+16\eta^2\xpp_{t+1,1}^4\right) \\
 	 	&\le -\frac{42}{5}\eta\xpp_{t+1,1}^2 +4x_{t+1,1}\left(\frac{64}{31}\eta\xpp_{t+1,1}-8\eta\xpp_{t+1,1}^3+16\eta^2\xpp_{t+1,1}^4\right) \tag{\pref{eq:x stab}} \\
 	 	&\le -\frac{42}{5}\eta\xpp_{t+1,1}^2 +4x_{t+1,1}\left(\frac{64}{31}\eta\xpp_{t+1,1}+16\eta^2\xpp_{t+1,1}^4\right)\\
 	 	&< -\frac{42}{5}\eta\xpp_{t+1,1}^2 +4(\xpp_{t+1,1}-4\eta\xpp_{t+1,1}^2)\left(\frac{64}{31}\eta\xpp_{t+1,1}+16\eta^2\xpp_{t+1,1}^4\right)\\
 	 	&\le 64\eta^2\xpp_{t+1,1}^5-32\eta^2\xpp_{t+1,1}^3-256\eta^3\xpp_{t+1,1}^6 \\
 	 	&\le -16\eta^2\xpp_{t,1}^3-256\eta^3\xpp_{t,1}^6 \tag{$\xpp_{t,1}\le \frac{1}{2}$}\\
 	 	&\le 0,
 	 \end{align*} 
 	 which leads to contradiction. Therefore, we show that $x_{t,1}\ge \xpp_{t,1}-4\eta \xpp_{t,1}^2$ for all $t\ge 1$.
 	 
  	 \paragraph{Claim 5. }  If $\eta\le\frac{1}{64}$, we have $\|\z_t-\z^*\|\geq \Omega(1/t)$.

  	 Now we are ready to prove $\|\z_t-\z^*\|\ge\Omega(1/t)$. First we show $\xp_{t,1}\ge \frac{1}{2t}$ for all $t\ge 1$ by induction. The case $t=1$ trivially holds. Suppose that it holds at step $t$. Considering step $t+1$, we have
  	 \begin{align*}
  	 	\xpp_{t+1,1}&\ge \xpp_{t,1}-4\eta\xpp_{t,1}^2 \tag{\textbf{Claim 3}} \\
  	 	&\ge \xpp_{t,1}-\frac{1}{16}\xpp_{t,1}^2 \tag{$\eta\le\frac{1}{64}$} \\
  	 	& \ge \frac{1}{2t}-\frac{1}{64t^2} \tag{$\frac{1}{2t}\le \xpp_{t,1}\le \frac{1}{2}$, and $x-\frac{1}{16}x^2$ is increasing when $x\le 8$}\\
  	 	&\ge \frac{1}{2(t+1)}. \tag{$t\ge 1$}
  	 \end{align*}
   	 Therefore, $\xpp_{t,1}\ge \frac{1}{2t}$, $\forall t\ge 1$. This, by \textbf{Claim 4} and the analysis above, shows that
   	 \begin{align*}
   	 	x_{t,1}&\ge \xpp_{t,1}-4\eta\xpp_{t,1}^2 \ge \frac{1}{2(t+1)}.
   	 \end{align*}
  	 Note that according to \textbf{Claim 1}, $\x^*=\mathbf{0}$. Therefore, we have $\|\z_t-\z^*\|\ge x_{t,1}\ge\frac{1}{2(t+1)}$, which finishes the proof.   
  	 
  	 \modify{
  	 \paragraph{Claim 6.} In this example, \rsi holds with $\beta=3$. This can be seen by the following:

      \begin{align*}
       \max_{\z'\in\calZ} \frac{F(\z)^\top (\z-\z')}{\|\z-\z'\|} 
       &\geq  \max_{\z'\in\calZ} F(\z)^\top (\z-\z') \\
       &= \max_{\x'\in\calX, \y'\in\calY}  \left\{\x^\top \G\y' - \x'^\top \G\y\right\} \\
       &= \max_{\x'\in\calX, \y'\in\calY}  \left\{-x_1 y_2' + x_2 y_1' + x_1' y_2 - x_2' y_1\right\} \\
       &\geq -x_1x_2^2 + x_2^2 + y_2^2 - y_2^2 y_1   \tag{picking $y_1'=x_2, y_2'=x_2^2, x_1'=y_2, x_2'=y_2^2$ }   \\
       &\geq \frac{1}{2}x_2^2 + \frac{1}{2} y_2^2   \tag{$x_1, y_1\leq \frac{1}{2}$}  \\
       &\geq \frac{1}{4}\left(x_1^4 + x_2^4 + y_1^4 + y_2^4\right)   \tag{$x_2\geq \{x_1^2, x_2^2\}, y_2\geq \{y_1^2, y_2^2\}$} \\
       &\geq \frac{1}{16}\left(x_1^2 + x_2^2 + y_1^2 + y_2^2\right)^2 \tag{Cauchy-Schwarz} \\
       &= \frac{1}{16}\|\z-\z^*\|^4,     \tag{$\z^*=(0,0,0,0)$}
  	 \end{align*}
  	 which implies $\beta=3$. 
  	 }
\end{proof}

\end{document}